\title{Federated Asymptotics: a model to compare federated learning algorithms}
\newcommand{\footremember}[2]{%
   \footnote{#2}
    \newcounter{#1}
    \setcounter{#1}{\value{footnote}}%
}
\newcommand{\footrecall}[1]{%
    \footnotemark[\value{#1}]%
}
\author{%
  Gary Cheng\footremember{authorship}{Equal contribution, author order random }\footremember{eedept}{Electrical Engineering Department, Stanford University} \\
  \texttt{chenggar@stanford.edu} 
  \and
  Karan Chadha\footrecall{authorship} \footrecall{eedept}\\
  \texttt{knchadha@stanford.edu} 
    \and
   John Duchi\footrecall{eedept}
   \footnote{Statistics Department, Stanford University}\\
  \texttt{jduchi@stanford.edu} 
}
\crefname{assumption}{Assumption}{Assumption}
\renewcommand{\citet}{\cite}
\begin{document}
\maketitle

\addtocontents{toc}{\protect\setcounter{tocdepth}{0}}
\begin{abstract}
  We propose an asymptotic framework to analyze the performance of
  (personalized) federated learning algorithms. In this new framework, we
  formulate federated learning as a multi-criterion objective, where the
  goal is to minimize each client's loss using information from all of the
  clients. We analyze a linear regression model where, for a given client,
  we may theoretically compare the performance of various algorithms in the
  high-dimensional asymptotic limit. This asymptotic multi-criterion
  approach naturally models the high-dimensional, many-device nature of
  federated learning. These tools make fairly precise predictions about the
  benefits of personalization and information sharing in federated
  scenarios—at least in our (stylized) model—including that Federated
  Averaging with simple client fine-tuning achieves the same asymptotic risk
  as the more intricate meta-learning and proximal-regularized approaches
  and outperforming Federated Averaging without personalization. We evaluate
  these predictions on federated versions of the EMNIST, CIFAR-100,
  Shakespeare, and Stack Overflow datasets, where the experiments
  corroborate the theoretical predictions, suggesting such frameworks may
  provide a useful guide to practical algorithmic development.
\end{abstract}



 \section{Introduction}\label{sec:intro}

In Federated learning (FL), a collection of client machines, or devices,
collect data and coordinate with a central server to fit machine-learned
models, where communication and availability constraints add
challenges~\cite{Kairouz2019advances}.
A natural formulation here, assuming a supervised learning setting, is to
assume that among $m$ distinct clients, each client $i$ has
distribution $P_i$, draws observations $Z \sim P_i$, and wishes to fit a
model---which we represent abstractly as a parameter vector $\theta \in
\Theta$---to minimize a risk, or expected loss, $\risk_i(\theta) \defeq
\E_{P_i}[\ell(\theta; \statrv)]$, where the loss $\ell(\theta; z)$ measures the
performance of $\theta$ on example $\statval$. Thus, at the most abstract level,
the federated learning problem is to solve the multi-criterion problem
\begin{equation}
  \label{eqn:multicriterion}
  \minimize_{\theta_1, \ldots, \theta_m}
  \left(\risk_1(\param_1), \ldots, \risk_m(\param_m)\right).
\end{equation}

At this level, problem~\eqref{eqn:multicriterion} is both trivial---one
should simply minimize each risk $\risk_i$ individually---and impossible, as
no individual machine has enough data locally to effectively minimize
$\risk_i$. Consequently, methods in federated learning typically take
various departures from the multicriterion
objective~\eqref{eqn:multicriterion} to provide more tractable
problems.  Many approaches build off of the empirical risk minimization
principal~\cite{Vapnik92, Vapnik95, HastieTiFr09}, where we seek a single
parameter $\theta$ that does well across all machines and data,
minimizing a (weighted) average loss
\begin{equation}
  \label{eqn:avg-loss}
  \sum_{i = 1}^m p_i
  \risk_i(\theta)
\end{equation}
over $\theta \in \Theta$, where $p \in \R^m_+$ satisfies $p^T \ones =
1$. This ``zero personalization'' approach has the advantage that data is
(relatively) plentiful, and has led to a substantial literature. Much of
this work focuses on developing efficient methods that limit possibly
expensive and unreliable communication between centralized servers and
distributed
devices~\citep{HaddadpourMa19,ReddiChZaGaRuKoKuMc21,McMahanMoRaHaAr17,
  KarimireddyKaMoReStSu20, MohriSiSu19, LiSaZaSaTaSm20}.  Given (i) the
challenges of engineering such large-scale systems, (ii) the success of
large-scale machine learning models without personalization, and (iii) the
plausibility that individual devices have similar distributions $P_i$, the
zero personalization approach is natural.
However, as distributions across individual devices are typically non-identical, it is of interest to develop methods more
closely targeting problem~\eqref{eqn:multicriterion}.

One natural assumption to make is that the optimal client parameters are
``close'' to one another and thus must be ``close'' to the minimizer of the
zero-personalization objective~\eqref{eqn:avg-loss}.  Approaches leveraging
this assumption \citep{DinhTrNg20,SmithChiSaTa17,
  WangMaKiEiBeRa19,FallahMoOz20} regularize client parameters towards the
global parameter. While these methods are intuitive, most focus on showing
convergence rates to local minima. While convergence is important, these
analyses do little to characterize the performance of solutions
attained---to what the methods actually converge.  These issues motivate
our paper.\\


\noindent\textbf{Contributions:}

\begin{enumerate}[leftmargin=*]
\item \textbf{New model} (Sec.~\ref{sec:model}): We propose and analyze a
  (stylized) high-dimensional linear regression model, where, for a given
  client, we can characterize the performance of
  collaborate-then-personalize algorithms in the high-dimensional asymptotic
  limit.
\item \textbf{Precise risk characterization}: We use
  our stylized model to evaluate the asymptotic test loss of several
  procedures.
  These include simple fine-tuned variants of Federated
  Averaging~\cite{McMahanMoRaHaAr17},
  where one learns an average global model~\eqref{eqn:avg-loss}
  and updates once using local data;
  meta-learning variants of federated learning; and
  proximal-regularized personalization in federated learning.
\item \textbf{Precise predictions and experiments}: Our theory makes several
  percise predictions, including that fairly naive methods---fine-tuning
  variants---should perform as well as more sophisticated methods, as well
  as conditions under which federated methods improve upon
  zero-personalization~\eqref{eqn:avg-loss} or zero-collaboration
  methods. To test these predicted behaviors, we perform several experiments
  on federated versions of the EMNIST, CIFAR-100, Shakespeare, and Stack
  Overflow datasets. Perhaps surprisingly, the experiments are
  quite consistent with the behavior the theory predicts.
\end{enumerate}

Our choice to study linear models in the high-dimensional asymptotic setting
(when dimension and samples scale proportionally) takes as motivation a
growing phenomenological approach to research in machine learning, where one
develops simple models that predict (perhaps unexpected) complex behavior in
model-fitting. Such an approach has advantages: by developing simpler
models, one can isolate causative aspects of behavior and make precise
predictions of performance, leveraging these to provide insights in more
complex models.  Consider, for instance, \citet{HastieMoRoTi19}, who show
that the ``double-descent'' phenomenon, where (neural-network) models show
decreasing test loss as model size grows, exists even in linear
regression. In a robust (adversarial) learning setting,
\citet{CarmonRaScLiDu19} use a two-class Gaussian linear discriminant model
to suggest ways that self-supervised training can circumvent hardness
results, using the predictions (on the simplified model) to inform a full
deep training pipeline substantially outperforming (then) state-of-the-art.
\citet{Feldman19} develops clustering models where memorization of data is
necessary for good learning performance, suggesting new models for
understanding generalization. We view our contributions in this intellectual
tradition: using a stylized high-dimensional asymptotics to develop
statistical insights underpinning Federated Learning (FL).  This allows
direct comparison between different FL methods---not between upper bounds,
but actual losses---and serving as a basic framework to motivate new
methodologies in Federated Learning.

\paragraph{Related Work.} 
The tried-and-true method to adapt to new data distributions is fine-tuning
\cite{HowardRu18}. In Federated Learning (FL), this broadly corresponds to
fine-tuning a global model (e.g., from FedAvg) on a user's local data
\cite{WangMaKiEiBeRa19,YuBaSh20,LiHuBeSm21}. While fine-tuning's simplicity
and practical efficacy recommend it, we know of little theoretical analysis.

A major direction in FL is to design personalization-incentivizing
objectives.  \citet{SmithChiSaTa17}, for example, build out of the
literature on multitask and transfer learning~\cite{Caruana97,PanYa09} to
formulate a multi-task learning objective, treating each machine as an
individual task; this and other papers~\cite{FallahMoOz20, MansourMoRoSu20,
  DinhTrNg20} show rates of convergence for optimization methods on these
surrogates.  These methods use the heuristic that personalized, local models
should lie ``close'' to one another, and the authors provide empirical
evidence for their improved performance.  Yet it is not always clear what
conditions are necessary (or sufficient) for these specialized
personalization methods to outperform naive zero collaboration---fully local
training on available data on each individual device---and zero
personalization (averaged) methods.
In a related vein, meta-learning
approaches \cite{FinnAbLe17,FallahMoOz20,JiangKoRuKa19} seek a
global model that can quickly ``adapt'' to local distributions $P_i$, typically
by using a few gradient steps. This generally yields a sophisticated non-convex objective,
making it hard to give even heuristic guarantees, leading authors instead to emphasize worst-case convergence rates
of iterative algorithms to (near) stationary points. 

Other methods of personalization have also been proposed. In contrast to
using a global model to help train the local model, \citet{MansourMoRoSu20}
and~\citet{ZecMoSuLeGi20} use a mixture of global and local models to
incorporate personalized features. \citet{ChenZhLoSu21}, like we
do, propose
evaluating federated algorithms via the
formulation~\eqref{eqn:multicriterion}; they give minimax bounds to
distinguish situations in which zero collaboration and zero personalization
(averaged) methods~\eqref{eqn:avg-loss} are, respectively, worst-case
optimal.




\section{The Linear Model}
\label{sec:model}


We consider a high-dimensional asymptotic model,
where clients solve statistically related linear regression problems,
and each client $i \in [m]$ has a local dataset size $n_i$ smaller than (but
comparable to) the dimension $d$ of the problem. This choice models the
empirical fact that the data on a single client is typically small relative
to model dimension (e.g., even training the last layer of a deep neural network). 

More concretely, each client $i \in [m]$ will use an overparameterized linear regression
problem to recover an unknown parameter $\param_i\opt \in \R^d$.
Client $i$ has $n_i$ i.i.d.\ observations $(\bx_{i, k},
y_{i,k}) \in \R^d \times \R$,
\begin{align*}
  y_{i, k} = \bx_{i, k}^T \param_i\opt + \xi_{i, k},
  ~~~~
  \bx_{i, k} \simiid P^i_{\bx}
  ~~ \mbox{and} ~~
  \xi_{i,k} \simiid P^i_{\xi}.
\end{align*}
We make the routine assumption that the features are  centered,
with $\E[\bx_{i,k}] = 0$ and $\cov(\bx_{i,k}) = \Sigma_i$.
We also assume that the
noise is centered with finite variance, i.e., $\E[\xi_{i,k}] = 0$ and
$\var(\xi_{i,k}) = \sigma_i^2$. For convenience, we let $X_i \in
\R^{n_i \times d}$ and $\by_i \in \R^{n_i}$ denote client $i$'s data, and
$X \coloneqq [X_1^T,\dots,X_m^T]^T$. We also let $N \coloneqq \summach
n_j$.

A prior $P^i_\param$ on the parameter $\param_i\opt$
relates tasks on each client, where conditional on
$\param_0\opt$,
$\param_i\opt$ is supported on $r_i\sphere^{d-1} + \param_0\opt$---the sphere
of radius $r_i$ (bounded by a constant for all $i\in[m]$) centered at $\param_0\opt$---with $\E[\param_i\opt] = \param_0\opt$. 
The variation between clients is captured by differences in $r_i$ (label shift) and $\Sigma_i$ (covariate shift), while the similarity is captured by the shared center $\param_0\opt$.
Intuitively, data from client $j$ is useful to client $i$
as it provides information on the possible location of
$\param_0\opt$. Lastly, we assume that the distributions of
$\bx,\param\opt$, and $\xi$ are independent of each other and across
clients.

Every client $i$ seeks to minimize its local population loss---the
squared prediction error of a new sample $\bx_{i, 0}$ independent of the
training set---conditioned on $X$. The
sample loss is $\risksamp(\param;(\bx,y)) = (\bx^T\param - y)^2 -
\sigma_i^2$, giving per client test loss
\begin{align*}
    \risk_i(\hat{\param}_i \mid X) &:= \E[(\bx_{i, 0}^T \hat{\param}_i - \bx_{i, 0}^T \param_i\opt)^2\mid X] \\
    & = \E\Big[\normbig{\hat{\param}_i - \param_i\opt}_{\Sigma_i}^2 \mid X\Big]
\end{align*}
where the expectation is taken over $(\bx_{i, 0},\param_i\opt, \xi_i) \sim
P_{\bx}^i \times P_{\param\opt}^i \times P_{\xi}^i$ and $\norm{x}_\Sigma^2 = x^T\Sigma x$.  It is essential here
that we focus on per client performance: the ultimate goal is to improve
performance on any given client, as per \cref{eqn:multicriterion}.  For analysis
purposes, we often consider the equivalent bias-variance decomposition
\begin{align}\label{eqn:risk}
    \risk_i(\hat{\param}_i | X) = \underbrace{\norm{\E[\hat{\param}_i|X] - \param\opt}_{\Sigma_i}^2}_{B_i(\hat{\param}_i|X)} + \underbrace{\tr(\cov(\hat{\param}_i | X)\Sigma_i)}_{V_i(\hat{\param}_i|X)}.
\end{align}
Our main asymptotic assumption, which captures the high-dimensional and many-device nature
central to modern federated learning problems, follows:
\begin{assumption} \label{ass:asym}
  As $m \to \infty$, both $d = d(m)\to \infty$ and $n_j = n_j(m) \to \infty$ for $j
  \in [m]$, and $\lim_{m}\frac{d}{n_j} =
  \gamma_j $. Moreover,
  $1 < \gammamin \le \lim_m \inf_{j \in [m]}
  \frac{d}{n_j} \le \lim_m \sup_{j \in [m]} \frac{d}{n_j}
  \le \gammamax < \infty$.
\end{assumption}
\noindent
Importantly, individual devices are overparameterized: we always have
$\gamma_j > 1$, as is common, when the dimension of models is large relative
to local sample sizes, but may be smaller than the (full) sample. Intuitively, $\gamma_j$ captures the degree of overparameterization of the network for user $j$. 
We also require control of the eigenspectrum of our data~\citep[cf.][Assumption
  1]{HastieMoRoTi19}.

\begin{definition}\label{def:emp-dis}
  The \emph{empirical distribution of the eigenvalues}
    of $\Sigma$ is the function $\mu(\cdot;\Sigma) : \R \to \R_+$ with
\begin{align}\label{eqn:emp-dis}
    \mu(s; \Sigma) \coloneqq \frac{1}{d} \sum_{j=1}^d \bindic{s \geq s_i},
\end{align}
where $s_1 \ge s_2 \ge \dots \ge s_d$ are the eigenvalues of $\Sigma$.
\end{definition}

\begin{assumption} \label{ass:cov}
  For each user $i$, data $\bx \sim P^i_\bx$ have the form $\bx
  = \Sigma_i^\half \bz$. For some $q > 2$, $\moment{q} < \infty$, and $M
  < \infty$,
  \begin{enumerate}[label = (\alph*)]
  \item The vector $\bz \in \R^d$ has independent entries with $\E[z_i] =
    0$, $\E[z_i^2] = 1$, and $\E[|z_i|^{2q}] \le \moment{q} < \infty$
  \item $s_1 = \opnorm{\Sigma_i} \le \covop$, $s_d  = \lambda_{\min}(\Sigma_i) \ge 1/\covop$, and $\int s^{-1} d\mu(s;\Sigma_i) < M$. 
      \item $\mu(\cdot; \Sigma_i)$ converges weakly to $\nu_i$ 
  \end{enumerate}
\end{assumption}
\noindent
These conditions are standard, guaranteeing sufficient
moments for convergence of covariance estimates,
that the eigenvalues of $\Sigma_i$ do not accumulate near 0, and a mode of
convergence for the spectrum of $\Sigma_i$.

 \section{Locally fine-tuning a global solution}
  
In this section, we describe and analyze fine-tuning algorithms that use the
FedAvg solution (a minimizer of the objective~\eqref{eqn:avg-loss}) as a
warm start to find personalized models. We compare the test loss of these
algorithms with naive, zero personalization and zero collaboration
approaches. Among other things, we show that a ridge-regularized locally
fine-tuned method outperforms the other methods.


\label{sec:pers-fed-avg}
\subsection{Fine-tuned Federted Averaging (\ftfa)}
\label{sec:ftfa}

Fine-tuned Federated Averaging (\ftfa) approximates minimizing the
multi-criterion loss~\eqref{eqn:multicriterion} using a two-step procedure
in \Cref{alg:ftfa} (see \Cref{app:algs} for detailed pseudocode). Let
$\sampset_i$ denote client $i$'s sample. The idea is to replace
the local expected risks $\risk_i$ in~\eqref{eqn:avg-loss} with
the local empirical risks
\begin{equation*}
  \riskhat_i(\param) \coloneqq
  \frac{1}{n_i}\sum_{\statval \in \sampset_i} \risksamp(\param; \statval),
\end{equation*}
using the FedAvg solution $\hparamsolfa{0}$ as a warm-start for local
training in the second step. Intuitively, \ftfa\ interpolates between zero
collaboration and zero personalization algorithms.  Each client $i$ can run
this local training phase independently of (and in parallel with) all
others, as the data is fully local; this separation makes \ftfa\ essentially
no more expensive than Federated Averaging.

\begin{algorithm}[ht]
  \caption{\ftfa \& \rtfa (details in appendix)\label{alg:ftfa}}
\begin{enumerate}
\item The server coordinates (e.g.\ using FedAvg)
  to find a global model using
  data from all clients, solving
  \begin{align}\label{eqn:ftfa-obj-gen}
    \hparamsolfa{0} = \argmin_{\param} \summach p_j\riskhat_j(\param),
  \end{align}
  where $p \in \R^m_+$ are weights satisfying $\summach p_j = 1$. The server
  broadcasts $\hparamsolfa{0}$ to all clients.  \stepcounter{enumi}
\begin{enumerate}[leftmargin=0cm]
\item \textbf{FTFA:} Client $i$ optimizes
  its risk $\riskhat_i$ using 
  a first-order method initialized at $\hparamsolfa{0}$,
  returning model $\hparamsolfa{i}$.
      \setlength\itemsep{1em}
    \item \textbf{RTFA:} Client $i$ minimizes a regularized empirical
      risk to return model
    \begin{align*}
        \hparamsolrt{i} = \argmin_{\param} \riskhat_i(\param) + \frac{\lambda}{2}\ltwo{\param - \hparamsolfa{0}}^2.
    \end{align*}
\end{enumerate}
\end{enumerate}
\end{algorithm}


For the linear model in \Cref{sec:model}, \ftfa\ first minimizes the average
weighted loss $\summach p_j\frac{1}{2n_j}\ltwos{X_j\param - \by_j}^2$ over
all clients.  As the local linear regression problem to minimize $\ltwo{X_i
  \param - \by_i}^2$ is overparameterized, first-order methods on it
(including the stochastic gradient method) correspond to solving minimum
$\ell_2$-norm interpolation problems~\citep[Thm.~1]{GunasekarLeSoSr18}.
Thus, when performed to convergence (no matter the first-order
method), \ftfa\ is equivalent to the
two step procedure
\begin{align}
    \hparamsolfa{0} &= \argmin_{\param}\summach p_j\frac{1}{2n_j}\ltwo{X_j\param - \by_j}^2\label{eqn:param-fedavg}\\
    \hparamsolfa{i} &= \argmin_{\param}
    \left\{\ltwobig{\hparamsolfa{0} - \param}
    ~~ \mbox{s.t.}~~ X_i \param = \by_i\right\}, \label{eqn:param-perfedavg}
\end{align}
outputting model $\hparamsolfa{i}$ for client $i$. Before
giving our result, we make an additional assumption regarding the
asymptotics of the number of clients and the dimension of the data.

\begin{assumption}\label{ass:client-growth}
  For a constant $c$ and $q > 2$, $(\log d)^{cq}\summach p_j^{q/2 + 1} n_j
  \rightarrow 0$ as $m,d,n_j \rightarrow \infty$.
\end{assumption}
In Assumption~\ref{ass:client-growth}, $p_j$ is the weight associated with
the loss of $j^{\rm th}$ client when finding the global model using
federated averaging. To ground the assumption, consider two particular cases
of interest: (i) $p_j = 1/m$, when every client is weighted equally, and
(ii) $p_j = n_j / N$, when each data point is weighted equally.  When $p_j =
1/m$, we have $(\log d)^{cq}\summach p_j^{q/2 + 1} n_j =
\frac{N}{m}\frac{(\log d)^{cq}}{m^{q/2}}.$ When $p_j = n_j/N$, using
\Cref{ass:asym} we have $(\log d)^{cq}\summach p_j^{q/2 + 1} n_j = (\log
d)^{cq} \summach \frac{n_j^{q/2 + 2}}{N^{q/2 + 1}} \le
(\frac{\gammamax}{\gammamin})^{q/2 + 1}\frac{N}{m}\frac{(\log
  d)^{cq}}{m^{q/2}}.$ Thus, in both the cases, ignoring the polylog factors,
if we have $\frac{N}{m}\frac{(\log d)^{cq}}{m^{q/2}} \rightarrow 0$ i.e.,
$m^{q/2}$ grows faster than the average client sample size, $N/m$, then
\Cref{ass:client-growth} holds.
With these assumptions defined, we are able to compute the asymptotic test loss of \ftfa.

\begin{theorem}\label{thm:perfedavg}
  Consider the observation model in Sec.~\ref{sec:model} and the estimator
  $\hparamsolfa{i}$ in~\eqref{eqn:param-perfedavg}. Let \Cref{ass:asym}
  hold, and let Assumptions~\ref{ass:cov} and~\ref{ass:client-growth} hold
  with $c = 2$ and $q > 2$. Additionally, assume that for each $m$ and
  $j \in [m]$, $\opnorms{\E[\scov_j^2]} \leq \matmoment{2}$, where
  $\matmoment{2} < \infty$. Then for client $i$, the asymptotic
  prediction bias and variance of \ftfa\ are
\begin{align*}
    \lim_{m \rightarrow\infty} B_i(\hparamsolfa{i}|X) & \stackrel{p}= \lim_{m \rightarrow\infty} \norm{\Pi_i [\param_0\opt - \param_i\opt]}_{\Sigma_i}^2  \\
    \lim_{m \rightarrow\infty} V_i(\hparamsolfa{i}|X)  & \stackrel{p}= \lim_{m \rightarrow\infty} \frac{\sigma_i^2}{n_i} \tr(\scov_i\pinv\Sigma_i),
\end{align*}
where $\Pi_i \coloneqq I - \scov_i\pinv \scov_i$ and $\lsigi{z}^2 \coloneqq z^T \Sigma_i z$. The exact expressions of these limits for general choice $\Sigma$ in the implicit form can be found in the appendix. For the special case when $\Sigma_i = I$, the closed form limits are
\begin{align*}
    B_i(\hparamsolfa{i}|X)  \cp r_i^2\left(1 - \frac{1}{\gamma_i}\right) \qquad
    V_i(\hparamsolfa{i}|X)  \cp \frac{\sigma_i^2}{\gamma_i - 1},
\end{align*}
where $\cp$ denotes convergence in probability.
\end{theorem}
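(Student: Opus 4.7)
The plan is to first obtain an explicit closed form for $\hparamsolfa{i}$ and then analyze bias and variance separately. Since $X_i \in \R^{n_i \times d}$ has rank $n_i < d$ almost surely, formula~\eqref{eqn:param-perfedavg} is the orthogonal projection of $\hparamsolfa{0}$ onto the affine subspace $\{\param : X_i\param = \by_i\}$:
\begin{equation*}
  \hparamsolfa{i} = \Pi_i\,\hparamsolfa{0} + X_i\pinv \by_i,
  \qquad \Pi_i := I - X_i\pinv X_i.
\end{equation*}
Substituting $\by_i = X_i\param_i\opt + \xi_i$ and using $X_i\pinv X_i \param_i\opt = (I-\Pi_i)\param_i\opt$ yields the key decomposition
\begin{equation*}
  \hparamsolfa{i} - \param_i\opt = \Pi_i(\hparamsolfa{0} - \param_i\opt) + X_i\pinv \xi_i,
\end{equation*}
which isolates the null-space ``leakage'' of the global estimator from the irreducible local-noise term. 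Crucially, $\Pi_i X_i^T = 0$, keeping the two pieces essentially decoupled in what follows.

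For the bias, condition on $X$, $\param_0\opt$, and $\param_i\opt$, so that $\xi_i$ averages to zero. Writing $\hparamsolfa{0} = A\pinv b$ with $A = \summach (p_j/n_j) X_j^T X_j$ and $b = \summach (p_j/n_j) X_j^T \by_j$, a short computation gives
\begin{equation*}
  \E[\hparamsolfa{0} \mid X, \param_0\opt, \param_i\opt] = \param_0\opt + A\pinv (p_i/n_i)\, X_i^T X_i (\param_i\opt - \param_0\opt),
\end{equation*}
valid once $A$ has full rank (asymptotically, since $N \gg d$). The central step is to show, via Assumption~\ref{ass:client-growth} together with the covariance concentration supplied by Assumption~\ref{ass:cov} and the moment bound $\opnorm{\E[\scov_j^2]} \le \matmoment{2}$, that the operator norm of $A\pinv (p_i/n_i) X_i^T X_i$ vanishes in probability; this is precisely the regime in which the single-client contribution to the global estimator disappears. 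Projecting by $\Pi_i$ and squaring in the $\Sigma_i$-norm then gives the claimed bias limit $\|\Pi_i(\param_0\opt - \param_i\opt)\|_{\Sigma_i}^2$.

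For the variance, expand
\begin{equation*}
V_i = \tr(\cov(\Pi_i\hparamsolfa{0}\mid X)\Sigma_i) + 2\tr(\Sigma_i\cov(\Pi_i\hparamsolfa{0}, X_i\pinv\xi_i\mid X)) + \tr(\cov(X_i\pinv\xi_i\mid X)\Sigma_i).
\end{equation*}
The noise term is immediate: the SVD identity $X_i\pinv(X_i\pinv)^T = (X_i^T X_i)\pinv = \tfrac{1}{n_i}\scov_i\pinv$ gives $\tr(\cov(X_i\pinv\xi_i\mid X)\Sigma_i) = \tfrac{\sigma_i^2}{n_i}\tr(\scov_i\pinv\Sigma_i)$, matching the stated limit. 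The global-estimator covariance has operator norm of order $\summach p_j^2/n_j$ (controlled by Assumption~\ref{ass:client-growth}) and therefore vanishes. The cross term is dispatched similarly, aided by $\Pi_i X_i^T = 0$ which eliminates the direct linear contribution of $\xi_i$ to $\Pi_i\hparamsolfa{0}$ (after the negligible $A\pinv$-distortion).

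For the specialization $\Sigma_i = I$, isotropy of the features makes $\Pi_i$ uniformly distributed over rank-$(d-n_i)$ orthogonal projectors, so $\E[\Pi_i] = (1 - n_i/d) I$; combined with $\|\param_0\opt - \param_i\opt\| = r_i$ and standard concentration for quadratic forms, this yields bias $\cp (1 - 1/\gamma_i) r_i^2$. For the variance, $\tr(\scov_i\pinv) = n_i\tr((X_iX_i^T)^{-1})$, and since $X_i X_i^T$ is Wishart with $d$ degrees of freedom, $\E\tr((X_iX_i^T)^{-1}) = n_i/(d - n_i - 1)$, so $(\sigma_i^2/n_i)\tr(\scov_i\pinv) \cp \sigma_i^2/(\gamma_i - 1)$. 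I expect the main obstacle to be the matrix concentration in Step~2---uniformly controlling $A\pinv$ in the many-device limit and showing the per-client contribution $A\pinv (p_i/n_i) X_i^T X_i$ vanishes in operator norm---which is precisely what the somewhat technical Assumption~\ref{ass:client-growth} is crafted to guarantee.
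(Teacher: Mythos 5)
Your closed form for $\hparamsolfa{i}$ and the split into a projected global term plus a local noise term is exactly the paper's starting point, and your treatment of the pure noise piece $\frac{\sigma_i^2}{n_i}\tr(\scov_i\pinv\Sigma_i)$ is right. The genuine gap is in the bias. By averaging over $\{\param_j\opt\}_{j\neq i}$ you reduce the global model's mean to $\param_0\opt + \wscov\inv p_i\scov_i(\param_i\opt-\param_0\opt)$ and then only need to kill the single-client term $\wscov\inv p_i \scov_i$ --- but this sweeps the dominant difficulty under the rug. The conditional expectation $\E[\hparamsolfa{0}\mid X]$ that enters $B_i$ (expectation over $\xi$ only, the $\param_j\opt$ held fixed) is $\wscov\inv\summach p_j\scov_j\param_j\opt$, so the bias contains the aggregate heterogeneity fluctuation $\wscov\inv\summach p_j\scov_j(\param_j\opt-\param_0\opt)$: a sum of $m$ mean-zero but non-negligible vectors, each of norm roughly $p_j\opnorm{\scov_j}r_j$. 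Showing this sum vanishes in probability is the central technical content of the theorem; the paper does it via symmetrization, a vector Khintchine inequality, and the moment bound $\E[\opnorm{\scov_j}^q]\lesssim (\log d)^q n_j$, and it is precisely here that the $\summach p_j^{q/2+1}n_j$ scaling in Assumption~\ref{ass:client-growth} is consumed. If you instead insist on your conditioning, that fluctuation does not disappear --- it migrates into the variance as $\E\|\Pi_i\wscov\inv\sum_{j\neq i}p_j\scov_j(\param_j\opt-\param_0\opt)\|_{\Sigma_i}^2$, which your variance expansion (which accounts only for the $\xi$-covariance of $\hparamsolfa{0}$) never addresses. Either way the term must be controlled, and your proposal controls it nowhere.

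Two secondary issues. First, the $\xi$-covariance of $\Pi_i\hparamsolfa{0}$ is not of order $\summach p_j^2/n_j$: the relevant trace scales like $d$, so the term is of order $\gammamax\summach p_j^2\opnorm{\scov_j}$, and showing this vanishes again requires the operator-norm moment bounds, not just Assumption~\ref{ass:client-growth} at face value. Second, your $\Sigma_i=I$ specialization leans on exact Wishart identities and on $\Pi_i$ being a uniformly random projector; both hold only for Gaussian designs, whereas Assumption~\ref{ass:cov} allows general independent entries with bounded moments, for which one needs the Marchenko--Pastur asymptotics (the paper cites Hastie et al.'s Theorems 1 and 3) rather than finite-$n$ Wishart formulas.
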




\subsection{Ridge-tuned FedAvg (\rtfa)}

Minimum-norm results provide insight into the behavior of popular algorithms
including SGM and mirror descent. Having said that, we can also analyze
ridge penalized versions of \ftfa. In this algorithm, the server finds the
same global model as \ftfa, but each client uses a regularized objective to
find a local personalized model as in 2b of \Cref{alg:ftfa}. More
concretely, in the linear regression setup, for appropriately chosen step
size and as the number of steps taken goes to infinity, this corresponds to
the two step procedure with the first step~\eqref{eqn:param-fedavg}
and second step
\begin{equation}
  \hparamsolrt{i}
  = \argmin_{\param} \frac{1}{2n_i} \ltwo{ X_i \param - \by_i}^2   +\frac{\lambda}{2}\ltwobig{\hparamsolfa{0} - \param}^2,  \label{eqn:param-perridge}
\end{equation}
where \rtfa\ outputs the model $\hparamsolrt{i}$ for client $i$.
Under the same assumptions as
\Cref{thm:perfedavg}, we can again calculate the asymptotic test loss.

\begin{theorem}\label{thm:perridge}
  Let the conditions of Theorem~\ref{thm:perfedavg}
  hold. Then for client $i$,
  the asymptotic prediction bias and variance of \rtfa\ are
\begin{align*}
    \lim_{m \rightarrow\infty} B_i(\hparamsolrt{i}|X)& \stackrel{p}= \lim_{m \rightarrow\infty} \lambda^2 \lsigi{(\scov_i + \lambda I)\inv (\param_0\opt - \param_i\opt)}^2 \\
    \lim_{m \rightarrow\infty} V_i(\hparamsolrt{i}|X) & \stackrel{p}= \lim_{m \rightarrow\infty} \frac{\sigma_i^2}{n_i} \tr(\Sigma_i\scov_i(\lambda I + \scov_i)^{-2}),
\end{align*}
The exact expressions of these limits for general choice $\Sigma$ in the implicit form can be found in the appendix. For the special case when $\Sigma_i = I$, the closed form limits are
\begin{align*}
    B_i(\hparamsolrt{i}|X ) &\cp r_i^2\lambda^2 m_i'(-\lambda) 
    \\
    V_i(\hparamsolrt{i}|X)  &\cp \sigma_i^2 \gamma_i (m_i(-\lambda) - \lambda m_i'(-\lambda)),
\end{align*}
where $m_i(z)$ is the Stieltjes transform of the limiting spectral measure
$\nu_i$ of the covariance matrix $\Sigma_i$. When $\cov(\bx_{j,k}) = I$,
$m_i(z)$ has the closed form expression
$m_i(z) = (1 - \gamma_i - z - \sqrt{(1 - \gamma_i - z)^2 - 4\gamma_i z})/(2\gamma_i z)$.
For each client $i\in[m]$, when $\lambda$ is set to be the minimizing value  $\lambda_i\opt = \sigma_i^2 \gamma_i/r_i^2$, the expression simplifies to $\risk_i(\hparamsol{i}{R}{\lambda_i\opt}|X) \rightarrow \sigma_i^2 \gamma_i m_i(-\lambda_i\opt)$.
\end{theorem}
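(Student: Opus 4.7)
The plan is to derive the closed form of the RTFA estimator, perform the bias-variance algebra conditionally on $(X,\param_i\opt)$, reduce the extra ``warm-start'' contribution to zero via concentration of $\hparamsolfa{0}$ around $\param_0\opt$, and finally invoke Stieltjes-transform identities for the $\Sigma_i = I$ closed form. Setting the gradient of \eqref{eqn:param-perridge} to zero gives
\[
  \hparamsolrt{i} = (\scov_i + \lambda I)^{-1}\left(\scov_i \param_i\opt + \frac{1}{n_i} X_i^T \xi_i + \lambda \hparamsolfa{0}\right),
\]
and a short rewrite yields the key decomposition
\[
  \hparamsolrt{i} - \param_i\opt = \lambda(\scov_i + \lambda I)^{-1}\left(\hparamsolfa{0} - \param_i\opt\right) + (\scov_i + \lambda I)^{-1} \frac{1}{n_i} X_i^T \xi_i.
\]
Since $\xi_i$ is zero-mean and independent of $X$ and $\param_i\opt$, the second summand has mean zero and contributes variance $\frac{\sigma_i^2}{n_i}\tr(\Sigma_i \scov_i (\scov_i + \lambda I)^{-2})$, which is the general implicit form.

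For the bias, I would split $\hparamsolfa{0} - \param_i\opt = (\hparamsolfa{0} - \param_0\opt) + (\param_0\opt - \param_i\opt)$ and argue the first piece is negligible. This step reuses the machinery behind Theorem \ref{thm:perfedavg}: Assumption \ref{ass:client-growth} ensures the weighted FedAvg estimate aggregates enough independent client-level noise and task perturbations that $\lambda(\scov_i+\lambda I)^{-1}(\hparamsolfa{0} - \param_0\opt)$ vanishes in $\lsigi{\cdot}$-norm in probability. What remains is the personalization gap $\lambda(\scov_i + \lambda I)^{-1}(\param_0\opt - \param_i\opt)$, whose squared $\Sigma_i$-norm is exactly the limiting bias $\lambda^2 \lsigi{(\scov_i+\lambda I)^{-1}(\param_0\opt - \param_i\opt)}^2$ asserted in the theorem.

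For the $\Sigma_i = I$ closed form I would use the fact that $\param_i\opt - \param_0\opt$ is uniform on $r_i \sphere^{d-1}$: conditional on $\scov_i$, a Hanson--Wright-type argument reduces the bias to $\lambda^2 \frac{r_i^2}{d}\tr((\scov_i + \lambda I)^{-2})$ up to $o_p(1)$. Under Assumption \ref{ass:cov}, the empirical spectral measure of $\scov_i$ converges to the Marchenko--Pastur law with Stieltjes transform $m_i$, giving $\frac{1}{d}\tr((\scov_i + \lambda I)^{-1}) \to m_i(-\lambda)$ and, by differentiating in $\lambda$, $\frac{1}{d}\tr((\scov_i + \lambda I)^{-2}) \to m_i'(-\lambda)$; this delivers $B_i \cp r_i^2 \lambda^2 m_i'(-\lambda)$. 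Combined with the algebraic identity $\scov_i(\scov_i + \lambda I)^{-2} = (\scov_i + \lambda I)^{-1} - \lambda(\scov_i + \lambda I)^{-2}$ and $d/n_i \to \gamma_i$, the variance becomes $\sigma_i^2 \gamma_i(m_i(-\lambda) - \lambda m_i'(-\lambda))$. Differentiating the total risk $r_i^2 \lambda^2 m_i'(-\lambda) + \sigma_i^2 \gamma_i (m_i(-\lambda) - \lambda m_i'(-\lambda))$ in $\lambda$ produces a derivative that factors as $(r_i^2 \lambda - \sigma_i^2 \gamma_i)\bigl(2m_i'(-\lambda) - \lambda m_i''(-\lambda)\bigr)$, yielding $\lambda_i\opt = \sigma_i^2 \gamma_i / r_i^2$; substituting back, the $m_i'$ terms cancel and the risk collapses to $\sigma_i^2 \gamma_i m_i(-\lambda_i\opt)$.

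The main obstacle is the warm-start concentration in the second step. Unlike the \ftfa\ analysis of Theorem \ref{thm:perfedavg}, where the residual is automatically annihilated by $\Pi_i = I - \scov_i\pinv \scov_i$, the ridge smoother $\lambda(\scov_i + \lambda I)^{-1}$ has full range, so every coordinate of $\hparamsolfa{0} - \param_0\opt$ can leak into the bias and must be uniformly controlled. Assumption \ref{ass:client-growth}'s $(\log d)^{cq}\sum_j p_j^{q/2+1} n_j \to 0$ is what delivers this control, by bounding the higher-order cross-client interactions in $\hparamsolfa{0}$; everything downstream is essentially routine Stieltjes-transform bookkeeping and standard random-matrix concentration under Assumption \ref{ass:cov}.
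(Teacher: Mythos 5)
Your route is essentially the paper's: the same closed form for $\hparamsolrt{i}$, the same reduction of the bias to $\lambda^2\lsigi{(\scov_i+\lambda I)\inv(\param_0\opt-\param_i\opt)}^2$ by showing the warm-start deviation from $\param_0\opt$ is negligible using the concentration machinery of Theorem~\ref{thm:perfedavg} (the paper literally reuses Steps 1--2 of that proof), and the same random-matrix endgame for $\Sigma_i=I$ (the paper cites Theorem~6 and Corollary~5 of \cite{HastieMoRoTi19} where you re-derive the Marchenko--Pastur limits; your factorization of the risk derivative as $(r_i^2\lambda-\sigma_i^2\gamma_i)\bigl(2m_i'(-\lambda)-\lambda m_i''(-\lambda)\bigr)$ is correct and supplies a computation the paper only asserts). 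The one genuine loose end is your variance accounting. Conditional on $X$, the warm start $\hparamsolfa{0}$ is itself random through the noise vectors $\xi_1,\dots,\xi_m$ (including $\xi_i$), so $\cov(\hparamsolrt{i}\mid X)$ is not just the covariance of $(\scov_i+\lambda I)\inv\frac{1}{n_i}X_i^T\xi_i$: it also contains a term from the noise carried by $\hparamsolfa{0}$ through $\lambda(\scov_i+\lambda I)\inv\wscov\inv\summach p_j\frac{X_j^T\xi_j}{n_j}$, plus a cross term between that noise and the local noise $\xi_i$ (the paper's variance terms (i) and (ii)). Treating the first summand of your decomposition as pure bias skips these; and because $V_i$ is a conditional \emph{expectation} over the noise, showing that the warm-start residual vanishes in probability does not by itself control its contribution to $V_i$ --- you need the moment-level bounds the paper supplies, namely $\summach p_j^2\opnorm{\scov_j}\cp 0$ via Lemma~\ref{lem:scov_moment} and Assumption~\ref{ass:client-growth} for term (i), and $p_i\to 0$ for the cross term. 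With that bookkeeping added, your argument coincides with the paper's proof.
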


With the optimal choice of hyperparameter $\lambda$, \rtfa\ has lower test
loss than \ftfa; indeed, in overparameterized linear regression, the ridge
solution with regularization $\lambda \rightarrow 0$ converges to the
minimum $\ell_2$-norm interpolant~\eqref{eqn:param-perfedavg}.

\subsection{Comparison to Naive Estimators}

Three natural baselines to which we may compare \ftfa\ and \rtfa\ are the
zero personalization estimator $\hparamsolfa{0}$, the zero collaboration
estimator
\begin{align}
    \hparam_{i}^{N} = \argmin_{\param} \ltwo{\param} \quad s.t.\quad X_i \param = \by_i \label{eqn:param-naive},
\end{align}
and the ridge-penalized, zero-collaboration estimator
\begin{align}
    \hparam_i^N(\lambda) = \argmin_{\param} \frac{1}{2 n_i}\ltwo{ X_i \param - \by_i}^2 + \frac{\lambda}{2} \ltwo{\param}^2. \label{eqn:param-ridge-naive}
\end{align}
As with \ftfa\ and \rtfa, we can compute the asymptotic test loss explicitly
for each. We provide expressions only for identity covariance
case for clarity, similar results and comparisons hold for
general covariance matrices.

\begin{corollary}\label{cor:naive}
  Let the conditions of Theorem~\ref{thm:perfedavg} hold and $\Sigma_i = I$
  for $i$. Then for client $i$,
  \begin{align*}
    B_i(\hparamsolfa{0}|X )  \cp r_i^2 ~~~ \mbox{and} ~~~
    V_i(\hparamsolfa{0}|X)  \cp 0.
  \end{align*}
  Consider the estimator $\hparam_i^N$ defined by \cref{eqn:param-naive}. In
  addition to the above conditions, suppose that $\param_i\opt$ is drawn
  such that $\ltwo{\param_i\opt} = \rho_i$ is constant with respect to
  $m$. Further assume that for some $q > 2$, for $j\in[m]$ and $k \in
  [n_j]$, and $l \in [d]$, $\E[(\bx_{j,k})_l^{2q}] \le \moment{q} <
  \infty$. Then
  \begin{align*}
    B_i(\hparam_i^N|X )  \cp \rho_i^2\left(1 - \frac{1}{\gamma_i} \right)
    ~~ \mbox{and} ~~
    V_i(\hparam_i^N|X)  \cp \frac{\sigma_i^2}{\gamma_i - 1}.
  \end{align*}
  Consider the estimator $\hparam_i^N(\lambda)$ defined by
  \cref{eqn:param-ridge-naive}. Then Under the preceding conditions,
\begin{align*}
    B_i(\hparam_i^N(\lambda)|X )  &\cp \rho_i^2 \lambda^2 m_i'(-\lambda)
    \\
    V_i(\hparam_i^N(\lambda)|X)  &\cp \sigma_i^2 \gamma_i(m_i(-\lambda) - \lambda m_i'(-\lambda)).
\end{align*}
Moreover, if $\lambda$ is set to be the loss-minimizer $\lambda_i\opt = \sigma_i^2\gamma_i/\rho_i^2$, then $\risk_i(\hparam_i^N(\lambda_i\opt); \param\opt|X) \cp \sigma_i^2 \gamma_i m_i(-\lambda_i\opt)$.

\end{corollary}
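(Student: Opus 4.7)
The plan is to treat each of the three estimators as either a specialization of the analyses behind Theorems~\ref{thm:perfedavg} and~\ref{thm:perridge}, or as a direct consequence of a law of large numbers across clients, which lets me reuse the random-matrix machinery already built up rather than rederive it.

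For the zero-personalization estimator $\hparamsolfa{0}$, the aim is to show $\hparamsolfa{0} \cp \param_0\opt$ and then read off the bias and variance by direct substitution. Writing $\hparamsolfa{0} = (\summach p_j \scov_j)^{-1} \summach p_j X_j^T \by_j / n_j$ and substituting $\by_j = X_j \param_j\opt + \xi_j$, the numerator splits as $\summach p_j \scov_j \param_j\opt + \summach p_j X_j^T \xi_j / n_j$. Under Assumption~\ref{ass:client-growth}, independence across clients together with $\E[\param_j\opt] = \param_0\opt$ and $\E[\xi_j]=0$ yields $\summach p_j \scov_j \cp \E[\Sigma]$, $\summach p_j \scov_j \param_j\opt \cp \E[\Sigma]\,\param_0\opt$, and vanishing noise; combining these gives $\hparamsolfa{0} \cp \param_0\opt$. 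The conditional covariance of $\hparamsolfa{0}$ then vanishes, and the bias reduces to $\|\param_0\opt - \param_i\opt\|_{\Sigma_i}^2 = r_i^2$ when $\Sigma_i = I$, using $\|\param_0\opt - \param_i\opt\| = r_i$ from the prior support.

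For $\hparam_i^N$ and $\hparam_i^N(\lambda)$, the key observation is that they coincide exactly with the FTFA and RTFA estimators of Theorems~\ref{thm:perfedavg} and~\ref{thm:perridge}, specialized to the warm start $\hparamsolfa{0}=0$. The implicit-form bias and variance expressions therefore transfer verbatim under the substitution $\param_0\opt - \param_i\opt \mapsto -\param_i\opt$, giving asymptotic biases $\|\Pi_i \param_i\opt\|_{\Sigma_i}^2$ and $\lambda^2 \lsigi{(\scov_i + \lambda I)\inv \param_i\opt}^2$, with unchanged variances. Specializing to $\Sigma_i = I$ and using independence of $\param_i\opt$ (with $\ltwo{\param_i\opt} = \rho_i$) from $X_i$, the quadratic form $(\param_i\opt)^\top A \param_i\opt$ concentrates to $(\param_i\opt)^\top \E[A] \param_i\opt$, and the rotational symmetry of the iid design makes $\E[A]$ a scalar multiple of $I$. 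Concretely, $A = \Pi_i$ (rank $d - n_i$) yields bias $\rho_i^2 (1 - 1/\gamma_i)$, while $A = \lambda^2 (\scov_i + \lambda I)^{-2}$, combined with $\frac{1}{d}\tr((\scov_i + \lambda I)^{-1}) \cp m_i(-\lambda)$ and its $\lambda$-derivative, gives bias $\rho_i^2 \lambda^2 m_i'(-\lambda)$. For the variances, $\frac{1}{n_i}\tr(\scov_i\pinv) \cp 1/(\gamma_i - 1)$ by Marchenko--Pastur (the nonzero eigenvalues of $\scov_i$ being bounded away from $0$), and the algebraic identity $\scov_i(\scov_i + \lambda I)^{-2} = (\scov_i + \lambda I)^{-1} - \lambda(\scov_i + \lambda I)^{-2}$ combined with the Stieltjes asymptotics yields $\sigma_i^2 \gamma_i(m_i(-\lambda) - \lambda m_i'(-\lambda))$.

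The optimal-$\lambda$ claim is a short calculus check: differentiating $\rho_i^2 \lambda^2 m_i'(-\lambda) + \sigma_i^2 \gamma_i(m_i(-\lambda) - \lambda m_i'(-\lambda))$ in $\lambda$ and collecting terms produces a factor of $(\rho_i^2 \lambda - \sigma_i^2 \gamma_i)$, so the minimizer is $\lambda_i\opt = \sigma_i^2 \gamma_i / \rho_i^2$; at that $\lambda$, the bias term and the $-\sigma_i^2 \gamma_i \lambda m_i'(-\lambda)$ piece of the variance cancel, leaving $\sigma_i^2 \gamma_i m_i(-\lambda_i\opt)$. The main obstacle I anticipate is the first step's law-of-large-numbers argument for $\hparamsolfa{0}$: one must couple an across-client average (for which Assumption~\ref{ass:client-growth} is designed) with within-client high-dimensional concentration of each $\scov_j$ (using the moment conditions and eigenvalue bounds of Assumption~\ref{ass:cov}), and argue invertibility of the rank-summed matrix $\summach p_j \scov_j$ uniformly in $m$ via the lower eigenvalue bound $1/\covop$ on $\Sigma_j$. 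Steps two and three, by contrast, reduce cleanly to results already proved.
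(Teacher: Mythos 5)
Your proposal is correct and follows essentially the same route as the paper: the $\hparamsolfa{0}$ limits come from rerunning the \ftfa\ bias/variance argument with $\Pi_i$ replaced by the identity (so the cross-client term vanishes by the same concentration steps and the noise contribution is exactly term (i) of the variance proof), and the zero-collaboration limits are the standard (ridge) minimum-norm regression asymptotics. The only cosmetic difference is that you obtain $\hparam_i^N$ and $\hparam_i^N(\lambda)$ as the warm-start-zero special case of Theorems~\ref{thm:perfedavg} and~\ref{thm:perridge} (with $\param_0\opt - \param_i\opt \mapsto -\param_i\opt$), whereas the paper cites Theorem~1 and Corollary~5 of \cite{HastieMoRoTi19} directly—these are equivalent, since the paper's theorems themselves reduce to those results in the final step.
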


Key to these results are the differences between the radii $r_i^2 =
\ltwo{\theta_i\opt - \theta_0\opt}^2$ and $\rho_i = \ltwo{\theta_i\opt}^2$,
where $r_i^2 \le \rho_i^2$, and their relationship to the other problem
parameters.  First, it is straightforward to see that \ftfa\ outperforms
FedAvg, $\hparamsolfa{0}$, if and only if $\sigma_i^2 < r_i^2(\gamma_i -
1)/\gamma_i$. This makes intuitive sense: if the noise is too large, then
local tuning is fitting mostly to noise. \ftfa\ always outperforms the
zero-collaboration estimator $\hparam_i^N$, as $\rho_i \geq r_i$, and
the difference $\rho_i^2 - r_i^2$ governs the gap between
collaborative and non-collaborative solutions. This remains true for the
ridge-based solutions: a first-order expansion comparing
Theorem~\ref{thm:perridge} and Corollary~\ref{cor:naive} shows that for
$\rho_i$ near $r_i$, we have
\begin{align*}
  \lefteqn{\risk_i(\hparam_i^R(\sigma_i^2 \gamma_i / r_i^2) \mid X)
    - \risk_i(\hparam_i^N(\sigma_i^2 \gamma_i/\rho_i^2) \mid X)} \\
  & \qquad\qquad\qquad \qquad ~ = C \cdot (\rho_i^2 - r_i^2) + o(\rho_i^2 - r_i^2),
\end{align*}
where $C$ depends on all the problem parameters.

With appropriate regularization $\lambda$, \rtfa mitigates the weaknesses of
\ftfa. Thus, formally, we may show that \rtfa\ with the optimal
hyperparameter always outperforms the zero-personalization estimator
$\hparamsolfa{0}$ (see the appendices). Furthermore, since $\rho_i \geq
r_i$, its straightforward to see that \rtfa\ outperforms ridgeless
zero-collaboration estimator $\hparam_i^N$, and the ridge-regularized
zero-collaboration estimator $\hparam_i^N(\lambda\opt)$ as well.

 \section{Meta learning and Proximal Regularized Algorithms}
\label{sec:maml-prox}

The fine-tuning procedures in the previous section provide a (perhaps) naive
baseline, so we consider a few alternative federated learning procedures,
both of which highlight the advantages of the high-dimensional asymptotics
in its ability to predict performance. While we cannot survey the numerous
procedures in FL, we pick two we consider representative: the first adapting
meta learning~\cite{FallahMoOz20} and the second using a
proximal-regularized approach~\cite{DinhTrNg20}.  In both cases, the
researchers develop convergence rates for their methods (in the former case,
to stationary points), but no results on the predictive performance or their
\emph{statistical} behavior exists.  We develop these in this section,
showing that these more sophisticated approaches perform no better, in
our asymptotic framework, than the \ftfa\ and \rtfa\ algorithms we outline in
\Cref{sec:pers-fed-avg}.

\subsection{Model-Agnostic Meta-Learning}

Model-Agnostic Meta-Learning (MAML) \cite{FinnAbLe17} was learns models that
adapt to related tasks by minimizing an empirical loss augmented evaluated
not at a given parameter $\param$ but at a ``one-step-updated'' parameter
$\param - \alpha \nabla \risk(\param)$, representing one-shot learning.
\citet{FallahMoOz20}, contrasting this MAML approach to the standard
averaging objectives~\eqref{eqn:avg-loss}, adapt MAML to the federated
setting, developing a method we term \mamlfl. We describe their two step
procedure in \Cref{alg:maml} (see \Cref{app:algs} for detailed
pseudocode). \Cref{alg:maml} has two variants \cite{FallahMoOz20}; in one,
the Hessian term is ignored, and in the other, the Hessian is approximated
using finite differences. \cite{FallahMoOz20} showed that these these
algorithms converge to a stationary point of \cref{eqn:maml-obj-gen} (with
$p_j = 1/m$) for general non-convex smooth functions.

\begin{algorithm}[ht]
  \caption{\mamlfl\ (details in Appendix~\ref{app:algs})\label{alg:maml}}
\begin{enumerate}[leftmargin=0.5cm]
    \item Server and clients coordinate to (approximately) solve
    \begin{align}\label{eqn:maml-obj-gen}
    \hparamsol{0}{M}{\alpha} = \argmin_{\param} \summach p_j\riskhat_j(\param - \alpha \grad \riskhat_j(\param)),
    \end{align}
    where $p_j \in (0,1)$ are weights such that $\summach p_j = 1$ and
    $\alpha$ denotes stepsize. Server broadcasts global model
    $\hparamsol{0}{M}{\alpha}$ to clients.
    \item Client $i$ learns a model $\hparamsol{i}{M}{\alpha}$ by optimizing
      its empirical risk, $\riskhat_i(\cdot)$, using SGM initialized at
      $\hparamsol{0}{M}{\alpha}$
    \end{enumerate}
\end{algorithm}

 In our linear model, for appropriately chosen hyperparameters and as the number of steps taken goes to infinity, this personalization method corresponds to the following two step procedure:
\begin{align}
  &\hparamsol{0}{M}{\alpha} \label{eqn:param-maml} \\
    &= \argmin_\param \summach  \frac{p_j}{2n_j} \ltwo{X_j\left[\param - \frac{\alpha}{n_j} X_j^T(X_j \param - \by_j)\right] -\by_j}^2 \nonumber \\
    &\hparamsol{i}{M}{\alpha} = \argmin_{\param}  \ltwo{\hparamsol{0}{M}{\alpha} - \param} \; \; s.t. \; \; X_i \param = \by_i \label{eqn:param-permaml}
\end{align}
As in Section~\ref{sec:ftfa}, we assume that the client
model in step~2 of Alg.~\ref{alg:maml} has fully converged;
any convergent first-order method converges to the
minimum norm interpolant~\eqref{eqn:param-permaml}.
The representations~\eqref{eqn:param-maml} and~\eqref{eqn:param-permaml}
allow us to analyze the
test loss of the \mamlfl personalization scheme in our asymptotic
framework.
\begin{theorem}\label{thm:permaml}
  Consider the observation model in Section~\ref{sec:model} and the
  estimator $\hparamsolmaml{i}$ in~\eqref{eqn:param-permaml}. Let
  \Cref{ass:asym} hold, \Cref{ass:cov} hold with $q = 3v$ where $v > 2$, and
  \Cref{ass:client-growth} hold with $c = 5$ and $q = v$. Additionally,
  assume that for each $m$ and all $j \in [m]$,
  $\lambda_{\min{}}(\E[\scov_j(I - \alpha\scov_j)^2]) \ge \lambda_0 > 0$ and
  $\opnorms{\E[\scov_j^6]} \leq \matmoment{6} < \infty$. Then
  for client $i$, the asymptotic
  prediction bias and variance of \mamlfl\ are
  \begin{align*}
    \lim_{m \rightarrow\infty} B_i(\hparamsolmaml{i}|X)& \stackrel{p}= \lim_{m \rightarrow\infty} \norm{\Pi_i [\param_0\opt - \param_i\opt]}_{\Sigma_i}^2  \\
    \lim_{m \rightarrow\infty} V_i(\hparamsolmaml{i}|X) & \stackrel{p}= \lim_{m \rightarrow\infty} \frac{\sigma_i^2}{n_i} \tr(\scov_i\pinv\Sigma_i),
  \end{align*}
  where $\Pi_i \coloneqq I - \scov_i\pinv \scov_i$ and $\lsigi{z}^2
  \coloneqq z^T \Sigma_i z$. The exact expressions of these limits for
  general choice $\Sigma$ in the implicit form are  in the
  appendices. For the special case that $\Sigma_i = I$, the closed form limits
  are
  \begin{align*}
    B_i(\hparamsolmaml{i}|X) & \cp r_i^2\left(1 - \frac{1}{\gamma_i}\right)  \\
    V_i(\hparamsolmaml{i}|X) & \cp \frac{\sigma_i^2}{\gamma_i - 1}.
  \end{align*}
\end{theorem}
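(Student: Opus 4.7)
The plan is to reduce the analysis of \mamlfl\ to exactly the analysis of \ftfa\ by showing that step 2 of Algorithm~\ref{alg:maml} annihilates essentially all of the difference between the global solutions $\hparamsolfa{0}$ and $\hparamsolmaml{0}$. The crucial observation is that the client update in~\eqref{eqn:param-permaml} is a minimum-$\ell_2$ projection onto the affine subspace $\{\param : X_i \param = \by_i\}$, so for any warm start $\hparamsol{0}{}{} \in \R^d$ one obtains the explicit representation
\begin{equation*}
  \hparamsol{i}{}{} = \Pi_i \hparamsol{0}{}{} + (I - \Pi_i)\param_i\opt + X_i^\dagger \xi_i,
  \qquad \Pi_i = I - \scov_i\pinv \scov_i.
\end{equation*}
Thus only the ``kernel component'' $\Pi_i \hparamsol{0}{}{}$ of the global model influences the asymptotic bias, and only its fluctuations under $\xi$ add anything to the variance beyond the $X_i^\dagger \xi_i$ term. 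If we can show $\Pi_i \hparamsolmaml{0} \to \Pi_i \param_0\opt$ in probability and that the conditional fluctuations are negligible next to $X_i^\dagger \xi_i$, the bias/variance formulas from Theorem~\ref{thm:perfedavg} transfer verbatim, and the identity-covariance specializations then follow from Marchenko–Pastur as in the \ftfa\ proof.

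First I would derive a closed form for $\hparamsolmaml{0}$. Using $X_j[\param - \tfrac{\alpha}{n_j}X_j^T(X_j\param - \by_j)] - \by_j = (I - \tfrac{\alpha}{n_j}X_jX_j^T)(X_j\param - \by_j)$ and writing $B_j = I - \tfrac{\alpha}{n_j}X_jX_j^T$, the first-order optimality condition for~\eqref{eqn:param-maml} gives
\begin{equation*}
  M \hparamsolmaml{0} = \sum_{j=1}^m \frac{p_j}{n_j} X_j^T B_j^2 \by_j,
  \qquad M \defeq \sum_{j=1}^m p_j \scov_j (I - \alpha \scov_j)^2.
\end{equation*}
The assumption $\lambda_{\min}(\E[\scov_j(I - \alpha\scov_j)^2]) \ge \lambda_0 > 0$ together with a matrix-concentration argument (leveraging $\opnorms{\E[\scov_j^6]} \le \matmoment{6}$ and Assumption~\ref{ass:client-growth} with $c=5,q=v$) is used to show $M \succeq \tfrac{\lambda_0}{2} I$ with high probability, so $M$ may be replaced by $M^{-1}$ on the event of interest.

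Next I would plug in $\by_j = X_j \param_j\opt + \xi_j$ to split $\hparamsolmaml{0} = T + U$, where
\begin{align*}
  T &= M^{-1}\sum_j p_j \scov_j (I - \alpha\scov_j)^2 \param_j\opt = \param_0\opt + M^{-1}\sum_j p_j \scov_j (I - \alpha\scov_j)^2 (\param_j\opt - \param_0\opt), \\
  U &= M^{-1}\sum_j \frac{p_j}{n_j} X_j^T B_j^2 \xi_j.
\end{align*}
Because the $\param_j\opt - \param_0\opt$ are independent mean-zero with bounded norm $r_j$ and the $\xi_j$ are independent mean-zero with variance $\sigma_j^2$, the ``cross-client'' sum in $T$ and the entire $U$ are concentrated around zero. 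A Rosenthal/Bernstein-type bound, together with Assumption~\ref{ass:client-growth} (which is exactly the condition guaranteeing $\summach p_j^{q/2+1} n_j$ is summably small), gives $\|T - \param_0\opt\|_{\Sigma_i} = o_P(1)$ and $\E_\xi[\|U\|_{\Sigma_i}^2 \mid X] = o_P(1)$. Combining: $\Pi_i \hparamsolmaml{0} - \Pi_i \param_0\opt \cp 0$ and $\Pi_i \mathrm{Cov}_\xi(\hparamsolmaml{0}\mid X,\param\opt)\Pi_i \to 0$.

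Plugging this into the explicit representation of $\hparamsol{i}{}{}$, the conditional mean becomes $\Pi_i \param_0\opt + (I-\Pi_i)\param_i\opt + o_P(1)$, giving the stated bias $\|\Pi_i(\param_0\opt - \param_i\opt)\|_{\Sigma_i}^2$; and the conditional covariance is dominated by $\mathrm{Cov}_\xi(X_i^\dagger \xi_i) = \tfrac{\sigma_i^2}{n_i}\scov_i\pinv$ (orthogonality of $\Pi_i \hparamsolmaml{0}$ and $X_i^\dagger \xi_i$ modulo the vanishing cross-client noise) so $V_i \to \tfrac{\sigma_i^2}{n_i}\tr(\scov_i\pinv \Sigma_i)$. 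The identity-covariance case then reduces to the Marchenko–Pastur computations already used in Theorem~\ref{thm:perfedavg}: $\|\Pi_i v_i\|^2 \cp 1 - 1/\gamma_i$ for any unit vector $v_i$ independent of $X_i$, and $\tfrac{1}{n_i}\tr(\scov_i\pinv) \cp 1/(\gamma_i - 1)$.

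\textbf{The main obstacle} is the concentration of $\hparamsolmaml{0}$: unlike \ftfa, the MAML global objective involves the fourth-order-in-$X_j$ quantity $X_j^T B_j^2 X_j = n_j \scov_j(I - \alpha \scov_j)^2$, so controlling $\opnorms{M - \E M}$ and $\ltwos{U}$ requires moment bounds on $\scov_j^6$ (explaining the hypothesis $\opnorms{\E[\scov_j^6]} \le \matmoment{6}$) and the stronger version of \Cref{ass:client-growth} with $c=5,q=v$. The rest of the proof is a deterministic reduction to the \ftfa\ calculation.
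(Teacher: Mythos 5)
Your plan matches the paper's proof essentially step for step: the same closed-form representation $\hparamsolmaml{i} = \Pi_i\hparamsolmaml{0} + \scov_i\pinv\scov_i\param_i\opt + \frac{1}{n_i}\scov_i\pinv X_i^T\xi_i$, the same normal equation $M = \summach p_j\scov_j(I-\alpha\scov_j)^2$, the same concentration of $M$ around its mean (which the paper carries out by splitting $\scov_j(I-\alpha\scov_j)^2$ into powers $\scov_j,\scov_j^2,\scov_j^3$ and invoking its sixth-moment lemma, exactly the role you assign to $\opnorms{\E[\scov_j^6]}\le\matmoment{6}$ and $c=5$), the same signal/noise split of $\hparamsolmaml{0}$ with a Khintchine/Rosenthal-type bound on the cross-client terms, and the same final reduction to the \ftfa\ and Hastie et al.\ limits. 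The proposal is correct and takes the paper's approach.
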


In short, the asymptotic test risk of \mamlfl\ matches that of
\ftfa\ (\Cref{thm:perfedavg}). In general, the
\mamlfl\ objective~\eqref{eqn:maml-obj-gen} is typically non-convex even
when $\riskhat_j$ is convex, making convergence subtle. Even ignoring
convexity, the inclusion of a derivative term in the objective can make the
standard smoothness conditions~\citep{Nesterov04} upon which convergence
analyses (and algorithms) repose fail to hold.  Additionally, computing
gradients of the \mamlfl objective~\eqref{eqn:maml-obj-gen} requires
potentially expensive second-order derivative computations or careful
approximations to these, making optimization more challenging and expensive
irrespective of convexity.  We provide more discussion in the
appendices. Theorem~\ref{thm:permaml} thus suggests that one might be
circumspect about choosing \mamlfl or similar algorithms over simpler
baselines that do not require such complexity in optimization.


\begin{remark}
  The algorithm \citet{FallahMoOz20} propose performs only a single
  stochastic gradient step for personalization, which is distinct from our
  analyzed procedure~\eqref{eqn:param-permaml}, as it is essentially
  equivalent to running SGM until convergence from the initialization
  $\hparamsolmaml{0}{M}{\alpha}$ (see step 2 of \Cref{alg:maml}).  We find
  two main justifications for this choice: first, experimental work of
  \cite{JiangKoRuKa19}, in addition to our own experiments (see
  Figures~\ref{fig:pers-femnist} and~\ref{fig:pers-cifar}) empirically
  suggest that the more (stochastic gradient) steps of personalization, the
  better performance we expect. Furthermore, as we mention above earlier,
  performing personalization SGM steps locally, in parallel, and
  asynchronously is no more expensive than running the first step of
  \Cref{alg:maml}.  This guaranteed of convergence also presents a fair
  point of comparison between the algorithms we consider.
\end{remark}

\subsection{Proximal-Regularized Approach}

Instead of a sequential, fine-tuning approach, an alternative approach to
personalization involves jointly optimizing global and
local parameters. In this vein,
\citet{DinhTrNg20} propose the pFedMe algorithm (whose details
we provide in the appendices) to
solve the following coupled optimization problem to find personalized models
for each client:
\begin{align*}
    & \left(\hparamsol{0}{P}{\lambda},\hparamsol{1}{P}{\lambda},\hdots,\hparamsol{m}{P}{\lambda}\right) = \\ 
    & \argmin_{\param_0, \param_1,\dots, \param_m} \summach p_j\left( \riskhat_i(\param_j) + \frac{\lambda}{2}\ltwo{\param_j - \param_0}^2\right).
\end{align*}
The proximal penalty encourages the local models $\param_i$ to be close to one another. In our linear model, for appropriately chosen hyperparameters and as the number of steps taken goes to infinity, the proposed optimization problem simplifies to
\begin{align}
  \label{eqn:param-prox}
  &
  \left(\hparamsol{0}{P}{\lambda},\hparamsol{1}{P}{\lambda},\hdots,\hparamsol{m}{P}{\lambda}\right) = \\
  &\argmin_{\param_0, \param_1,\dots, \param_m} \summach p_j \left(\frac{1}{2n_j}\ltwo{X_j \param_j - \by_j}^2 + \frac{\lambda}{2}\ltwo{\param_j - \param_0}^2\right),
  \nonumber
\end{align}
where $\hparamsol{0}{P}{\lambda}$ denotes the global model and
$\hparamsol{i}{P}{\lambda}$ denote the local models. We can again use our
asymptotic framework to analyze the test loss of this scheme. For
this result, we use an additional condition on
$\sup_{j\in[m]}\P(\lambda_{\max}(\scov_j) > R)$ that gives us uniform
control over the eigenvalues of all the users.
\begin{theorem}\label{thm:prox}
  Consider the observation model n \cref{sec:model} and the estimator
  $\hparamsol{i}{P}{\lambda}$ in \eqref{eqn:param-prox}. Let \Cref{ass:asym}
  hold, and let Assumptions~\ref{ass:client-growth} and~\ref{ass:cov} hold
  with $c = 2$ and the same $q > 2$. Additionally, assume that
  $\E[\opnorms{\scov_j^2}] \leq \tau_3 < \infty$. Further suppose that there
  exists $R \geq M$ such that $\limsup_{m \rightarrow \infty} \sup_{j \in
    [m]} \P(\lambda_{\max{}}(\scov_j) > R) \leq \frac{1}{16M^2
    \tau_3}$. Then for client $i$, the asymptotic prediction bias and
  variance of \pfedme\ are
  \begin{align*}
    \lim_{m \rightarrow\infty} B_i(\hparamsol{i}{P}{\lambda}|X)& \stackrel{p}= \lim_{m \rightarrow\infty} \lambda^2 \lsigi{(\scov_i + \lambda I)\inv (\param_0\opt - \param_i\opt)}^2 \\
    \lim_{m \rightarrow\infty} V_i(\hparamsol{i}{P}{\lambda}|X)&  \stackrel{p}= \lim_{m \rightarrow\infty} \frac{\sigma_i^2}{n_i} \tr(\Sigma_i\scov_i(\lambda I + \scov_i)^{-2}),
  \end{align*}
  See the appendices for exact expressions of these limits for general
  $\Sigma$. For the special case that $\Sigma_i = I$, the limits
  are
  \begin{align*}
    B_i(\hparamsol{i}{P}{\lambda}|X ) &\cp r_i^2\lambda^2 m_i'(-\lambda) \\
    V_i(\hparamsol{i}{P}{\lambda}|X) &\cp \sigma_i^2 \gamma (m_i(-\lambda) - \lambda m_i'(-\lambda)),
  \end{align*}
  where $m_i(z)$ is as in \Cref{thm:perridge}.  For each client
  $i\in[m]$, the minimizing $\lambda$ is $\lambda_i\opt =
  \sigma_i^2 \gamma_i/r_i^2$ and
  $\risk_i(\hparamsol{i}{P}{\lambda_i\opt}|X) \rightarrow \sigma_i^2 \gamma_i
  m_i(-\lambda_i\opt)$.
\end{theorem}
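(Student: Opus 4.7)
The plan is to (i) reduce the joint convex program~\eqref{eqn:param-prox} to an explicit ridge-regression formula for $\hparamsol{i}{P}{\lambda}$ anchored at the global iterate $\hparamsol{0}{P}{\lambda}$, (ii) prove $\hparamsol{0}{P}{\lambda}$ concentrates around $\param_0\opt$ as $m \to \infty$, and then (iii) reuse the ridge-regression bias/variance computation from Theorem~\ref{thm:perridge}. The KKT conditions of~\eqref{eqn:param-prox} give $\hparamsol{j}{P}{\lambda} = (\scov_j + \lambda I)^{-1}(\tfrac{1}{n_j} X_j^T \by_j + \lambda \hparamsol{0}{P}{\lambda})$ for $j \ge 1$ and $\hparamsol{0}{P}{\lambda} = \summach p_j \hparamsol{j}{P}{\lambda}$. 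Substituting and using $(\scov_j + \lambda I)^{-1}\scov_j = I - \lambda (\scov_j + \lambda I)^{-1}$ yields the linear system $(I - \lambda G)\hparamsol{0}{P}{\lambda} = \summach p_j (\scov_j + \lambda I)^{-1} \tfrac{1}{n_j} X_j^T \by_j$ with $G \coloneqq \summach p_j (\scov_j + \lambda I)^{-1}$.

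The main obstacle is proving $\hparamsol{0}{P}{\lambda} = \param_0\opt + o_p(1)$. Plugging in $\by_j = X_j \param_j\opt + \xi_j$, splitting $\param_j\opt = \param_0\opt + (\param_j\opt - \param_0\opt)$, and applying the same identity so that $\summach p_j (\scov_j + \lambda I)^{-1} \scov_j \param_0\opt = (I - \lambda G)\param_0\opt$ cancels, the equation reduces to
\begin{align*}
\hparamsol{0}{P}{\lambda} - \param_0\opt = (I-\lambda G)^{-1}\summach p_j (\scov_j+\lambda I)^{-1}\left[\scov_j(\param_j\opt - \param_0\opt) + \tfrac{1}{n_j}X_j^T\xi_j\right].
\end{align*}
Two things must be controlled. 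First, $(I-\lambda G)^{-1}$ must have bounded operator norm with high probability, which is nontrivial because each $\scov_j$ has rank $n_j < d$, so $(\scov_j + \lambda I)^{-1}$ has eigenvalue $1/\lambda$ on $\ker(\scov_j)$ and $G$ can approach $I/\lambda$. The tail hypothesis $\sup_j \P(\lambda_{\max}(\scov_j) > R) \le 1/(16 M^2 \tau_3)$, combined with $\E[\opnorm{\scov_j}^2] \le \tau_3$ and the lower eigenvalue control $\int s^{-1} d\mu(s;\Sigma_j) < M$ from Assumption~\ref{ass:cov}, is precisely what enables a truncation-plus-second-moment argument, applied uniformly in $j$, to show $\lambda\opnorm{G} \le 1 - c$ eventually and hence $\opnorm{(I-\lambda G)^{-1}}$ stays bounded. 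Second, the residual sum has client-wise independent, mean-zero summands (over $\xi_j$ and $\param_j\opt - \param_0\opt$), so its second moment is bounded by a $\summach p_j^2$-type quantity weighted by eigenvalue moments of $\scov_j$; Assumption~\ref{ass:client-growth}, applied as in the proof of Theorem~\ref{thm:perfedavg}, then drives this variance to zero.

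Once concentration is in hand, substituting into the closed form and using $(\scov_i + \lambda I)^{-1}\scov_i - I = -\lambda(\scov_i+\lambda I)^{-1}$ yields $\hparamsol{i}{P}{\lambda} - \param_i\opt = \lambda(\scov_i+\lambda I)^{-1}(\param_0\opt - \param_i\opt) + (\scov_i+\lambda I)^{-1}\tfrac{1}{n_i}X_i^T\xi_i + o_p(1)$. Conditioning on $X$ and $\param_i\opt$, the first term is deterministic and gives the stated bias $\lambda^2 \lsigi{(\scov_i + \lambda I)^{-1}(\param_0\opt - \param_i\opt)}^2$, while the noise term has covariance $(\sigma_i^2/n_i)(\scov_i+\lambda I)^{-1}\scov_i(\scov_i+\lambda I)^{-1}$ whose trace against $\Sigma_i$ produces the stated variance. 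For $\Sigma_i = I$, I would evaluate the two traces via the Marchenko-Pastur Stieltjes transform: $\frac{1}{d}\tr((\scov_i+\lambda I)^{-2}) \to m_i'(-\lambda)$ and, writing $\scov_i(\scov_i+\lambda I)^{-2} = (\scov_i+\lambda I)^{-1} - \lambda(\scov_i+\lambda I)^{-2}$, $\frac{1}{d}\tr(\scov_i(\scov_i+\lambda I)^{-2}) \to m_i(-\lambda) - \lambda m_i'(-\lambda)$; the spherical-prior concentration $\ltwo{(\scov_i+\lambda I)^{-1}(\param_0\opt - \param_i\opt)}^2 = r_i^2 \cdot \frac{1}{d}\tr((\scov_i+\lambda I)^{-2}) + o_p(1)$ handles the bias. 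This gives the claimed closed forms, and differentiating their sum in $\lambda$ yields $\lambda_i\opt = \sigma_i^2\gamma_i/r_i^2$ and $\risk_i \to \sigma_i^2 \gamma_i m_i(-\lambda_i\opt)$.
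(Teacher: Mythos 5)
Your proposal is correct and follows essentially the same route as the paper's proof: the same closed-form reduction with $Q = I - \lambda\sum_j p_j(\scov_j+\lambda I)^{-1}$, the same identification of $\opnorm{Q^{-1}}$ as the key obstacle resolved by a truncation/Paley--Zygmund argument from the tail hypothesis on $\lambda_{\max}(\scov_j)$ (the paper's Lemma~\ref{lem:prox-eig}), the same $\sum_j p_j^2$-type concentration of the residual under Assumption~\ref{ass:client-growth}, and the same reduction to the ridge limits of Theorem~\ref{thm:perridge}. The only difference is organizational: the paper expands the conditional variance into three explicit trace terms and kills the cross-client contributions one by one, whereas you absorb them into an $L^2$-small perturbation of $\hparamsol{0}{P}{\lambda}$; both are valid.
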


The asymptotic test loss of the proximal-regularized approach is thus
identical to the locally ridge-regularized (\rtfa) solution; see
\Cref{thm:perridge}. \citet{DinhTrNg20}'s algorithm to optimize
\cref{eqn:param-prox} is sensitive to hyperparameter choice, meaning
significant hyperparameter tuning may be needed for good performance and
even convergence of the method (of course, both methods do require tuning
$\lambda$). Moreover, a local update step in \pfedme requires approximately
solving a proximal-regularized optimization problem, as opposed to taking a
single stochastic gradient step. This can make \pfedme\ much more
computationally expensive depending on the properties of $\riskhat$. This is
not to dismiss more complex proximal-type algorithms, but only to say that,
at least in our analytical framework, simpler and embarassingly
parallelizable procedures (\rtfa in this case) may suffice to capture the
advantages of a proximal-regularized scheme.


 \section{Experiments}
\label{sec:expts}
\begin{figure*}[t]
\begin{minipage}[c]{0.3\linewidth}
\begin{overpic}[width=\linewidth]{
      		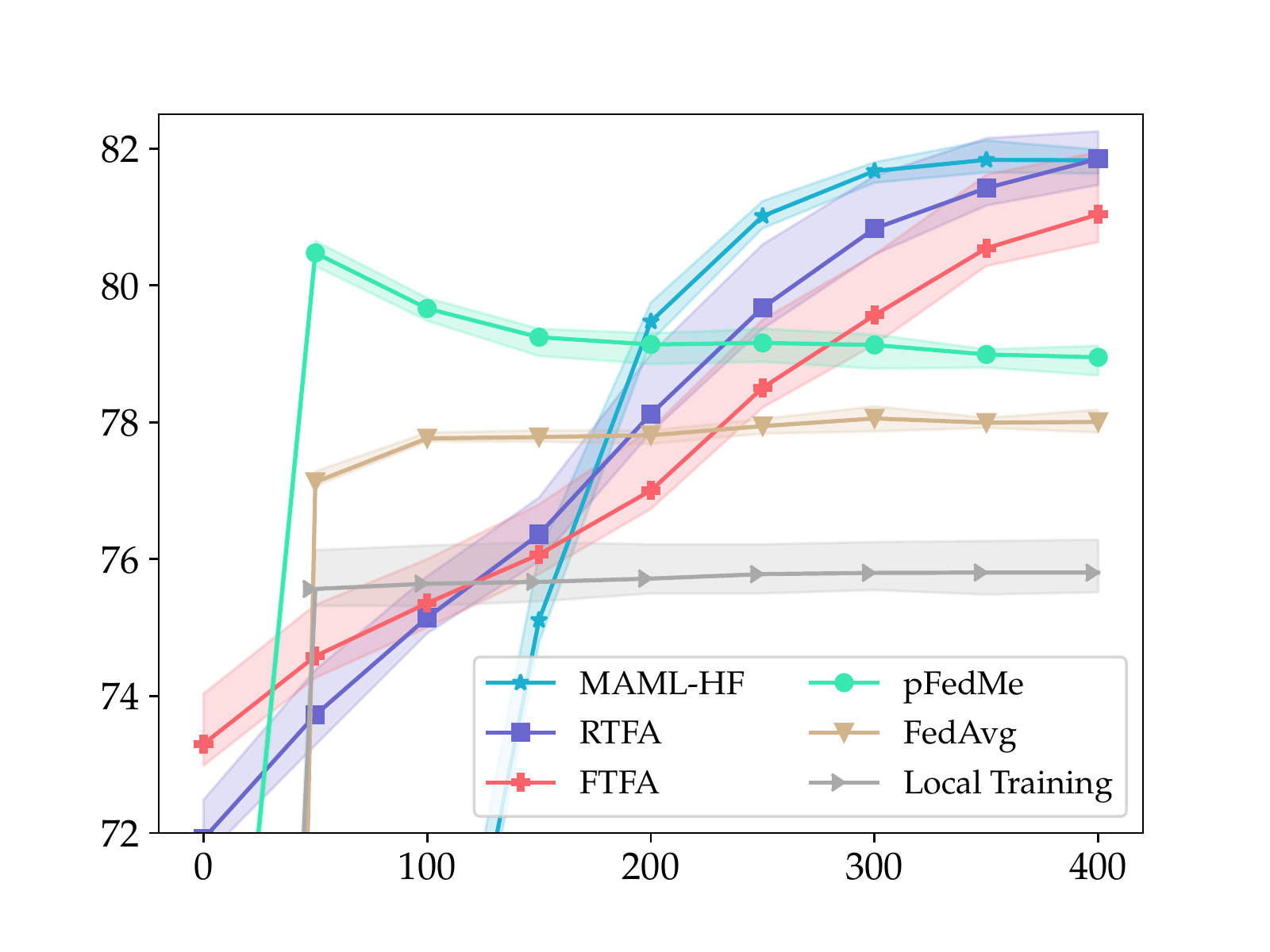}
      		\put(-2,24){
          \rotatebox{90}{{\scriptsize Test Accuracy}}}
        \put(25,0){{\scriptsize Communication Rounds}}
\end{overpic}\caption{\small CIFAR-100. Best-average-worst intervals created from different train-val splits.}
\label{fig:cifar}
\end{minipage}
\hfill
\begin{minipage}[c]{0.3\linewidth}
\begin{overpic}[width=\linewidth]{
      		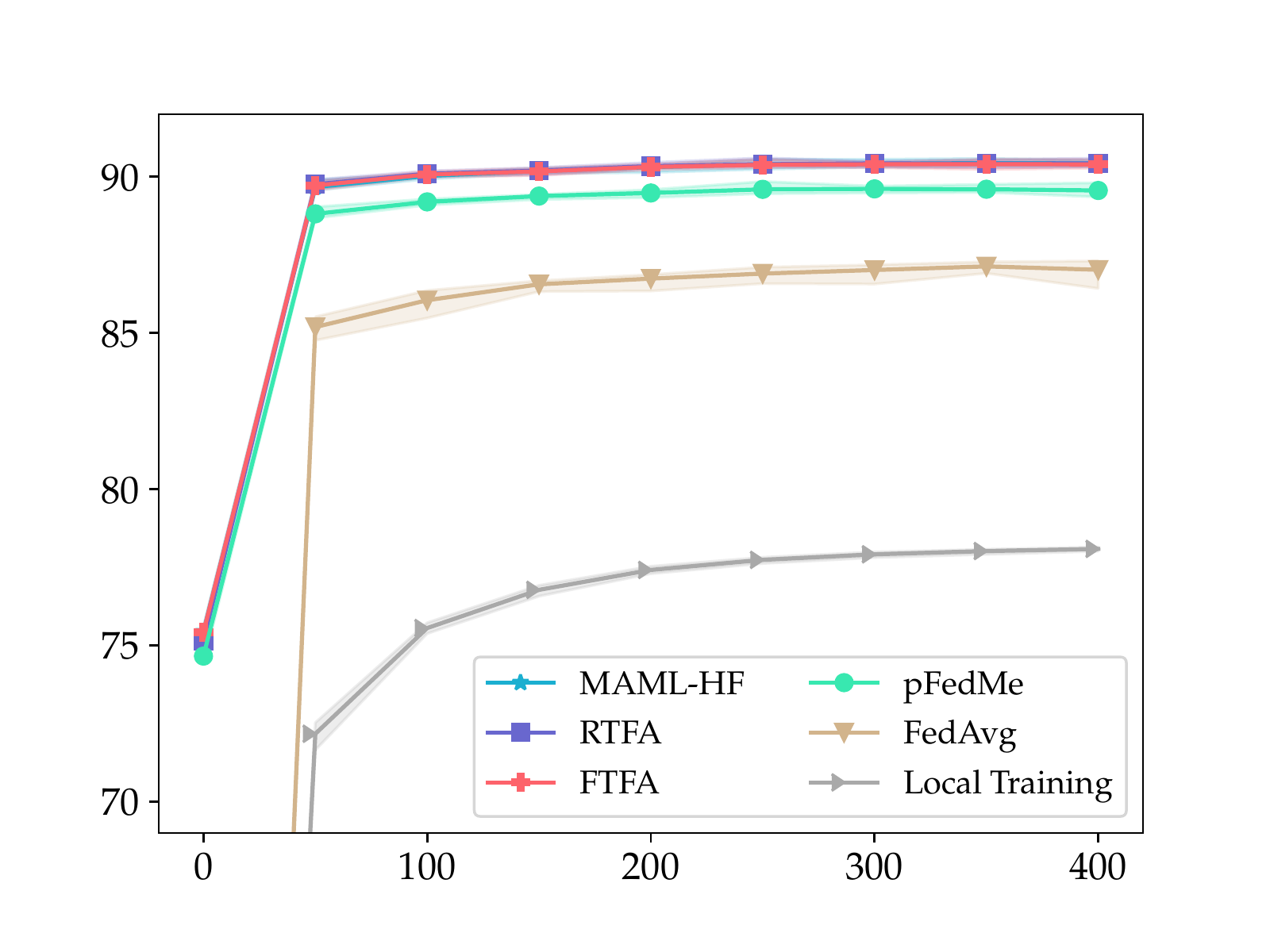}
      		\put(-2,24){
          \rotatebox{90}{{\scriptsize Test Accuracy}}}
        \put(25,0){{\scriptsize Communication Rounds}}
\end{overpic}\caption{EMNIST. Best-average-worst intervals created from different random seeds.}
\label{fig:fedemnist}
\end{minipage}%
\hfill
\begin{minipage}[c]{0.3\linewidth}
\begin{overpic}[width=\linewidth]{
      		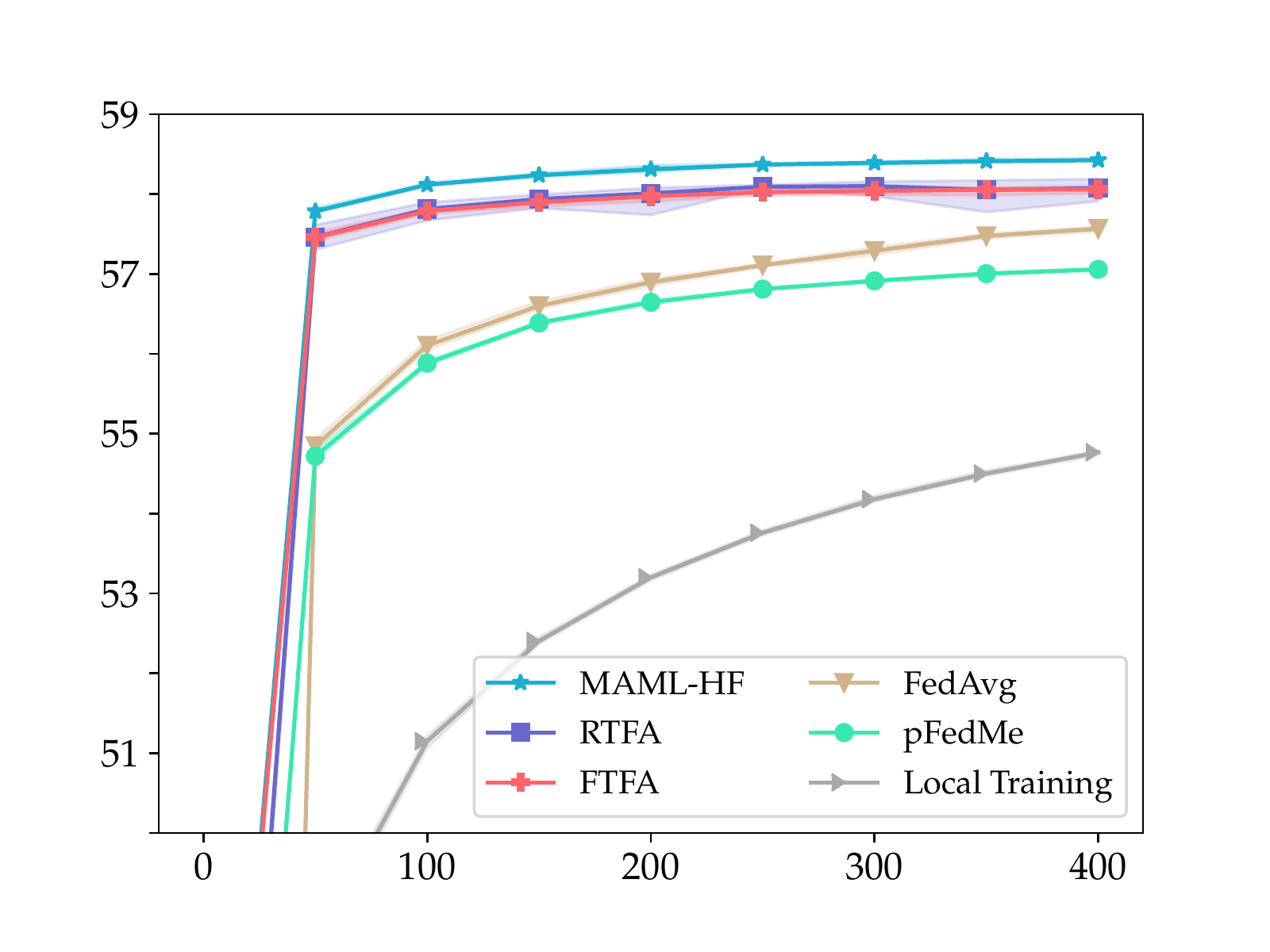}
      		\put(-2,24){
          \rotatebox{90}{{\scriptsize Test Accuracy}}}
        \put(25,0){{\scriptsize Communication Rounds}}
\end{overpic}\caption{Shakespeare. Best-average-worst intervals created from different random seeds.}
\label{fig:shakespeare}
\end{minipage}
\vfill
\begin{minipage}[c]{0.3\linewidth}
\begin{overpic}[width=\linewidth]{
      		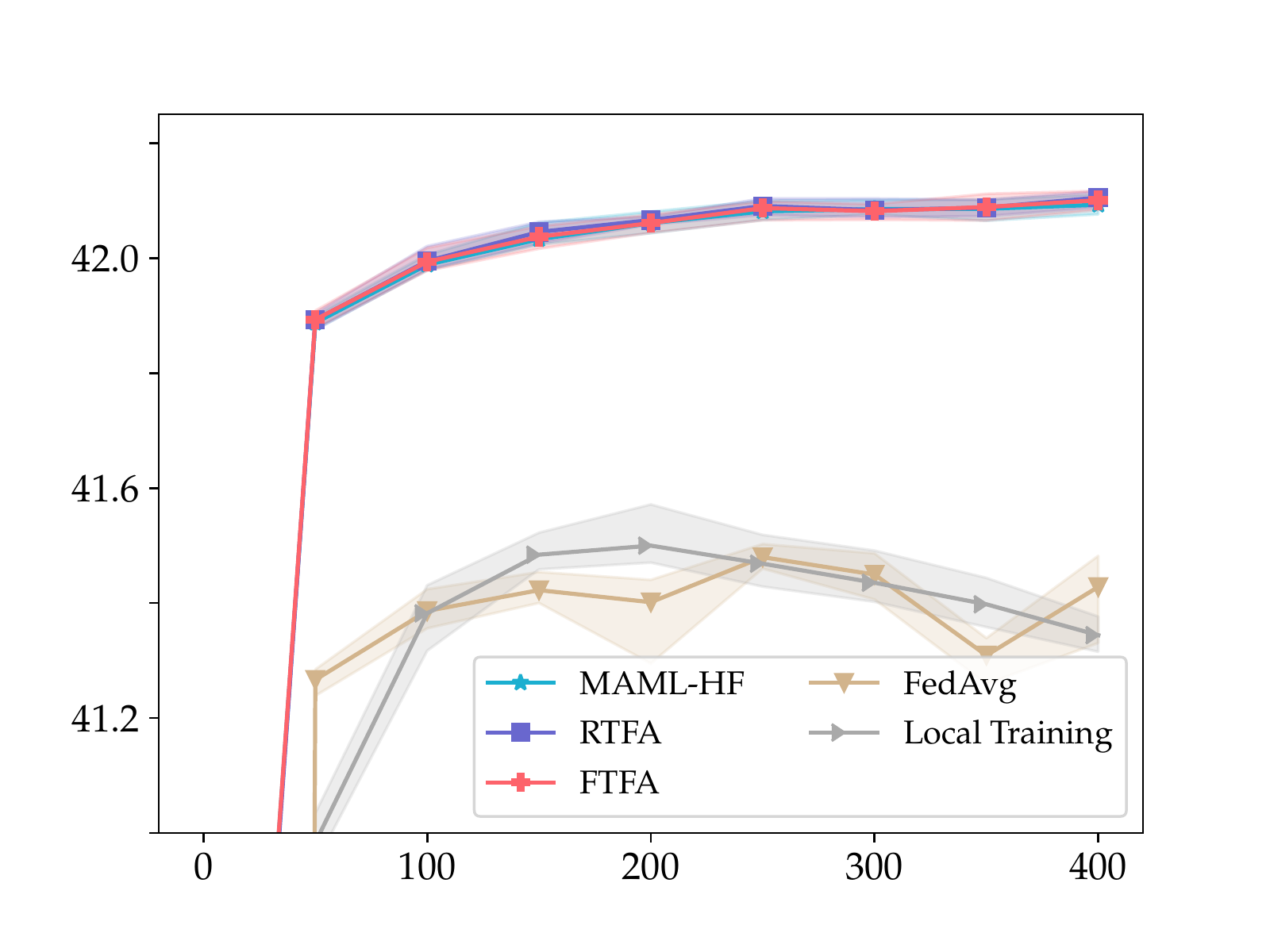}
      		\put(-2,24){
          \rotatebox{90}{{\scriptsize Test Accuracy}}}
        \put(25,0){{\scriptsize Communication Rounds}}
\end{overpic}\caption{Stack Overflow. Best-average-worst intervals created from different train-val splits.}
\label{fig:stackoverflow}
\end{minipage}%
\hfill
\begin{minipage}[c]{0.3\linewidth}
\begin{overpic}[width=\linewidth]{
      		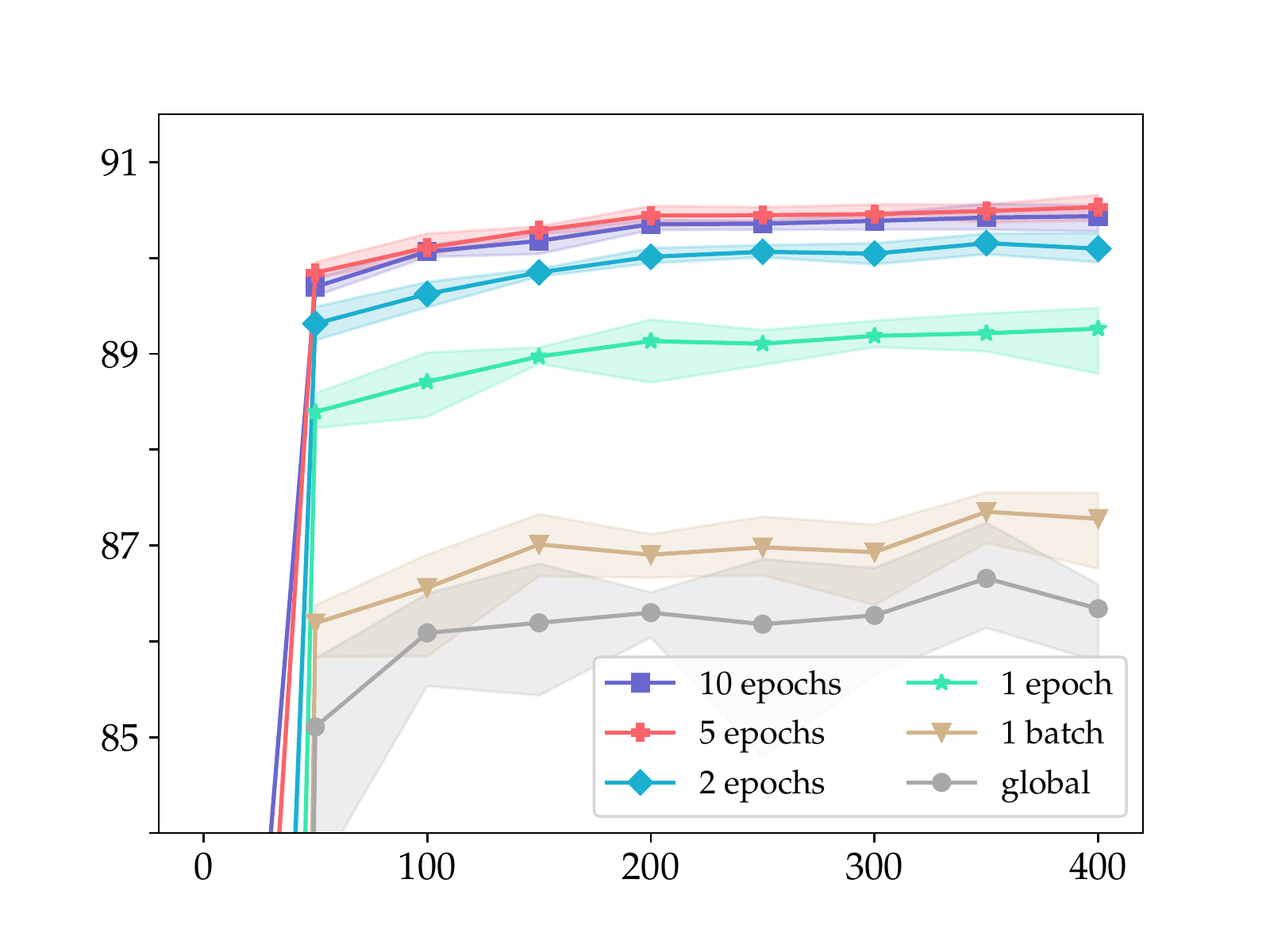}
      		\put(-2,24){
          \rotatebox{90}{{\scriptsize Test Accuracy}}}
        \put(25,0){{\scriptsize Communication Rounds}}
\end{overpic}\caption{EMNIST. Gains of personalization for \ftfa}
\label{fig:pers-femnist}
\end{minipage}%
\hfill
\begin{minipage}[c]{0.3\linewidth}
\begin{overpic}[width=\linewidth]{
      		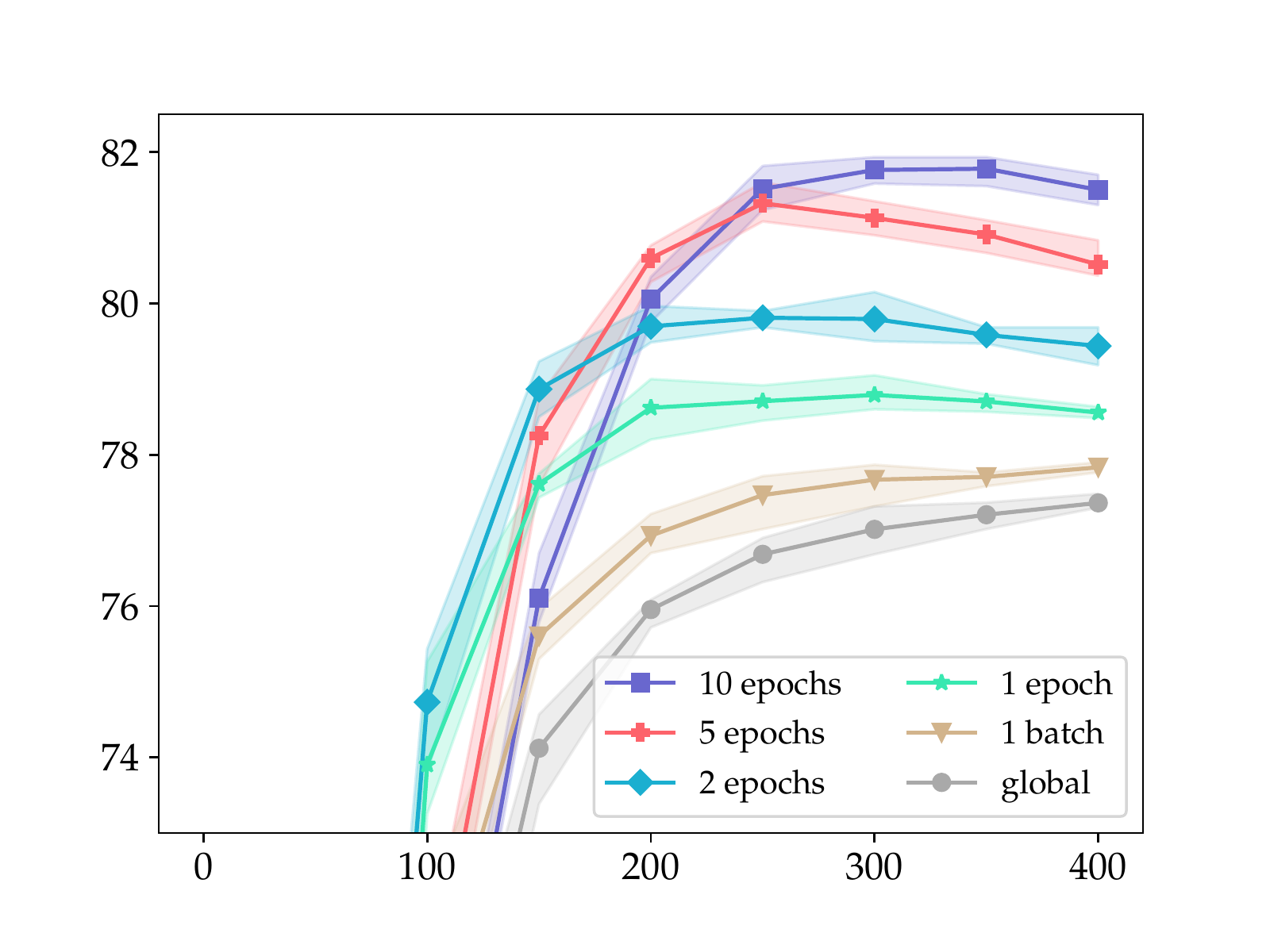}
      		\put(-2,24){
          \rotatebox{90}{{\scriptsize Test Accuracy}}}
        \put(25,0){{\scriptsize Communication Rounds}}
\end{overpic}\caption{CIFAR. Gains of personalization for Hessian free \mamlfl}
\label{fig:pers-cifar}
\end{minipage}
\end{figure*}

While the statistical model we assume in our analytical sections is
stylized and certainly will not fully hold, it suggests
some guidance in practice, and make precise predictions about
the error rates of different methods: that the simpler
fine-tuning methods should exhibit performance comparable to more complex
federated methods, such as \mamlfl and \pfedme. With this in mind, we turn to
several datasets, performing
experiments on federated versions of the Shakespeare
\cite{McMahanMoRaHaAr17}, CIFAR-100 \cite{KrizhevskyHi09}, EMNIST
\cite{CohenAsTaSc17}, and Stack Overflow \cite{TensorflowfederatedSO19}
datasets; dataset statistics and details of how we divide the data to make
effective ``users'' are in \Cref{app:expts}.  For each dataset, we compare
the performance of the following algorithms: Zero Communication (Local
Training), Zero Personalization (FedAvg), \ftfa, \rtfa, \mamlfl, and \pfedme
\cite{DinhTrNg20}. For each classification task, we use each federated
learning algorithm to
train the last layer of a pre-trained neural network.
We run each algorithm for 400 communication rounds, and we compute the test
accuracy (the fraction of correctly classified test data points across
machines) every 50 communication rounds. \ftfa, \rtfa, and \mamlfl\ each
perform 10 epochs of local training for each client before the evaluation of
test accuracy. For each client, \pfedme\ uses the local models to compute
test accuracy.  We first hyperparameter tune each method using training and
validation splits; again, see Appendix~\ref{app:expts} for details. We track
the test accuracy of each tuned method over 11 trials using two different
kinds of randomness:
\begin{enumerate}[leftmargin=0.5cm]
\item \textit{Different seeds}: We run each hyperparameter-tuned method on
  11 different seeds. This captures how different initializations and
  batching affect accuracy.
\item \textit{Different training-validation splits}: We generate 11
  different training / validation splits (same test data) and run each
  hyperparameter-tuned method on each split. This captures how variations in
  user data affect test accuracy.
\end{enumerate}

\paragraph{Experimental setting}
Our experiments are ``semi-synthetic'' in that in each, we re-fit the top
layer of a pre-trained neural network. While this differs from some practice
with experimental work in federated learning, several considerations
motivate our choices to take this tack, and we contend they may be valuable
for other researchers: (i) our (distributed) models are convex, that is, can
be fit via convex optimization. In the context of real-world engineering
problems, it is important to know when a method has converged and, if it
does not, why it has not; in this vein, non-convexity can be a bugaboo, as
it hides the causes of divergent algorithms---is it non-convexity and poor
optimization or engineering issues (e.g.\ communication bugs)? This choice
thus can be valuable even in real, large-scale systems. (ii) In our
experiments, we achieve state-of-the-art or near state-of-the-art results;
using federated approaches to fit full deep models appears to lead to
substantial degradation in performance over a single centralized,
pre-trained model (see, e.g., \citet[Table 1]{ReddiChZaGaRuKoKuMc21}, where
accuracies on CIFAR-10 using a ResNet 18 are at best 78\%, substantially
lower than current state-of-the-art). A question whose answer we do not
know: if a federated learning method provides worse performance than a
downloadable model, what does the FL method's performance tell us about good
methodologies in federated learning?  (iii) Finally, computing with
large-scale distributed models is computationally
expensive: the energy use for fitting large distributed models is substantial
and may be a poor use of resources~\cite{StrubellGaMc19}. In effort to
better approximate the use of a pre-trained model in real federated learning
applications, we use held-out data to pre-train a preliminary network in our Stack Overflow
experiments, doing the experimental training and validation on an
independent dataset.

\vspace{-.2cm}

\paragraph{Results} \Cref{fig:cifar,fig:fedemnist,fig:shakespeare,fig:stackoverflow} plot test accuracy against communication rounds. 
The performance of \mamlfl\ is similar to that of \ftfa\ and \rtfa, and on
the Stack Overflow and EMNIST datasets, where the total dataset size is much
larger than the other datasets, the accuracies of \mamlfl, \ftfa\ and
\rtfa\ are nearly identical. This is consistent with our theoretical claims.
The performances of the naive, zero communication and zero personalization
algorithms are worse than that of \ftfa, \rtfa\ and \mamlfl\ in all
figures. This is also consistent with our theoretical claims.  The
performance of \pfedme\ in \Cref{fig:cifar,fig:fedemnist,fig:shakespeare} is
worse than that of \ftfa, \rtfa\ and \mamlfl.

In \Cref{fig:pers-femnist,fig:pers-cifar}, we plot the test accuracy of
\ftfa\ and \mamlfl\ and vary the number of personalization steps each
algorithm takes. In both plots, the global model performs the worst, and
performance improves monotonically as we increase the number of
personalization steps. As personalization steps are cheap relative to the
centralized training procedure, this suggests benefits for clients to
locally train to convergence.

\newpage

 \small

\bibliography{bib, opl-bib}

\newcommand{\etalchar}[1]{$^{#1}$}
\begin{thebibliography}{WMK{\etalchar{+}}19}

\bibitem[BY93]{BaiYi93}
ZD~Bai and YQ~Yin.
\newblock Limit of the smallest eigenvalue of a large dimensional sample
  covariance matrix.
\newblock {\em Annals of Probability}, 21(3):1275--1294, 1993.

\bibitem[Car97]{Caruana97}
R.~Caruana.
\newblock Multitask learning.
\newblock {\em Machine Learning}, 28(1):41--75, 1997.

\bibitem[CATvS17]{CohenAsTaSc17}
Gregory Cohen, Saeed Afshar, Jonathan Tapson, and Andre van Schaik.
\newblock {EMNIST}: Extending {MNIST} to handwritten letters.
\newblock In {\em 2017 International Joint Conference on Neural Networks
  ({IJCNN})}. {IEEE}, 2017.

\bibitem[CGT12]{ChenGiTr12}
Richard Chen, Alex Gittens, and Joel~A. Tropp.
\newblock The masked sample covariance estimator: an analysis using matrix
  concentration inequalities.
\newblock {\em Information and Inference}, to appear, 2012.

\bibitem[CRS{\etalchar{+}}19]{CarmonRaScLiDu19}
Yair Carmon, Aditi Raghunathan, Ludwig Schmidt, Percy Liang, and John Duchi.
\newblock Unlabeled data improves adversarial robustness.
\newblock In {\em Advances in Neural Information Processing Systems 32}, 2019.

\bibitem[CZLS21]{ChenZhLoSu21}
Shuxiao Chen, Qinqing Zheng, Qi~Long, and Weijie~J. Su.
\newblock A theorem of the alternative for personalized federated learning.
\newblock {\em arXiv:2103.01901 [stat.ML]}, 2021.

\bibitem[dlPnG99]{delaPenaGi99}
Victor~H. de~la Pe\~na and Evarist Gin\'e.
\newblock {\em Decoupling: From Dependence to Independence}.
\newblock Springer, 1999.

\bibitem[DTN20]{DinhTrNg20}
Canh Dinh, Nguyen Tran, and Tuan~Dung Nguyen.
\newblock Personalized federated learning with moreau envelopes.
\newblock {\em arXiv:2006.08848 [cs.LG]}, 2020.

\bibitem[FAL17]{FinnAbLe17}
Chelsea Finn, Pieter Abbeel, and Sergey Levine.
\newblock Model-agnostic meta-learning for fast adaptation of deep networks.
\newblock In {\em Proceedings of the 34th International Conference on Machine
  Learning}, 2017.

\bibitem[Fel19]{Feldman19}
Vitaly Feldman.
\newblock Does learning require memorization? a short tale about a long tail.
\newblock {\em arXiv:1906.05271 [cs.LG]}, 2019.

\bibitem[FMO20]{FallahMoOz20}
Alireza Fallah, Aryan Mokhtari, and Asuman Ozdaglar.
\newblock Personalized federated learning with theoretical guarantees: A
  model-agnostic meta-learning approach.
\newblock In {\em Advances in Neural Information Processing Systems 33}, 2020.

\bibitem[GLSS18]{GunasekarLeSoSr18}
Suriya Gunasekar, Jason Lee, Daniel Soudry, and Nathan Srebro.
\newblock Characterizing implicit bias in terms of optimization geometry.
\newblock In {\em Proceedings of the 35th International Conference on Machine
  Learning}, 2018.

\bibitem[HLS{\etalchar{+}}20]{HeLiSoWaWaVeSi20}
Chaoyang He, Songze Li, Jinhyun So, Mi~Zhang, Hongyi Wang, Xiaoyang Wang,
  Praneeth Vepakomma, Abhishek Singh, Hang Qiu, Li~Shen, Peilin Zhao, Yan Kang,
  Yang Liu, Ramesh Raskar, Qiang Yang, Murali Annavaram, and Salman Avestimehr.
\newblock Fed{ML}: A research library and benchmark for federated machine
  learning.
\newblock {\em arXiv:2007.13518 [cs.LG]}, 2020.

\bibitem[HM19]{HaddadpourMa19}
Farzin Haddadpour and Mehrdad Mahdavi.
\newblock On the convergence of local descent methods in federated learning.
\newblock {\em arXiv:1910.14425 [cs.LG]}, 2019.

\bibitem[HMRT19]{HastieMoRoTi19}
Trevor Hastie, Andrea Montanari, Saharon Rosset, and Ryan Tibshirani.
\newblock Surprises in high-dimensional ridgeless linear least squares
  interpolation.
\newblock {\em arXiv:1903.08560 [math.ST]}, 2019.

\bibitem[HR18]{HowardRu18}
Jeremy Howard and Sebastian Ruder.
\newblock Universal language model fine-tuning for text classification.
\newblock {\em arXiv:1801.06146 [cs.LG]}, 2018.

\bibitem[HTF09]{HastieTiFr09}
Trevor Hastie, Robert Tibshirani, and Jerome Friedman.
\newblock {\em The Elements of Statistical Learning}.
\newblock Springer, second edition, 2009.

\bibitem[HZRS16]{HeZhReSu16}
Kaiming He, Xiangyu Zhang, Shaoqing Ren, and Jian Sun.
\newblock Deep residual learning for image recognition.
\newblock In {\em Proceedings of the IEEE Conference on Computer Vision and
  Pattern Recognition}, pages 770--778, 2016.

\bibitem[JKRK19]{JiangKoRuKa19}
Yihan Jiang, Jakub Konečný, Keith Rush, and Sreeram Kannan.
\newblock Improving federated learning personalization via model agnostic meta
  learning.
\newblock {\em arXiv:1909.12488 [cs.LG]}, 2019.

\bibitem[KH09]{KrizhevskyHi09}
Alex Krizhevsky and Geoffrey Hinton.
\newblock Learning multiple layers of features from tiny images.
\newblock Technical report, University of Toronto, 2009.

\bibitem[KKM{\etalchar{+}}20]{KarimireddyKaMoReStSu20}
Sai~Praneeth Karimireddy, Satyen Kale, Mehryar Mohri, Sashank Reddi, Sebastian
  Stich, and Ananda~Theertha Suresh.
\newblock {SCAFFOLD}: Stochastic controlled averaging for federated learning.
\newblock In {\em Proceedings of the 37th International Conference on Machine
  Learning}, 2020.

\bibitem[KMA{\etalchar{+}}19]{Kairouz2019advances}
Peter Kairouz, H~Brendan McMahan, Brendan Avent, Aur{\'e}lien Bellet, Mehdi
  Bennis, Arjun~Nitin Bhagoji, Keith Bonawitz, Zachary Charles, Graham Cormode,
  Rachel Cummings, et~al.
\newblock Advances and open problems in federated learning.
\newblock {\em arXiv:1912.04977 [cs.LG]}, 2019.

\bibitem[LHBS21]{LiHuBeSm21}
Tian Li, Shengyuan Hu, Ahmad Beirami, and Virginia Smith.
\newblock Ditto: Fair and robust federated learning through personalization.
\newblock In {\em ICML}, 2021.

\bibitem[LSZ{\etalchar{+}}20]{LiSaZaSaTaSm20}
Tian Li, Anit~Kumar Sahu, Manzil Zaheer, Maziar Sanjabi, Ameet Talwalkar, and
  Virginia Smith.
\newblock Federated optimization in heterogeneous networks.
\newblock In {\em Proceedings of Machine Learning and Systems}, volume~2, pages
  429--450, 2020.

\bibitem[MMR{\etalchar{+}}17]{McMahanMoRaHaAr17}
H.~Brendan McMahan, Eider Moore, Daniel Ramage, Seth Hampson, and
  Blaise~Ag\"{u}era y~Arcas.
\newblock Communication-efficient learning of deep networks from decentralized
  data.
\newblock In {\em Proceedings of the 20th International Conference on
  Artificial Intelligence and Statistics}, 2017.

\bibitem[MMRS20]{MansourMoRoSu20}
Yishay Mansour, Mehryar Mohri, Jae Ro, and Ananda~Theertha Suresh.
\newblock Three approaches for personalization with applications to federated
  learning.
\newblock {\em arXiv:2002.10619 [cs.LG]}, 2020.

\bibitem[MRR{\etalchar{+}}19]{TensorflowfederatedSO19}
Brendan McMahan, Keith Rush, Michael Reneer, Zachary Garrett, and {TensorFlow
  Federated Team}.
\newblock Tensorflow federated stack overflow dataset.
\newblock
  \url{https://www.tensorflow.org/federated/api_docs/python/tff/simulation/datasets/stackoverflow/load_data},
  2019.

\bibitem[MSS19]{MohriSiSu19}
Mehryar Mohri, Gary Sivek, and Ananda~Theertha Suresh.
\newblock Agnostic federated learning.
\newblock In {\em Proceedings of the 36th International Conference on Machine
  Learning}, 2019.

\bibitem[Nes04]{Nesterov04}
Y.~Nesterov.
\newblock {\em Introductory Lectures on Convex Optimization}.
\newblock Kluwer Academic Publishers, 2004.

\bibitem[PY09]{PanYa09}
Sinno~Jialin Pan and Qiang Yang.
\newblock A survey on transfer learning.
\newblock {\em IEEE Transactions on Knowledge and Data Engineering},
  22(10):1345--1359, 2009.

\bibitem[RCZ{\etalchar{+}}21]{ReddiChZaGaRuKoKuMc21}
Sashank~J. Reddi, Zachary Charles, Manzil Zaheer, Zachary Garrett, Keith Rush,
  Jakub Kone{\v{c}}n{\'y}, Sanjiv Kumar, and Hugh~Brendan McMahan.
\newblock Adaptive federated optimization.
\newblock In {\em Proceedings of the Ninth International Conference on Learning
  Representations}, 2021.

\bibitem[SCST17]{SmithChiSaTa17}
Virginia Smith, Chao-Kai Chiang, Maziar Sanjabi, and Ameet~S Talwalkar.
\newblock Federated multi-task learning.
\newblock In {\em Advances in Neural Information Processing Systems 17}, 2017.

\bibitem[SGM19]{StrubellGaMc19}
Emma Strubell, Ananya Ganesh, and Andrew McCallum.
\newblock Energy and policy considerations for deep learning in {NLP}.
\newblock In {\em Proceedings of the 57th Annual Meeting of the Association for
  Computational Linguistics (ACL)}, 2019.

\bibitem[SS90]{StewartSu90}
G.~W. Stewart and Ji-Guang Sun.
\newblock {\em Matrix Perturbation Theory}.
\newblock Academic Press, 1990.

\bibitem[Vap92]{Vapnik92}
V.~Vapnik.
\newblock Principles of risk minimization for learning theory.
\newblock In John~E. Moody, Steve~J. Hanson, and Richard~P. Lippmann, editors,
  {\em Advances in Neural Information Processing Systems 4}, pages 831--838.
  Morgan Kaufmann, 1992.

\bibitem[Vap95]{Vapnik95}
V.N. Vapnik.
\newblock {\em The Nature of Statistical Learning Theory}.
\newblock Springer, 1995.

\bibitem[wei20]{weiaicunzai20}
weiaicunzai.
\newblock Pytorch-cifar100.
\newblock \url{https://github.com/weiaicunzai/pytorch-cifar100}, 2020.

\bibitem[WMK{\etalchar{+}}19]{WangMaKiEiBeRa19}
Kangkang Wang, Rajiv Mathews, Chlo{\'e} Kiddon, Hubert Eichner, Fran{\c{c}}oise
  Beaufays, and Daniel Ramage.
\newblock Federated evaluation of on-device personalization.
\newblock {\em arXiv:1910.10252 [cs.LG]}, 2019.

\bibitem[YBS20]{YuBaSh20}
Tao Yu, Eugene Bagdasaryan, and Vitaly Shmatikov.
\newblock Salvaging federated learning by local adaptation.
\newblock {\em arXiv:2002.04758 [cs.LG]}, 2020.

\bibitem[ZMM{\etalchar{+}}20]{ZecMoSuLeGi20}
Edvin~Listo Zec, Olof Mogren, John Martinsson, Leon~Ren{\'e} S{\"u}tfeld, and
  Daniel Gillblad.
\newblock Specialized federated learning using a mixture of experts.
\newblock {\em arXiv:2010.02056 [cs.LG]}, 2020.

\end{thebibliography}
\bibliographystyle{alpha}

\normalsize



\newpage
 \newpage
\section{Proofs}
\label{app:proofs}
\subsection{Additional Notation}
To simplify notation, we define some aggregated parameters, $X_i := [\bx_{i, 1}, \ldots, \bx_{i, n_i}]^T \in \R^{n_i \times d} $, $\by_i = [y_{i, 1}, \ldots, y_{i, n_i}]^T\in \R^{n_i}$, $X := [X_1^T, \ldots, X_m^T]^T \in \R^{N \times d}$, and $\by := [\by_1^T, \ldots, \by_m^T]^T\in \R^{N}$. Additionally, we define $\scov_i := X_i^TX_i / n_i \in \R^{d \times d}$. We use the notation $a \lesssim b$ to denote $a \le K b$ for some absolute constant $K$. 
\subsection{Useful Lemmas}

\begin{lemma}\label{lem:vec-khintchine}
 Let $\bx_j$ be vectors in $\R^d$ and let $\zeta_j$ be Rademacher $(\pm1)$ random variables. Then, we have
 \[
 \E\left[\ltwo{\summach \zeta_j \bx_j}^p\right]^{1/p} \le \sqrt{p-1}\left(\summach \ltwo{\bx_j}^2\right)^{1/2},
 \]
 where the expectation is over the Rademacher random variables.
\end{lemma}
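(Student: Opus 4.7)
\textbf{Proof plan for Lemma \ref{lem:vec-khintchine}.} The plan is to reduce this vector-valued Khintchine inequality to the scalar Khintchine inequality (with Haagerup's sharp constant) by decomposing the Euclidean norm coordinate-wise and commuting sum and expectation via Minkowski's inequality. Throughout I assume $p \ge 2$, which is the regime of interest (for $p \in [1,2)$ the statement follows from Jensen applied to the $p = 2$ case, so the interesting range is $p \ge 2$).

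First I would write $\bx_j = (x_{j,1}, \ldots, x_{j,d})$ and expand
\begin{equation*}
    \Bignorm{\summach \zeta_j \bx_j}_2^2 = \sum_{k=1}^d \Bigl(\summach \zeta_j x_{j,k}\Bigr)^2.
\end{equation*}
Raising both sides to the $p/2$ power and taking expectations gives
\begin{equation*}
    \E\Bigl[\Bignorm{\summach \zeta_j \bx_j}_2^p\Bigr]^{2/p} = \Bigl(\E\Bigl[\Bigl(\sum_{k=1}^d A_k\Bigr)^{p/2}\Bigr]\Bigr)^{2/p}, \qquad A_k \defeq \Bigl(\summach \zeta_j x_{j,k}\Bigr)^2 .
\end{equation*}
Since $p/2 \ge 1$, I can apply Minkowski's inequality in $L^{p/2}$ (treating $A_k$ as a nonnegative random variable for each coordinate $k$) to bound this by $\sum_{k=1}^d \E[A_k^{p/2}]^{2/p}$.

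Next I would invoke the scalar Khintchine inequality with Haagerup's sharp constant: for any scalars $a_1,\ldots,a_m$ and $p \ge 2$,
\begin{equation*}
    \E\Bigl[\Bigabs{\summach \zeta_j a_j}^p\Bigr]^{1/p} \le \sqrt{p-1}\Bigl(\summach a_j^2\Bigr)^{1/2}.
\end{equation*}
Applying this with $a_j = x_{j,k}$ yields $\E[A_k^{p/2}]^{2/p} = \E[|\sum_j \zeta_j x_{j,k}|^p]^{2/p} \le (p-1) \sum_j x_{j,k}^2$. Summing over $k$ and interchanging the order of summation gives
\begin{equation*}
    \E\Bigl[\Bignorm{\summach \zeta_j \bx_j}_2^p\Bigr]^{2/p} \le (p-1) \sum_{k=1}^d \summach x_{j,k}^2 = (p-1) \summach \ltwo{\bx_j}^2,
\end{equation*}
and taking square roots finishes the proof.

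The main (nontrivial) input here is the sharp scalar Khintchine constant $\sqrt{p-1}$; that is a classical result and I would simply cite it. The only analytical step is ensuring $p \ge 2$ so that Minkowski applies in $L^{p/2}$; for $p \in [1,2)$ one gets the stronger bound $(\sum_j \ltwos{\bx_j}^2)^{1/2}$ directly from Jensen applied to $p = 2$, and $\sqrt{p-1} \le 1$ makes the stated inequality trivially weaker in that range (or one may restrict to $p \ge 2$, which suffices for the subsequent applications in the paper).
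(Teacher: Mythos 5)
Your proof is correct for $p \ge 2$ and reaches the same constant, but by a genuinely different route from the paper. The paper invokes Theorem 1.3.1 of de la Peña and Giné directly --- a vector-valued (Banach-space-valued) moment-comparison inequality $\E[\|\sum_j \zeta_j \bx_j\|^p]^{1/p} \le \sqrt{p-1}\,\E[\|\sum_j \zeta_j \bx_j\|^2]^{1/2}$ for Rademacher sums --- and then evaluates the $L^2$ norm exactly by orthogonality of the $\zeta_j$. You instead reduce to the scalar case: expand the squared Euclidean norm coordinate-wise, use Minkowski's inequality in $L^{p/2}$ to pull the coordinate sum outside the expectation, and apply the scalar Khintchine inequality with constant $\sqrt{p-1}$ to each coordinate. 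Both are standard and yield the same bound; the paper's route is shorter given the citation and works verbatim in any Hilbert (or Banach) space, while yours is more self-contained modulo the classical scalar inequality and makes explicit where $p \ge 2$ is needed (for Minkowski in $L^{p/2}$). One small slip in your closing aside: for $p \in [1,2)$ the stated inequality is not ``trivially weaker'' than the Jensen bound --- since $\sqrt{p-1} < 1$ it is strictly \emph{stronger}, and indeed false at $p = 1$ --- so the lemma should genuinely be read as requiring $p \ge 2$. This is harmless here, as the paper only applies it with $p = q > 2$.
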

\begin{proof}
Using Theorem 1.3.1 of \cite{delaPenaGi99}, we have 
\begin{align*}
\E\left[\ltwo{\summach \zeta_j \bx_j}^p\right]^{1/p} 
&\le \sqrt{p-1}\E\left[ \ltwo{\summach\zeta_j\bx_j}^2\right]^{1/2} \\
&  =  \sqrt{p-1}\E\left[ \sum_{i,j = 1}^{m}\langle\zeta_j\zeta_i\bx_j^T\bx_i\rangle\right]\\
& = \sqrt{p-1}\left(\summach \ltwo{\bx_j}^2\right)^{1/2}
\end{align*}
\end{proof}

\begin{lemma}
\label{lem:scov_moment}
For all clients $j \in [m]$, let the data $\bx_{j,k} \in \R^d$ for $k\in [n]$ be such that $\bx_{j, k} = \Sigma_j^{1/2} \bz_{j, k}$ for some $\Sigma_j$, $\bz_{j,k}$, and $p > 2$ that satisfy \cref{ass:cov}.
Let $(\bx_{j,k})_l \in \R$ denote the $l \in[d]$ entry of the vector $\bx_{j,k} \in \R^d$. 
Define $\scov_j = \frac{1}{n_j}\sum_{k \in [n_j]} {\bx_{j,k} \bx_{j,k}^T}$. 
Then, we have
\begin{align}
    \nonumber \E\left[\opnorm{\scov_j}^{p}\right] \le K (e \log{d})^{p}n_j,
\end{align}
where the inequality holds up to constant factors for sufficiently large $m$.
\end{lemma}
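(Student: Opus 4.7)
The approach is to control $\opnorm{\scov_j} = \opnorm{X_j}^2/n_j$ by bounding $\opnorm{X_j}$ through a covering argument on $S^{d-1}$, then use the $2q$-th moment hypothesis on $\bz$ to extract the logarithmic-in-$d$ factor.

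First, factor $X_j = Z_j\Sigma_j^{1/2}$, where $Z_j\in\R^{n_j\times d}$ has i.i.d.\ entries with $\E[z^2]=1$ and $\E[|z|^{2q}]\le\moment{q}$. Since $\opnorm{X_j}\le\sqrt{\covop}\,\opnorm{Z_j}$, it suffices to bound $\E[\opnorm{Z_j}^{2p}]$. Take a $\tfrac14$-net $\mathcal{N}$ of $S^{d-1}$ with $|\mathcal{N}|\le 9^d$; the standard covering argument yields $\opnorm{Z_j}\le 2\sup_{v\in\mathcal{N}}\|Z_jv\|$. For each fixed $v\in S^{d-1}$, the vector $Z_jv$ has i.i.d.\ entries $\langle z_{j,k},v\rangle$ with variance $\|v\|^2 = 1$ and bounded $2q$-th moment (a scalar Marcinkiewicz-Zygmund bound applied to $\langle z,v\rangle$ with the coordinate moments), so $\|Z_jv\|^2 = \sum_{k=1}^{n_j}\langle z_{j,k},v\rangle^2$ is a sum of $n_j$ i.i.d.\ nonnegative variables with bounded $q$-th moment. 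A Rosenthal-type inequality then gives $\E[\|Z_jv\|^{2p}]^{1/p}\lesssim n_j$ for $p$ not exceeding $q$.

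Second, take the supremum over $\mathcal{N}$ via Markov at a tunable moment level $s$: $\P(\sup_{v\in\mathcal{N}}\|Z_jv\|^2 > t)\le 9^d\cdot\E[\|Z_jv\|^{2s}]/t^s$. Integrate this tail to obtain a $p$-th moment bound and choose $s\asymp\log d$ to balance the $9^d$ net cardinality against the per-direction moment. This contributes an additional $(\log d)^p$-type factor on top of the per-direction rate. Collecting the factors and dividing by $n_j^p$ yields $\E[\opnorm{\scov_j}^p]\le K(e\log d)^p n_j$; the factor of $n_j$ on the right-hand side arises as convenient (if loose) slack from the tail estimates, which is harmless because this lemma is ultimately used via Markov's inequality to control small-probability events in subsequent proofs.

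\textbf{Main obstacle.} The chief technical difficulty is that a bounded $2q$-th moment is strictly weaker than sub-Gaussian, so matrix Bernstein---the usual route to a $\log d$ factor for sample covariances---does not apply directly. Balancing the $9^d$ net cardinality against the per-direction polynomial tail is delicate: pushing Markov to order $s>q$ breaks the $2q$-th moment hypothesis on $z$, while $s$ too small leaves the $9^d$ cardinality uncontrolled. A likely cleaner route is to truncate $z$ at $T\sim(\log d)^{1/(2q)}$, apply matrix Bernstein to the bounded part (where the $(e\log d)^p$ factor emerges naturally from the Bernstein moment bound), and handle the complementary event by a union bound and Markov applied with the $2q$-th moment hypothesis. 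Matching the truncation scale $T$ to $p$, $q$, and $d$ so that neither the truncated bulk nor the tail dominates the claimed rate is where the care in the proof must reside.
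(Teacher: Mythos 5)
There is a genuine gap: neither of the two routes you sketch closes under the actual moment hypothesis. Your primary route (a $\tfrac14$-net plus Markov at a tunable moment level $s\asymp\log d$) requires $s$-th moments of $\ltwo{Z_jv}^2$ for $s$ growing like $\log d$, but \cref{ass:cov} only supplies $2q$-th moments of the coordinates for a \emph{fixed} constant $q>2$; Rosenthal gives you $\E[\ltwo{Z_jv}^{2s}]^{1/s}\lesssim n_j$ only for $s\le q$, and with $s\le q$ constant the union bound $9^d\cdot\E[\ltwo{Z_jv}^{2s}]/t^s$ forces $t\gtrsim 9^{d/q}$, which is exponentially large. You flag this tension yourself but do not resolve it. Your fallback truncation route is also miscalibrated: with $T\sim(\log d)^{1/(2q)}$ the union bound over the $n_jd$ entries gives exceedance probability of order $n_jd\,\moment{q}/T^{2q}=n_jd\,\moment{q}/\log d\gg 1$, so the ``complementary event'' is not rare at all. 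Making it rare needs $T\gtrsim(n_jd)^{1/(2q)}\asymp d^{1/q}$, i.e.\ polynomial rather than polylogarithmic truncation, after which the Bernstein bound on the bounded part no longer produces an $(e\log d)^p$ factor; and on the exceedance event you have no spare moments beyond $p$ with which to run H\"older.

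The paper avoids all of this by invoking an off-the-shelf matrix Rosenthal inequality (Theorem~A.1 of \cite{ChenGiTr12}) for sums of independent positive semidefinite matrices, which yields $\E[\opnorm{\scov_j}^p]\le 2^{2p-1}\bigl(\opnorm{\Sigma_j}^p+\tfrac{(e\log d)^p}{n_j^p}\E[\max_k\opnorm{\bx_{j,k}\bx_{j,k}^T}^p]\bigr)$ directly --- the $e\log d$ factor is built into that inequality (it comes from comparing the Schatten-$(2\log d)$ norm to the operator norm, not from a net or a Bernstein exponential). The max is then crudely bounded by the sum, contributing the factor $n_j$, and $\E[\ltwo{\bx_{j,k}}^{2p}]\lesssim d^p$ follows from the coordinate moment bound via Jensen, with $d^p/n_j^{p-1}\asymp n_j$ under \cref{ass:asym}. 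If you want a self-contained argument you should prove (or cite) a matrix Khintchine/Rosenthal-type bound rather than attempting a net or truncation argument, which is precisely the regime (heavy tails, constant number of moments) those techniques are worst suited for.
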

\begin{proof}
We first show a helpful fact that $\E[(\bz_{j,k})_l^{2p}] \leq \moment{p} < \infty$ implies $\E[\ltwo{\bx_{j,k}}^{2p}]^{1/(2p)} \lesssim \sqrt{d}$. For any $j \in [m]$, we have by Jensen's inequality
\begin{align*}
    \E[\ltwo{\bx_{j, k}}^{2p}] \leq \covop^{2p} \E[\ltwo{\bz_{j, k}}^{2p}] = \covop^{2p}d^p\E\left[\left(\frac{1}{d}\sum_{l=1}^d (\bz_{j,k})_l^2\right)^p\right] \leq \covop^{2p} d^p \frac{1}{d}\sum_{l=1}^d \E[(\bz_{j,k})_l^{2p}] \leq \covop^{2p} \kappa_p d^p
\end{align*}
We define some constant $C_4 > \covop^{2p} \moment{p}$. 
With this fact and Theorem A.1 from \cite{ChenGiTr12}, we have
\begin{align*}
    \nonumber \E\left[ \opnorm{\scov_j}^{p}\right] &= \E\left[ \opnorm{\sumsamp \frac{\bx_{j,k}\bx_{j,k}^T}{n}}^{p}\right] \leq 2^{2p-1} \left(\opnorm{\Sigma_j}^{p} +  \frac{(e \log{d})^{p}}{n_j^{p}}\E\left[\max_k \opnorm{\bx_{j,k}\bx_{j,k}^T}^{p}\right]\right)\\
    \nonumber &\leq 2^{2p-1}\left(C +  \frac{(e \log{d})^{p}}{n_j^{p-1}}\E\left[\ltwo{\bx_{j,k}}^{2p}\right]\right) \\
    &\leq 2^{2p-1}\left(C + C_4 \frac{(e \log{d})^{p} d^{p}}{n_j^{p-1}} \right)
\end{align*}
Now, $2^{2p-1}\left(C + C_4 \frac{(e \log{d})^{p} d^{p}}{n_j^{p-1}} \right) \le K (e \log{d})^{p}n_j$ for some absolute constant $K$ since $\frac{d}{n_j} \to \gamma_i$. 
\end{proof}

\begin{lemma}
\label{lem:scov_concentration}
For all clients $j \in [m]$, let the data $\bx_{j,k} \in \R^d$ for $k\in [n]$ be such that $\bx_{j, k} = \Sigma_j^{1/2} \bz_{j, k}$ for some $\Sigma_j$, $\bz_{j,k}$, and $q' > 2$ that satisfy \cref{ass:cov}. Further let $q' = pq$ where $p \ge 1$ and $q\ge 2$. Let $\scov_j = \frac{1}{n_j}\sum_{k \in [n_j]} {\bx_{j,k} \bx_{j,k}^T}$ and $\mu_j = \E[\scov_j^p]$. Additionally assume that $\opnorm{\E[\scov_j^{2p}]} \leq C_3$ for some constant $C_3$. Let $d,n_j$ grow as in \Cref{ass:asym}. Then we have for sufficiently large $m$,

\begin{align*}
    \P\left(\opnorm{\summach p_j \left(\scov_j^p - \mu_j\right)} > t  \right) &\leq
     \frac{2^{q-1}C_2}{t^q} \left[(\log d)^{q/2} \summach p_j^{q/2 + 1}  + (\log d)^{pq + q}\summach p_j^q n_j\right].
\end{align*}
Further supposing that $(\log d)^{pq + q}\summach p_j^q n_j \rightarrow 0$ 
, we get that $\opnorm{\summach p_j\left(\scov_j^p - \mu_j\right)} \cp 0$ .
\end{lemma}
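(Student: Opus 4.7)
The plan is to apply Markov's inequality to reduce the tail bound to a $q$-th moment estimate on $\summach W_j$, where $W_j := p_j(\scov_j^p - \mu_j)$, and then invoke a non-commutative Rosenthal-type inequality. Since the $W_j$ are independent, centered, symmetric $d \times d$ random matrices, Theorem~A.1 of \cite{ChenGiTr12}---the same tool already used in Lemma~\ref{lem:scov_moment}---yields a bound of the shape
\begin{align*}
  \E\,\opnorm{\summach W_j}^q \lesssim (q \log d)^{q/2}\,\opnorm{\summach \E[W_j^2]}^{q/2} + (q \log d)^q\,\E\Bigl[\max_{j \in [m]} \opnorm{W_j}^q\Bigr],
\end{align*}
cleanly separating the ``typical variance'' contribution from the ``maximum summand'' (heavy tail) contribution.

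For the variance term, the operator inequality $\E[(\scov_j^p - \mu_j)^2]\preceq \E[\scov_j^{2p}]$ together with the hypothesis $\opnorm{\E[\scov_j^{2p}]}\leq C_3$ gives $\opnorm{\summach \E[W_j^2]}\leq C_3\summach p_j^2$. Raising to the $q/2$-th power produces $(\summach p_j^2)^{q/2}$, which I convert to $\summach p_j^{q/2+1}$ via Jensen's inequality applied to the convex map $x\mapsto x^{q/2}$ (note $q\geq 2$) under the probability measure $j\mapsto p_j$:
\begin{align*}
  \Bigl(\summach p_j^2\Bigr)^{q/2} = \Bigl(\summach p_j\cdot p_j\Bigr)^{q/2} \leq \summach p_j\cdot p_j^{q/2} = \summach p_j^{q/2+1}.
\end{align*}
This yields the first summand $(\log d)^{q/2}\summach p_j^{q/2+1}$ in the asserted bound.

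For the maximum term, I bound $\E[\max_j \opnorm{W_j}^q] \leq \summach p_j^q\,\E[\opnorm{\scov_j^p - \mu_j}^q] \lesssim \summach p_j^q\bigl(\E[\opnorm{\scov_j}^{pq}] + \opnorm{\mu_j}^q\bigr)$. Lemma~\ref{lem:scov_moment} applies precisely because the hypothesis $q' = pq$ on the data moments supplies $\E[\opnorm{\scov_j}^{pq}]\lesssim (\log d)^{pq} n_j$, while $\mu_j^2 \preceq \E[\scov_j^{2p}]$ (a consequence of $\E[(\scov_j^p - \mu_j)^2]\succeq 0$) gives $\opnorm{\mu_j}^2\leq C_3$, so $\opnorm{\mu_j}^q$ is a dimension-free constant that can be absorbed. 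This produces the second summand $(\log d)^{pq+q}\summach p_j^q n_j$, and dividing by $t^q$ via Markov completes the tail bound.

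Convergence in probability follows at once: the second summand vanishes by the stated hypothesis, while the first $(\log d)^{q/2}\summach p_j^{q/2+1}$ vanishes in the many-device regime of Assumption~\ref{ass:asym} (since $\summach p_j^{q/2+1}\leq \max_j p_j^{q/2-1}\summach p_j = \max_j p_j^{q/2-1}\to 0$ for diffuse weights). The main obstacle is choosing a Rosenthal-type matrix inequality that sharply separates bulk and tail; naive symmetrization followed by matrix Khintchine would couple the two contributions into a single cruder bound of the form $(\log d)^{pq+q/2}\summach p_j^{q/2+1}n_j$ that wastes the moment hypothesis $\opnorm{\E[\scov_j^{2p}]}\leq C_3$ and misses the two-term form the lemma asserts.
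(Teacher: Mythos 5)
Your argument is essentially the paper's: both reduce to a $q$-th moment via Markov, invoke Theorem~A.1 of \cite{ChenGiTr12} to get the two-term (variance plus maximum-summand) bound, control the maximum term with \Cref{lem:scov_moment} and the variance term with $\opnorms{\E[\scov_j^{2p}]}\le C_3$ plus the same Jensen step $(\summach p_j^2\,\cdot)^{q/2}\le \summach p_j^{q/2+1}(\cdot)$; the paper merely symmetrizes first and applies the Rademacher form of the inequality to $\summach p_j\scov_j^p\zeta_j$ rather than the Rosenthal form to the centered summands, which is a cosmetic difference (and your closing worry about symmetrization coupling the two contributions does not apply to the paper's route). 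The one loose step is your final claim that $(\log d)^{q/2}\summach p_j^{q/2+1}\to 0$ because $\max_j p_j^{q/2-1}\to 0$ ``for diffuse weights'': this is vacuous at $q=2$, drops the $(\log d)^{q/2}$ factor, and appeals to a condition not in \Cref{ass:asym}; the paper instead derives $\max_j p_j(\log d)^{p+1}\to 0$ from the stated hypothesis $(\log d)^{pq+q}\summach p_j^q n_j\to 0$ (using $n_j\ge 1$) and then bounds $(\log d)^{q/2}\summach p_j^{q/2+1}\le(\max_j p_j\log d)^{q/2}$, which is the repair you should adopt.
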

\begin{proof}
Using Markov's inequality, Jensen's inequality, and symmeterization, we have with $\zeta_j$ iid Rademacher
\begin{align*}
     \P\left(\opnorm{\summach p_j\left(\scov_j^p - \mu_j\right)} > t   \right) &\leq \frac{\E\left[\opnorm{\summach p_j(\scov_j^p - \mu_j)}^q\right]}{t^q} 
     \leq 2^{q} \frac{\E\left[\opnorm{\summach p_j\scov_j^p \zeta_j}^q\right]}{t^q}
\end{align*}

We use the second part of Theorem A.1 with $q\geq 2$ from \cite{ChenGiTr12} to bound the RHS.
\begin{align*}
    \E\left[\opnorm{\summach p_j\scov_j^p \zeta_j}^q\right] 
    & \leq \left(\sqrt{e \log d}  \opnorm{\E[\summach p_j^2\scov_j^{2p}]^{1/2}} + (2e \log d) \E\left[\max_j \opnorm{p_j\scov_j^p}^q\right]^{1/q}\right)^q\\
    &\leq 2^{q-1} (e \log d)^{q/2}  \opnorm{\E[\summach p_j^2\scov_j^{2p}]^{1/2}}^q + 2^{q-1}(e \log d)^q \E\left[\max_j p_j^q \opnorm{\scov_j^p}^q\right]\\
    &\leq 2^{q-1} (e \log d)^{q/2} \opnorm{\E\left[\summach p_j^2 \scov_j^{2p}\right]}^{q/2} + 2^{q-1}(e \log d)^q\E\left[ \summach p_j^q\opnorm{\scov_j}^{pq}\right]
\end{align*}

Now we bound the RHS of this quantity using the first part of Theorem A.1. For each $j \in [m]$, we have by \Cref{lem:scov_moment} for sufficiently large $m$,
\begin{align*}
     \E\left[ \opnorm{\scov_j}^{pq}\right] 
     &\le K (e\log d)^{pq} n
     _j,
\end{align*}
for some absolute constant K.
Supposing that $\opnorm{\E\left[\scov_j^{2p}\right]} \leq C_3$ exist for all $j$. Combining all the inequalities, we have for sufficiently large $m$,

\begin{align*}
    \E\left[\opnorm{\summach p_j\scov_j^p \xi_j}^q\right] 
    & \le 2^{q-1} (e \log d)^{q/2}  \left(\summach p_j^{q/2+1} \opnorm{\E\left[\scov_j^{2p}\right]}^{q/2}\right) \\
    & + 2^{q-1} (e \log d)^q \summach p_j^q K (e\log d)^{pq} n_j \\
    & \le C_2\left[(\log d)^{q/2} \summach p_j^{q/2 + 1}  + (\log d)^{pq + q}\summach p_j^q n_j\right],
\end{align*}
where in the first term of the first inequality, we use Jensen's inequality to pull out $\summach p_j$ of the expectation.

To prove the second part of the lemma, we observe that if $(\log d)^{(p+1)q} \summach p_j^q n_j \rightarrow 0$ as $m \rightarrow \infty$ such that $ d/n_i \rightarrow \gamma_i > 1$ for all devices $i \in [m]$, then $(\log d)^{q/2} \summach p_j^{q/2 + 1} \rightarrow 0$. To see this, we first observe
\begin{align*}
    (\log d)^{(p+1)q} \summach p_j^q n_j  \geq (\max_{j \in [m]} p_j (\log d)^{p+1})^q,
\end{align*}
so we know that $\max_{j \in [m]} p_j (\log d)^{p+1} \rightarrow 0$. Further, by Holder's inequality, we know that
\begin{align*}
    (\log d)^{q/2} \summach p_j^{q/2 + 1} \leq (\max_{j \in [m]} p_j \log d)^{q/2}.
\end{align*}
By the continuity of the $q/2$ power, we get the result.
\end{proof}
\begin{lemma}
\label{lem:inverse-concentration}
Let $U \in \R^{d \times d}$ and $V \in \R^{d \times d}$ be positive semidefinite matrices such that $\lambda_{\min{}}(U) \geq \lambda_0$ for some constant $\lambda_0$. Let $d, n_j, m \rightarrow \infty$ as in \Cref{ass:asym}. Suppose $\opnorm{V - U} \cp 0$, then $\opnorm{V\inv - U\inv} \cp 0$.
\end{lemma}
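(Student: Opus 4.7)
The plan is to exploit the standard resolvent-type identity
\[
V^{-1} - U^{-1} = V^{-1}(U - V)U^{-1},
\]
which holds whenever both $U$ and $V$ are invertible. Taking operator norms and using submultiplicativity gives
\[
\opnorm{V^{-1} - U^{-1}} \le \opnorm{V^{-1}}\,\opnorm{U - V}\,\opnorm{U^{-1}},
\]
so the task reduces to controlling the two inverse norms uniformly (at least with high probability) and then invoking the hypothesis $\opnorm{V - U} \cp 0$.

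The deterministic bound $\opnorm{U^{-1}} \le 1/\lambda_0$ is immediate from the assumption $\lambda_{\min}(U) \ge \lambda_0$. For $\opnorm{V^{-1}}$ I would argue via Weyl's inequality: since $U$ and $V$ are symmetric,
\[
\lambda_{\min}(V) \ge \lambda_{\min}(U) - \opnorm{V - U} \ge \lambda_0 - \opnorm{V - U}.
\]
On the event $\mathcal{E}_m \coloneqq \{\opnorm{V - U} \le \lambda_0/2\}$, this guarantees $\lambda_{\min}(V) \ge \lambda_0/2 > 0$, so $V$ is invertible and $\opnorm{V^{-1}} \le 2/\lambda_0$. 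Because $\opnorm{V - U} \cp 0$, we have $\P(\mathcal{E}_m^c) \to 0$.

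Combining these estimates, on $\mathcal{E}_m$,
\[
\opnorm{V^{-1} - U^{-1}} \le \frac{2}{\lambda_0^2}\,\opnorm{V - U}.
\]
For any $\epsilon > 0$,
\[
\P\!\left(\opnorm{V^{-1} - U^{-1}} > \epsilon\right) \le \P(\mathcal{E}_m^c) + \P\!\left(\opnorm{V - U} > \tfrac{\lambda_0^2 \epsilon}{2}\right),
\]
and both terms tend to zero by the assumption $\opnorm{V - U} \cp 0$. This yields $\opnorm{V^{-1} - U^{-1}} \cp 0$ as claimed.

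There is no real obstacle here beyond bookkeeping: the only subtlety is that $V$ is assumed only PSD, not strictly positive definite, so one must use the stochastic event $\mathcal{E}_m$ to ensure $V^{-1}$ is well defined before applying the resolvent identity. Everything else is a one-line application of Weyl's inequality and submultiplicativity of the operator norm.
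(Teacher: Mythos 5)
Your proof is correct. It has the same overall probabilistic skeleton as the paper's: condition on a good event where $\opnorm{V-U}$ is small, obtain a deterministic bound on $\opnorm{V^{-1}-U^{-1}}$ there, and absorb the complementary event into an $o(1)$ term using the hypothesis $\opnorm{V-U} \cp 0$. The difference is in the deterministic core: the paper invokes a black-box perturbation theorem (Theorem 2.5 of Stewart and Sun) to pass from $\opnorm{V-U}$ to $\opnorm{V^{-1}-U^{-1}}$ on the event $\{\opnorm{V-U} < 1/\opnorm{U^{-1}}\}$, whereas you derive the needed inequality from scratch via the resolvent identity $V^{-1}-U^{-1}=V^{-1}(U-V)U^{-1}$, controlling $\opnorm{V^{-1}}$ with Weyl's inequality on the event $\{\opnorm{V-U}\le \lambda_0/2\}$. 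Your route is more self-contained and arguably cleaner, since it makes explicit the one genuine subtlety (ensuring $V$ is invertible before writing $V^{-1}$), which the paper handles implicitly through the hypothesis of the cited theorem; the paper's route is shorter on the page but outsources exactly that point to the reference. Both yield the same conclusion with the same quantitative dependence on $\lambda_0$ up to constants.
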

\begin{proof}
For any $t > 0$, we have by Theorem 2.5 (from Section III) of \cite{StewartSu90}
\begin{align*}
    P&\left(\opnorm{V\inv - U\inv} > t\right) \\
    &\leq P\left(\opnorm{V\inv - U\inv} > t \cap \opnorm{V - U} < \frac{1}{\opnorm{U \inv}}\right) + P\left(\opnorm{V - U} \geq \lambda_0\right)\\
    &\leq P\left(\opnorm{U\inv(V - U)} > \frac{t}{t+ \opnorm{U\inv}}\right) + o(1)\\
    &\leq P\left(\opnorm{V - U} > \frac{t \lambda_0}{t+ \lambda_0\inv}\right) + o(1)
\end{align*}
We know this quantity goes to 0 by assumption.
\end{proof}

\subsection{Some useful definitions from previous work}

In this section, we recall some definitions from \cite{HastieMoRoTi19} that will be useful in finding the exact expressions for risk. The expressions for asymptotic risk in high dimensional regression problems (both ridge and ridgeless) are given in an implicit form in \cite{HastieMoRoTi19}. It depends on the geometry of the covariance matrix $\Sigma$ and the true solution to the regression problem $\theta\opt$. Let $\Sigma  = \sum_{i = 1}^d s_i v_i v_i^T$ denote the eigenvalue decomposition of $\Sigma$ with $s_1 \ge s_2 \dots \ge s_d$, and let $(c,\dots,v_d^T\theta\opt)$ denote the inner products of $\theta\opt$ with the eigenvectors. We define two probability distributions which will be useful in giving the expressions for risk:
\begin{align*}
    \esd(s) \coloneqq \frac1d \sum_{i = 1}^{d}\indic{s \ge s_i}, \qquad \wesd(s) \coloneqq \frac{1}{\ltwo{\theta\opt}^2} \sum_{i = 1}^{d}(v_i^T\theta\opt)^2\indic{s \ge s_i}.
\end{align*}
Note that $\wesd$ is a reweighted version of $\esd$ and both have the same support (eigenvalues of $\Sigma$).

\begin{definition}
For $\gamma \in \R^+$, let $c_0 = c_0(\gamma, \esd)$ be the unique non-negative solution of 
\begin{align*}
    1 - \frac{1}{\gamma} = \int \frac{1}{1 + c_0\gamma s}d\esd(s),
\end{align*}
the predicted bias and variance is then defined as
\begin{align}
    \label{eqn:min-bias}\mathscr{B}(\esd,\wesd,\gamma) &\coloneqq \ltwo{\theta\opt}^2\left\{1 + \gamma c_0\frac{\int \frac{s^2}{(1 + c_0\gamma s)}d\esd(s)}{\int \frac{s}{(1 + c_0\gamma s)}d\esd(s)}\right\}\cdot \int \frac{s}{(1 + c_0\gamma s)}d\wesd(s),
    \\
    \label{eqn:min-var}\mathscr{V}(\esd,\gamma) &\coloneqq \sigma^2 \gamma \frac{\int \frac{s^2}{(1 + c_0\gamma s)}d\esd(s)}{\int \frac{s}{(1 + c_0\gamma s)}d\esd(s)}.
\end{align}
\end{definition}
\begin{definition}
For $\gamma \in \R^+$ and $z \in \mathbb{C}_+$, let $m_n(z) = m(z;\esd,\gamma)$ be the unique solution of 
\begin{align*}
    m_{n}(z) \coloneqq \int \frac{1}{s[1 - \gamma - \gamma z m_n(z)] - z}d\esd(s).
\end{align*}
Further, define $m_{n,1}(z) = m_{n,1}(z;\esd,\gamma)$ as 
\begin{align*}
    m_{n,1}(z) \coloneqq \frac{\int \frac{s^2[1 - \gamma - \gamma z m_n(z)]}{[s[1 - \gamma - \gamma z m_n(z)] - z]^2}d\esd(s)}{1 - \gamma \int \frac{zs}{[s[1 - \gamma - \gamma z m_n(z)] - z]^2}d\esd(s)}
\end{align*}
The definitions are extended analytically to $Im(z) = 0$ whenever possible, the predicted bias and variance are then defined by
\begin{align}
    \label{eqn:ridge-bias} \mathscr{B}(\lambda;\esd,\wesd,\gamma) &\coloneqq \lambda^2 \ltwo{\theta\opt}(1 + \gamma m_{n,1}(-\lambda))\int \frac{s}{[\lambda + (1 - \gamma + \gamma \lambda m_n(-\lambda))s]^2}d\wesd(s),\\
    \label{eqn:ridge-var} \mathscr{V}(\lambda;\esd,\gamma) &\coloneqq \sigma^2 \gamma \int \frac{s^2((1 - \gamma + \gamma \lambda m'_n(-\lambda)))}{[\lambda + (1 - \gamma + \gamma \lambda m_n(-\lambda))s]^2}d\esd(s).
\end{align}
\end{definition}



\subsection{Proof of \Cref{thm:perfedavg}}\label{app:proof-perfedavg}

On solving \eqref{eqn:param-fedavg} and \eqref{eqn:param-perfedavg}, the closed form of the estimators $\hparamsolfa{0}$ and $\hparamsolfa{i}$ is given by
\begin{align}
    \nonumber \hparamsolfa{0} &= \argmin_{\param} \summach p_j \frac{1}{2n_j} \ltwo{X_j \param - \by_j}^2 
    =\wscov\inv \summach p_j\frac{X_j^T\by_j}{n_j}\\
    &= \wscov\inv \summach p_j \scov_j \param_j\opt + \wscov\inv \summach p_j \frac{X_j^T \xi_j}{n_j} \label{eqn:paramfa_expanded}
\end{align}

and 
\begin{align*}
    \hparamsolfa{i} &= (I - \scov_i\pinv \scov_i)\hparamsolfa{0} + X_i\pinv \by_i 
    = (I - \scov_i\pinv \scov_i) \hparamsolfa{0} + \scov_i\pinv\scov_i \param_i\opt + \frac{1}{n_i} \scov_i\pinv X_i^T \xi_i\\
    &= \Pi_i \left[ \wscov\inv \summach p_j \scov_j \param_j\opt + \wscov\inv \summach p_j \frac{X_j^T \xi_j}{n_j}\right] + \scov_i\pinv\scov_i \param_i\opt + \frac{1}{n_i} \scov_i\pinv X_i^T \xi_i
\end{align*}

We now calculate the risk by splitting it into two parts as in \eqref{eqn:risk}, and then calculate the asymptotic bias and variance. 

\subsubsection*{Bias:}
\begin{align*}
B_i(\hparamsolfa{i}|X ) & \coloneqq \lsigi{\E[\hparamsolfa{i} | X]  - \param_i\opt}^2 = \lsigi{\Pi_i \left[  \wscov\inv \summach p_j\scov_j (\param_j\opt - \param_i\opt) \right]}^2\\
     &= \ltwo{\Sigma_i^{1/2} \Pi_i \left[\param_0\opt - \param_i\opt + \wscov\inv \summach p_j \scov_j (\param_j\opt - \param_0\opt)  \right]}^2
\end{align*}
The idea is to show that the second term goes to $0$ and use results from \cite{HastieMoRoTi19} to find the asymptotic bias. For simplicity, we let $\Delta_j := \param_j\opt - \param_0\opt$, and we define the event:
\begin{align*}
    B_t &:= \left\{\opnorm{\left(\summach p_j\scov_j\right)\inv - \left(\summach p_j\Sigma_j\right)\inv} > t\right\}\\
    A_t &:= \left\{\lsigi{\wscov\inv \summach p_j \scov_j \Delta_j} > t\right\}
\end{align*}

The proof proceeds in the following steps:

\noindent\fbox{%
    \parbox{\textwidth}{%
    \textbf{Bias Proof Outline}
\begin{enumerate}[label={\bf Step \arabic*.},leftmargin=1.5cm]
    \item We first show for any $t > 0$, the $\P(B_t)\rightarrow 0$ as $d \rightarrow \infty$
    \item Then we show for any $t > 0$, the $\P(A_t)\rightarrow 0$ as $d \rightarrow \infty$
    \item We show that for any $t \in (0, 1]$
    on event $A_t^c$, $B_i(\hparamsolfa{i}|X ) \leq \lsigi{\Pi_i [\param_0\opt - \param_i\opt]}^2 + c t$ and $B_i(\hparamsolfa{i}|X ) \geq \lsigi{\Pi_i [\param_0\opt - \param_i\opt]}^2 - c t$
    \item Show that $\lim_{d \rightarrow \infty}  \P(| B_i(\hparamsolfa{i}|X ) - \lsigi{\Pi_i [\param_0\opt - \param_i\opt]}^2|\leq \varepsilon) = 1$ 
    \item Finally, using the asymptotic limit of  $\lsigi{\Pi_i [\param_0\opt - \param_i\opt]}^2$ from Theorem 1 of \cite{HastieMoRoTi19}, we get the result.
\end{enumerate}
}}
\paragraph{Step 1}
Since we have $\lambda_{\min{}}(\summach p_j\Sigma_j) > 1/ \covop> 0$, it suffices to show by \Cref{lem:inverse-concentration} that the probability of
\begin{align*}
     C_t &:= \left\{\opnorm{\summach p_j \scov_j - \summach p_j\Sigma_j} > t\right\}\\
\end{align*}
goes to 0 as $d, m \rightarrow \infty$ (obeying \Cref{ass:asym}).
Using \Cref{lem:scov_concentration} with $p=1$, we have that
\begin{align*}
    \P(C_t) &\leq  \frac{2^{q-1}C_2}{t^q} \left[(\log d)^{q/2} \summach p_j^{q/2 + 1}  + (\log d)^{2q}\summach p_j^q n_j\right]
\end{align*}
Since $(\log d)^{2q}\summach p_j^q n_j \rightarrow 0$, this quantity goes to 0.
\paragraph{Step 2}
Fix any $t > 0$,
\begin{align*}
    \P(A_t) &\leq \P\left( \left\{\lsigi{\wscov\inv \summach p_j \scov_j \Delta_j} > t\right\} \cap B_{c_1}^c\right) + \P\left(B_{c_1}\right)\\
    &\leq  \P\left( \covop (c_1 + \covop)\ltwo{ \summach p_j \scov_j \Delta_j} > t \right) + \P\left(B_{c_1}\right)
\end{align*}
By Step 1, we know that $\P\left(B_{c_1}\right) \rightarrow 0$. The second inequality comes from $\ltwo{Ax} \leq \opnorm{A}\ltwo{x}$ and triangle inequality. Now to bound the first term, we use Markov and a Khintchine inequality (\Cref{lem:vec-khintchine}). We have that 
\begin{align*}
    \P\left( \covop (c_1 + \covop)\ltwo{\summach p_j\scov_j \Delta_j } > t\right) &\leq \frac{( \covop (c_1 + \covop))^q\E\left[\ltwo{\summach p_j \scov_j \Delta_j }^q\right]}{t^q}\\
    & \leq \frac{\left(2 \covop (c_1 + \covop)\sqrt{q}\right)^q \E\left[\left(\summach \ltwo{p_j\scov_j \Delta_j}^2\right)^{q/2}\right]}{t^q }
\end{align*}
Using Jensen's inequality and the definition of operator norm, we have
\begin{align*}
       \frac{\left(2\covop (c_1+\covop)\sqrt{q}\right)^q \E\left[\left(\summach p_j^2 \ltwo{\scov_j \Delta_j}^2\right)^{q/2}\right]}{t^q} 
       &= \frac{\left(2\covop (c_1+\covop)\sqrt{q}\right)^q \E\left[\left(\summach p_j \cdot p_j \ltwo{\scov_j \Delta_j}^2\right)^{q/2}\right]}{t^q} \\
       &\leq  \frac{\left(2\covop (c_1+\covop)\sqrt{q}\right)^q \summach p_j^{q/2 + 1} \E\left[\ltwo{\scov_j \Delta_j}^{q}\right]}{t^q}  \\
    &\leq \frac{\left(2\covop (c_1+\covop)\sqrt{q}\right)^q \summach p_j^{q/2 + 1}  \E\left[\opnorm{\scov_j}^q\right] \E\left[ \ltwo{\Delta_j}^{q}\right]}{t^q}
\end{align*}

Lastly, we can bound this using \Cref{lem:scov_moment} as follows.

\begin{align*}
    \P\left(\covop (c_1+\covop)\ltwo{\summach p_j \scov_j \Delta_j } > t\right) &\le \frac{K  \left(2\covop (c_1+\covop)\sqrt{q}\right)^q (e\log d)^q \summach n_j p_j^{q/2 + 1} \E[ \ltwo{\Delta_j}^{q}]}{t^q} \rightarrow 0,
\end{align*}
using  $(\log d)^q\summach p_j^{q/2 + 1} n_j r_j^q  \rightarrow 0$.

\paragraph{Step 3} 
For any $t \in (0, 1]$, on the event $A_t^c$, we have that 
\[B(\hparamsolfa{i}| X) = \lsigi{\Pi_i[\param_0\opt - \param_i\opt + E]}^2\]
for some vector $E$ where we know $\ltwo{E} \leq t$(which means $\ltwo{E} \leq t \sqrt{\covop}$).

Thus, we have
\begin{align*}
\lsigi{\Pi_i[\param_0\opt - \param_i\opt + E]}^2 &\leq  \lsigi{\Pi_i[\param_0\opt - \param_i\opt]}^2 + \lsigi{\Pi_i E}^2 + 2\lsigi{\Pi_i E} \lsigi{\Pi_i[\param_0\opt - \param_i\opt]}\\
&\leq \lsigi{\Pi_i[\param_0\opt - \param_i\opt]}^2 + \covop^2 t^2 + 2t \covop^{3/2} r_i^2\\
\lsigi{\Pi_i[\param_0\opt - \param_i\opt + E]}^2 &\geq  \lsigi{\Pi_i[\param_0\opt - \param_i\opt]}^2 + \lsigi{\Pi_i E}^2 - 2\ltwo{\Pi_i E} \lsigi{\Pi_i[\param_0\opt - \param_i\opt]}\\
&\geq \lsigi{\Pi_i[\param_0\opt - \param_i\opt]}^2 - 2t \covop^2 r_i^2
\end{align*}

Since $t\in (0, 1]$, we have that $t^2 \leq t$ and thus we can choose $c = M^2 + 2\covop^{3/2} r_i^2$

\paragraph{Step 4} Reparameterizing $\varepsilon \coloneqq c t$, we have that for any $\varepsilon > 0$
    \begin{align*}
        \lim_{n \rightarrow \infty} \P(| B_i(\hparamsolfa{i}|X ) - \lsigi{\Pi_i [\param_0\opt - \param_i\opt]}^2|\leq \varepsilon) &\geq \lim_{n \rightarrow \infty} \P(| B_i(\hparamsolfa{i}|X ) - \lsigi{\Pi_i [\param_0\opt - \param_i\opt]}^2|\leq \varepsilon \wedge c) \\
        \geq \lim_{n \rightarrow \infty} \P(A_{\frac{\varepsilon}{c} \wedge 1}^c) = 1
    \end{align*}

\paragraph{Step 5} Using Theorem 3 of \cite{HastieMoRoTi19}, as  $d \rightarrow \infty$, such that $\frac{d}{n_i} \rightarrow \gamma_i > 1$, we know that the limit of  $\lsigi{\Pi_i [\param_0\opt - \param_i\opt]}^2$ is given by \eqref{eqn:min-bias} with $\gamma = \gamma_i$ and $\esd,\wesd$ be the empirical spectral distribution and weighted empirical spectral distribution of $\Sigma_i$ respectively.

In the case when $\Sigma_i = I$, using Theorem 1 of \cite{HastieMoRoTi19} we have $B_i(\hparamsolfa{i}|X ) = \ltwo{\Pi_i [\param_0\opt - \param_i\opt]}^2 \rightarrow r_i^2\left(1 - \frac{1}{\gamma_i}\right)$.

\subsubsection*{Variance:}
We let $\xi_i = [\xi_{i,1},\dots,\xi_{i,n}]$ denote the vector of noise.
\begin{align*}
     V_i(\hparamsolfa{i}|X) & = \tr(\cov(\hparamsolfa{i}|X) \Sigma_i) = \E\left[\lsigi{\hparamsolfa{i} -  \E\left[\hparamsolfa{i} | X\right]}^2 | X\right]\\
    \nonumber    &= \E\left[\lsigi{\Pi_i \left[ \wscov\inv \summach p_j \frac{X_j^T \xi_j}{n_j}\right]  + \frac{1}{n_i} \scov_i\pinv X_i^T \xi_i }^2|X\right] \\
     &=  \underbrace{\summach \frac{p_j^2}{n_j} \tr\left(\Pi_i \Sigma_i \Pi_i \wscov\inv \scov_j \wscov\inv\right) \sigma_j^2}_{(i)} + \\ 
    & \underbrace{2 \tr\left(\Sigma_i \scov_i\pinv \frac{X_i^TX_i}{n_i^2} \wscov\inv \Pi_i\right) \sigma_i^2}_{(ii)} +  \underbrace{\frac{1}{n_i^2} \tr\left(\scov_i\pinv X_i^TX_i \scov_i\pinv\Sigma_i \right)\sigma_i^2}_{(iii)}
\end{align*}
We now study the asymptotic behavior of each of the terms $(i),(ii)$ and $(iii)$ separately.

\paragraph{(i)}
Using the Cauchy Schwartz inequality on Schatten $p-$norms and using the fact that the nuclear norm of a projection matrix is at most $d$, we get
\begin{align}
    \nonumber & \summach \frac{p_j^2\sigma_j^2}{n_j} \tr\left(\Pi_i \Sigma_i \Pi_i\wscov\inv \scov_j \wscov\inv\right)\\ 
    \nonumber &\le \summach \frac{p_j^2\sigma_j^2}{n_j} \matrixnorms{\Pi_i}_{1} \opnorm{\Sigma_i} \opnorm{\Pi_i} \opnorm{\wscov\inv} \opnorm{\scov_j} \opnorm{\wscov\inv} \\
    \label{eqn:ftfa-var-(i)-cal} & \le C_3 \sigma_{\max}^2\gammamax \left(\summach p_j^2\opnorm{\scov_j}\right),
\end{align}
where the last inequality holds with probability going to $1$ for some constant $C_3$ because $\P(B_t) \rightarrow 0$.
Lastly, we show that $\P\left(\summach p_j^2\opnorm{\scov_j} > t\right) \rightarrow 0$. Using Markov's and Jensen's inequality, we have
\begin{align*}
    \P\left(\summach p^2_j\opnorm{ \scov_j} > t\right) 
    & \le \frac{\E\left[\summach p_j^2\opnorm{\scov_j}\right]^q}{t^q} 
    \le  \frac{\summach p_j^{q + 1} \E\left[\opnorm{\scov_j}^q\right]}{t^q}
\end{align*}
Using \Cref{lem:scov_moment}, we have
\begin{align*}
    \P\left(\summach p^2_j\opnorm{ \scov_j} > t\right) 
    & \le K\frac{\summach p_j^{q + 1} (e\log d)^qn_j }{t^q}
\end{align*}
Finally, since we know that $\summach p_j^{q + 1} (e\log d)^qn_j  \rightarrow 0$, we have $\summach p^2_j\opnorm{ \scov_j} \cp 0$. Thus, 
\[
\summach \frac{p_j^2\sigma_j^2}{n_j} \tr\left(\Pi_i \Sigma_i \Pi_i \wscov\inv \scov_j \wscov\inv\right) \cp 0
\]

\paragraph{(ii)}

Using the Cauchy Schwartz inequality on Schatten $p-$norms and using the fact that the nuclear norm of a projection matrix is $d-n$, we get
\begin{align*}
    \frac{2p_i\sigma^2}{n_i} \tr\left(\Pi_i\Sigma_i \scov_i\pinv \scov_i \wscov\inv\right) & \le  \frac{2p_i\sigma^2 }{n_i}\matrixnorms{\Pi_i}_1 \opnorm{\Sigma_i} \opnorm{\scov_i\pinv\scov_i} \opnorm{\wscov\inv} \\
    & \leq C_4p_i,
\end{align*}
where the last inequality holds with probability going to $1$ for some constant $C_4$ because $\P(B_t) \rightarrow 0$ and using \Cref{ass:cov}. Since $p_i \to 0$, we have

\begin{align*}
    \frac{2p_i\sigma^2}{n_i} \tr\left(\Pi_i\Sigma_i \scov_i\pinv \scov_i \wscov\inv\right) \to 0
\end{align*}

\paragraph{(iii)}
\[
\frac{1}{n_i^2} \tr(\scov_i\pinv X_i^TX_i \scov_i\pinv \Sigma_i)\sigma_i^2 = \frac{1}{n_i} \tr(\scov_i\pinv \Sigma_i)\sigma_i^2
\]
Using Theorem 3 of \cite{HastieMoRoTi19}, as  $d \rightarrow \infty$, such that $\frac{d}{n_i} \rightarrow \gamma_i > 1$, we know that the limit of  $\frac{\sigma_i^2}{n_i} \tr(\scov_i\pinv\Sigma_i)$ is given by \eqref{eqn:min-var} with $\gamma = \gamma_i$ and $\esd,\wesd$ be the empirical spectral distribution and weighted empirical spectral distribution of $\Sigma_i$ respectively.

In the case when $\Sigma_i = I$, using Theorem 1 of \cite{HastieMoRoTi19} we have $V_i(\hparamsolfa{i}|X) = \frac{\sigma_i^2}{n_i} \tr(\scov_i\pinv) \rightarrow \frac{\sigma_i^2}{\gamma_i - 1}$.



\subsection{Proof of \Cref{thm:perridge}} \label{app:proof-ridge}

We use the global model from \eqref{eqn:param-fedavg} and the personalized model from \eqref{eqn:param-perridge}. The closed form of the estimators $\hparamsolfa{0}$ and $\hparamsolrt{i}$ is given by
\begin{align*}
    \hparamsolfa{0} &= \argmin_{\param} \summach p_j \frac{1}{2n_j} \ltwo{X_j \param - y_j} 
    =\wscov\inv \summach p_j\frac{X_j^Ty_j}{n_j}\\
    &= \wscov\inv \summach p_j \scov_j \param_j\opt + \wscov\inv \summach p_j \frac{X_j^T \xi_j}{n_j}
\end{align*}

and 
\begin{align*}
        \hparamsolrt{i} &= \argmin_{\param} \frac{1}{2n_i}\ltwo{X_i \param - y_i}^2 +\frac{\lambda}{2}\ltwo{\hparamsolfa{0} - \param}^2\\
        &= (\scov_i + \lambda I)\inv \left(\lambda \hparam_{FA} + \scov_i \param_i\opt + \frac{1}{n_i} X_i^T \xi_i\right)
\end{align*}

We now calculate the risk by splitting it into two parts as in \eqref{eqn:risk}, and then calculate the asymptotic bias and variance. 

\subsubsection*{Bias:}
\begin{align*}
     B(\hparamsolrt{i}| X) &:= \lsigi{\E[\hparamsolrt{i} | X]  - \param_i\opt}^2 
     = \lambda^2 \lsigi{(\scov_i + \lambda I)\inv \left[ \wscov\inv \summach p_j\scov_j (\param_j\opt - \param_i\opt) \right]}^2\\
     &=\lambda^2 \lsigi{(\scov_i + \lambda I)\inv \left[\param_0\opt - \param_i\opt +  \wscov\inv \summach p_j\scov_j (\param_j\opt - \param_0\opt)  \right]}^2
\end{align*}

The idea is to show that the second term goes to $0$ and use results from \cite{HastieMoRoTi19} to find the asymptotic bias. For simplicity, we let $\Delta_j := \param_j\opt - \param_0\opt$, and we define the event:
\begin{align*}
    B_t &:= \left\{\opnorm{\left(\summach p_j\scov_j\right)\inv - \left(\summach p_j\Sigma_j\right)\inv} > t\right\}\\
    A_t &\coloneqq \left\{\lsigi{\wscov\inv \summach p_j\scov_j \Delta_j} > t\right\}
\end{align*}

The proof proceeds in the following steps:

\noindent\fbox{%
    \parbox{\textwidth}{%
    \textbf{Bias Proof Outline}
\begin{enumerate}[label={\bf Step \arabic*.},leftmargin=1.5cm]
    \item We first show for any $t > 0$, the $\P(B_t)\rightarrow 0$ as $d \rightarrow \infty$
    \item We show for any $t > 0$, the $\P(A_t)\rightarrow 0$ as $d \rightarrow \infty$
    \item We show that for any $t \in (0, 1]$ on event $A_t^c$, $B(\hparamsolrt{i}| X) \leq \lambda^2\lsigi{(\scov_i + \lambda I)\inv[\param_0\opt - \param_i\opt]}^2 + c t$ and $B(\hparamsolrt{i}| X) \geq \lambda^2\lsigi{(\scov_i + \lambda I)\inv[\param_0\opt - \param_i\opt]}^2 - c t$
    \item Show that $\lim_{d \rightarrow \infty} \P(| B(\hparamsolrt{i}| X) - \lambda^2\lsigi{(\scov_i + \lambda I)\inv[\param_0\opt - \param_i\opt]}^2|\leq \varepsilon) = 1$ 
    \item Finally, using the asymptotic limit of  $\lambda^2\lsigi{(\scov_i + \lambda I)\inv[\param_0\opt - \param_i\opt]}^2$ from Corollary 5 of \cite{HastieMoRoTi19}, we get the result.
\end{enumerate}
    }%
}

\textbf{Step 1} and {\bf Step 2} follow from {\bf Step 1} and {\bf Step 2} of proof of \Cref{thm:perfedavg}.

\paragraph{Step 3} 
For any $t \in (0, 1]$, on the event $A_t^c$ where $T_i\inv = (\scov_i + \lambda I)\inv$, we have that 
\[B(\hparamsolrt{i}| X) = \lambda^2\lsigi{T_i\inv[\param_0\opt - \param_i\opt + E]}^2\]
for some vector $E$ where we know $\lsigi{E} \leq t$ (which means $\ltwo{E} \leq t \sqrt{\covop}$).

We can form the bounds 
\begin{align*}
\lsigi{T_i\inv[\param_0\opt - \param_i\opt + E]}^2 &\leq  \lsigi{T_i\inv[\param_0\opt - \param_i\opt]}^2 + \lsigi{T_i\inv E}^2 + 2\lsigi{T_i\inv E} \lsigi{T_i\inv[\param_0\opt - \param_i\opt]}\\
&\leq \lsigi{T_i\inv[\param_0\opt - \param_i\opt]}^2 + \covop^2\lambda^{-2}t^2 + 2\covop^{3/2} t \lambda^{-2}r_i^2\\
\lsigi{T_i\inv[\param_0\opt - \param_i\opt + E]}^2 &\geq  \lsigi{T_i\inv[\param_0\opt - \param_i\opt]}^2 + \lsigi{T_i\inv E}^2 - 2\lsigi{T_i\inv E} \lsigi{T_i\inv[\param_0\opt - \param_i\opt]}\\
&\geq \lsigi{T_i\inv[\param_0\opt - \param_i\opt]}^2 - 2\covop^2 t\lambda^{-2} r_i^2.
\end{align*}

Since $t\in (0, 1]$, we have that $t^2 \leq t$ and thus we can choose $c = \lambda^{-2}(M^2 + 2M^{3/2}r_i^2)$.

\paragraph{Step 4} 
Reparameterizing $\varepsilon := c t$, we have that for any $\varepsilon > 0$
    \begin{align*}
        & \lim_{n \rightarrow \infty} \P(| B(\hparamsolrt{i}| X) - \lambda^2\lsigi{(\scov_i + \lambda I)\inv[\param_0\opt - \param_i\opt]}^2| \leq \varepsilon) \\
        &\geq \lim_{n \rightarrow \infty} \P(| B(\hparamsolrt{i}| X) - \lambda^2\lsigi{(\scov_i + \lambda I)\inv [\param_0\opt - \param_i\opt]}^2|\leq \varepsilon \wedge c) \\
        & \geq \lim_{n \rightarrow \infty} \P(A_{\frac{\varepsilon}{c} \wedge 1}^c) = 1.
    \end{align*}

\paragraph{Step 5}
Using Theorem 6 of \cite{HastieMoRoTi19}, as  $d \rightarrow \infty$, such that $\frac{d}{n_i} \rightarrow \gamma_i > 1$, we know that the limit of  $\lambda^2\lsigi{(\scov_i + \lambda I)\inv[\param_0\opt - \param_i\opt]}^2$ is given by \eqref{eqn:ridge-bias} with $\gamma = \gamma_i$ and $\esd,\wesd$ be the empirical spectral distribution and weighted empirical spectral distribution of $\Sigma_i$ respectively.

In the case when $\Sigma_i = I$, using Corollary 5 of \cite{HastieMoRoTi19} we have $B_i(\hparamsolrt{i}|X ) = \ltwo{\Pi_i [\param_0\opt - \param_i\opt]}^2 \rightarrow r_i^2\lambda^2 m'_i(-\lambda)$.

\subsubsection*{Variance:}

We let $\xi_i = [\xi_{i,1},\dots,\xi_{i,n}]$ denote the vector of noise.
Substituting in the variance formula and using $\E[\xi_i\xi_j^T] = 0$ and $\E[\xi_i\xi_i^T] = \sigma^2 I$, we get
\begin{align*}
    \var(\hparamsolrt{i} | X) &=\E\left[ \lsigi{(\scov_i + \lambda I)^{-1} \left(\frac{1}{n_i}X_i^T \xi_i + \lambda\wscov\inv \summach p_j\frac{X_j^T \xi_j}{n_j}\right)}^2 \bigg| X \right] \\
    &=  \underbrace{\summach \frac{\lambda^2 p_j^2}{n_j} \tr\left((\scov_i + \lambda I)^{-1}\Sigma (\scov_i + \lambda I)^{-1} \wscov\inv \scov_j \wscov\inv\right) \sigma_j^2}_{(i)} \\
    & + \underbrace{2\lambda p_i \tr\left( \frac{X_i^TX_i}{n_i^2} \wscov\inv (\scov_i + \lambda I)^{-1}\Sigma (\scov_i + \lambda I)^{-1} \right) \sigma_i^2}_{(ii)} 
    +   \underbrace{\tr\left((\scov_i + \lambda I)^{-1}\Sigma (\scov_i + \lambda I)^{-1}  \scov_i\right) \frac{\sigma_i^2}{n_i}}_{(iii)}
\end{align*}

We now study the asymptotic behavior of each of the terms $(i),(ii)$ and $(iii)$ separately.
\paragraph{(i)}
Using the Cauchy Schwartz inequality on Schatten $p-$norms, we get
\begin{align*}
    & \summach \frac{p_j^2\lambda^2\sigma_j^2}{n_j} \tr\left((\scov_i + \lambda I)^{-1}\Sigma (\scov_i + \lambda I)^{-1} \wscov\inv \scov_j \wscov\inv\right)\\ 
    &\le \summach \frac{\lambda^2p_j^2\sigma_j^2}{n_j} \matrixnorms{(\scov_i + \lambda I)^{-1}}_{1} \opnorm{\Sigma}\opnorm{(\scov_i + \lambda I)^{-1}} \opnorm{\wscov\inv} \opnorm{\scov_j} \opnorm{\wscov\inv} \\
    & \le C_5 \sigma_{\max{}}^2\gammamax \left(\summach p_j^2\opnorm{\scov_j}\right),
\end{align*}
where the last inequality holds with probability going to $1$ for some constant $C_5$ because $\P(B_t) \rightarrow 0$. Note that this expression is same as \eqref{eqn:ftfa-var-(i)-cal} and hence the rest of the analysis for this term is same as the one in the proof of \ftfa\ (\Cref{app:proof-perfedavg}).

\paragraph{(ii)}
Using the Cauchy Schwartz inequality on Schatten $p-$norms, we get
\begin{align*}
    & \frac{2p_i\lambda\sigma_i^2}{n_i} \tr\left( (\scov_i + \lambda I)^{-1}\scov_i \wscov\inv(\scov_i + \lambda I)^{-1}\Sigma\right)\\
    & \le  \frac{2p_i\lambda\sigma_i^2 }{n_i}\matrixnorms{(\scov_i + \lambda I)^{-1}}_1 \opnorm{ \scov_i}\opnorm{\wscov\inv}\opnorm{\Sigma} \opnorm{(\scov_i + \lambda I)^{-1}}  \\
    & \le \frac{C_6\sigma_i^2 d p_i}{\lambda n_i},
\end{align*}
where $C_6$ is an absolute constant which captures an upper bound on the operator norm of the sample covariance matrix $\scov_i$ using Bai Yin Theorem \cite{BaiYi93}, and an upper bound on the operator norm of $\wscov\inv$, which follows from $\P(B_t) \to 0$.
Since $p_i \rightarrow 0$, we have \\
$ \frac{2p_i\lambda\sigma_i^2}{n_i} \tr\left( (\scov_i + \lambda I)^{-1}\scov_i \wscov\inv(\scov_i + \lambda I)^{-1}\Sigma\right) \cp 0$

\paragraph{(iii)}

Using Theorem 3 of \cite{HastieMoRoTi19}, as  $d \rightarrow \infty$, such that $\frac{d}{n_i} \rightarrow \gamma_i > 1$, we know that the limit of  $\tr((\scov_i + \lambda I)^{-2}  \scov_i\Sigma_i) \frac{\sigma_i^2}{n_i}$ is given by \eqref{eqn:ridge-var} with $\gamma = \gamma_i$ and $\esd,\wesd$ be the empirical spectral distribution and weighted empirical spectral distribution of $\Sigma_i$ respectively.

In the case when $\Sigma_i = I$, using Theorem 1 of \cite{HastieMoRoTi19} we have $V_i(\hparamsolrt{i} |X) = \frac{\sigma_i^2}{n_i} \tr((\scov_i + \lambda I)^{-2}\scov_i\pinv\Sigma_i) \cp \frac{\sigma_i^2}{\gamma_i - 1}$.
\subsection{Proof of \Cref{thm:permaml}}\label{app:proof-maml}

On solving \eqref{eqn:param-maml} and \eqref{eqn:param-permaml}, the closed form of the estimators $\hparamsolmaml{0}$ and $\hparamsolmaml{i}$ is given by

   \begin{align*}
    \hparamsolmaml{0} &:= \argmin_\param \summach  \frac{p_j}{2n_j} \ltwo{X_j\left[\param - \frac{\alpha}{n_j} X_j^T(X_j \param - y_j)\right] -y_j}^2\\
    &= \argmin_\param \summach \frac{p_j}{2n_j}\ltwo{\left(I_n - \frac{\alpha}{n} X_j X_j^T\right)(X_j\param - y_j)}^2\\ 
    &= \mamlcov\inv \summach\frac{p_j}{n_j} X_j^TW_j^2 y_j
\end{align*}
where $W_j := I - \frac{\alpha}{n_j}X_jX_j^T$ and 
 \begin{align*}
    \hparamsolmaml{i} &:= \argmin_{\param} \ltwo{\hparamsolmaml{0} - \param} \quad s.t.\quad X_i \param = y_i\\
    &= (I - \scov_i\pinv \scov_i) \hparamsolmaml{0} + \scov_i\pinv\scov_i \param_i\opt + \frac{1}{n_i} \scov_i\pinv X_i^T \xi_i
\end{align*}

We now calculate the risk by splitting it into two parts as in \eqref{eqn:risk}, and then calculate the asymptotic bias and variance. 

\subsubsection*{Bias:}
\begin{align*}
    B(\hparamsolmaml{i}| X) &:= \lsigi{\E[\hparamsolmaml{i} | X]  - \param_i\opt}^2 = \ltwo{\Pi_i \left[ \mamlcov\inv \summach\frac{p_j}{n_j} X_j^TW_j^2 X_j (\param_j\opt - \param_i \opt) \right]}^2\\
    &= \lsigi{\Pi_i \left[\param_0\opt - \param_i\opt + \mamlcov\inv \summach\frac{p_j}{n_j} X_j^TW_j^2 X_j (\param_j\opt - \param_0 \opt) \right]}^2
\end{align*}

For simplicity, we let $\Delta_j := \param_j\opt - \param_0\opt$, and we define the events:
\begin{align}
    B_t &:= \left\{ \opnorm{\mamlcov\inv- \E\left[\summach p_j \frac{1}{n_j}X_j^T W_j^2 X_j\right]\inv} > t \right\}\\
    A_t &:= \left\{\lsigi{\mamlcov\inv \summach \frac{p_j}{n_j}X_j^TW_j^2 X_j \Delta_j} > t\right\}
\end{align}

The proof proceeds in the following steps:

\noindent\fbox{%
    \parbox{\textwidth}{%
    \textbf{Bias Proof Outline}
\begin{enumerate}[label={\bf Step \arabic*.},leftmargin=1.5cm]
    \item We first show for any $t > 0$, the $\P(B_t)\rightarrow 0$ as $d \rightarrow \infty$
    \item Then, we show for any $t > 0$, the $\P(A_t)\rightarrow 0$ as $d \rightarrow \infty$
    \item We show that for any $t \in (0, 1]$\footnote{this restriction is not necessary; its just to simplify the accounting} on event $A_t^c$, $B(\hparamsolmaml{i}| X) \leq \lsigi{\Pi_i [\param_0\opt - \param_i\opt]}^2 + c t$ and $B(\hparamsolmaml{i}| X) \geq \lsigi{\Pi_i [\param_0\opt - \param_i\opt]}^2 - c t$
    \item Show that $\lim_{d \rightarrow \infty} \P(| B(\hparamsolmaml{i}| X) - \lsigi{\Pi_i [\param_0\opt - \param_i\opt]}^2|\leq \varepsilon) = 1$ 
    \item Finally, using the asymptotic limit of  $\lsigi{\Pi_i [\param_0\opt - \param_i\opt]}^2$ from Theorem 1 of \cite{HastieMoRoTi19}, we get the result.
\end{enumerate}
}}

We now give the detailed proof:
\paragraph{Step 1}

Since  $\lambda_{\min{}} ( \E\left[\frac{1}{n_j}X_j^T W_j^2 X_j\right]) \geq \lambda_0$, it suffices to show by \Cref{lem:inverse-concentration} that the probability of
\begin{align*}
     C_t &:= \left\{ \opnorm{\summach p_j\left(\frac{1}{n_j}X_j^T W_j^2 X_j- \E\left[\frac{1}{n_j}X_j^T W_j^2 X_j\right]\right)} > t \right\}
\end{align*}
goes to 0 as $d, m \rightarrow \infty$ under \Cref{ass:asym}.
\begin{align}
    \nonumber \P(C_t) &= \P\left(\opnorm{\summach p_j\left[ \left[\scov_j - 2\alpha^2 \scov_j^2 + \alpha^2 \scov_j^3 \right] - \E\left[\frac{1}{n_j}X_j^T W_j^2 X_j\right]\right]} > t \right) \\
    \nonumber &\leq \P\left(\opnorm{\summach p_j\left(\scov_j - \mu_{1, j}\right)} > t/3\right) \\
    & + \P\left(\opnorm{2\alpha\summach p_j\left(\scov_j^2 - \mu_{2, j}\right)} > t/3\right) + \P\left(\opnorm{\alpha^2 \summach p_j\left(\scov_j^3 - \mu_{3, j}\right)} > t/3\right), \label{eqn:mamlbiasstep1}
\end{align}
where $\mu_{p, j} := \E[\scov_j^p]$. We repeatedly apply \Cref{lem:scov_concentration} for $p = 1,2,3$ to bound each of these three terms. It is clear that if $\E[\ltwo{\bx_{j, k}}^{6q}]^{1/(6q)} \lesssim \sqrt{d}$, $\opnorm{\E[\scov_j^{6}]} \leq C_3$ for some constant $C_3$, $(\log d)^{4q} \summach p_j^q n_j \rightarrow 0$, and $(\log d)^{q/2} \summach p_j^{q/2 + 1} \rightarrow 0$, then \eqref{eqn:mamlbiasstep1} goes to 0.

\paragraph{Step 2} 
\begin{align*}
    \P(A_t) &\leq \P(A_t \cap B_{c_1}^c) + \P(B_{c_1})\\
    &\leq \P\left(M\left(c_1 + \frac{1}{\lambda_0}\right) \ltwo{\summach \frac{p_j}{n_j}X_j^TW_j^2 X_j \Delta_j} > t \right) + \P(B_{c_1}) 
\end{align*}
From Step 1, we know that $\lim_{n\rightarrow\infty} \P(B_{c_1}) = 0$. The second inequality comes from the fact that $\ltwo{Ax} \leq \opnorm{A} \ltwo{x}$. To handle the first term, we 
use Markov's inequality.
\begin{align*}
    \P\left(c_2 \ltwo{\summach \frac{p_j}{n_j}X_j^TW_j^2 X_j \Delta_j} > t \right) 
    &\leq \frac{c_2^q}{t^q} \E\left[\ltwo{\summach \frac{p_j}{n_j}X_j^TW_j^2 X_j \Delta_j}^q\right] \\
    & \leq  \frac{(2c_2\sqrt{q})^q}{t^q} \E\left[\left(\summach \ltwo{\frac{p_j}{n_j}X_j^TW_j^2 X_j \Delta_j}^2\right)^{q/2}\right]\\
    &\leq  \frac{(2c_2\sqrt{q})^q}{t^q} \summach p_j\E\left[\left(p_j \ltwo{\frac{1}{n_j}X_j^TW_j^2 X_j \Delta_j}^2\right)^{q/2}\right]\\
    &= \frac{(2c_2\sqrt{q})^q}{t^q} \summach p_j^{q/2 + 1}\E\left[\ltwo{\scov_j(I - \alpha \scov_j)^2 \Delta_j}^{q}\right]\\
    & \leq \frac{(8c_2\sqrt{q})^q}{2t^q} \summach p_j^{q/2 + 1}\E\left[\opnorm{\scov_j}^q + \alpha^2\opnorm{ \scov_j}^{3q}\right] \E[\ltwo{\Delta_j}^{q}]\\
    &\le \frac{(8c_2\sqrt{q})^q}{2t^q} \summach p_j^{q/2 + 1} \left[\alpha^2 K (e\log d)^{3q} n
     _j\right] r_j^q,
\end{align*}
where the last step follows from \Cref{lem:scov_moment} and the final expression goes to 0 since $(\log d)^{3q}\summach p_j^{q/2 + 1} n_jr_j^q \rightarrow 0$.
\paragraph{Step 3, 4 and 5} are same as the bias calculation of proof of \Cref{thm:perfedavg}.

\subsubsection*{Variance:}
We let $\xi_i = [\xi_{i,1},\dots,\xi_{i,n}]$ denote the vector of noise.
\begin{align*}
    \nonumber V_i(\hparamsolmaml{i}; \param_i\opt|X) & = \tr(\cov(\hparamsolmaml{i}|X) \Sigma) = \E[\lsigi{\hparamsolmaml{i} -  \E[\hparamsolmaml{i} | X]}^2|X]\\
    \nonumber & = \E[\lsigi{\Pi_i \left[  \mamlcov\inv \summach \frac{p_j}{n_j} X_j^TW_j^2 \xi_j\right]  + \frac{1}{n_i} \scov_i\pinv X_i^T \xi_i }^2|X] \\
    &=  \underbrace{\summach \frac{p_j^2}{n^2_j} \tr\left(\Pi_i\Sigma_i \Pi_i \mamlcov\inv X_j^TW_j^4X_j \mamlcov\inv\right) \sigma_j^2}_{(i)} + \\ 
    & \underbrace{2 \tr\left(\scov_i\pinv \frac{X_i^TW_j^2X_i}{n_i^2} \mamlcov\inv \Pi_i\Sigma_i\right) \sigma_i^2}_{(ii)} +  \underbrace{\frac{1}{n_i^2} \tr(\scov_i\pinv X_i^TX_i \scov_i\pinv\Sigma_i)\sigma_i^2}_{(iii)}
\end{align*}

We now study the asymptotic behavior of each of the terms $(i),(ii)$ and $(iii)$ separately.

\paragraph{(i)}
Using the Cauchy Schwartz inequality on Schatten $p-$norms and using the fact that the nuclear norm of a projection matrix is at most $d$, we get
\begin{align*}
    & \summach \frac{p_j^2\sigma_j^2}{n_j} \tr\left(\Pi_i \Sigma_i \Pi_i \mamlcov\inv \scov_j(I - \alpha \scov_j)^4 \mamlcov\inv\right)\\ 
    &\le \summach \frac{p_j^2\sigma_j^2}{n_j} \matrixnorms{\Pi_i}_{1} \opnorm{\Sigma_i}\opnorm{\Pi_i}\opnorm{\mamlcov\inv} \opnorm{\scov_j(I - \alpha \scov_j)^4} \opnorm{\mamlcov\inv} \\
    & \le C_7 \sigma_{\max{}}^2\gammamax \left(\summach p_j^2\opnorm{\scov_j(I - \alpha \scov_j)^4}\right),
\end{align*}
where the last inequality holds with probability going to $1$ for some constant $C_7$ because $\P(C_t) \rightarrow 0$.
Lastly, we show that $\P\left(\summach p_j^2\opnorm{\scov_j(I - \alpha \scov_j)^4} > t\right) \rightarrow 0$. Using Markov's and Jensen's inequality, we have
\begin{align*}
    \P\left(\summach p^2_j\opnorm{ \scov_j(I - \scov_j)^4} > t\right) 
    & \le \frac{\E\left[\summach p_j^2\opnorm{\scov_j(I - \alpha \scov_j)^4}\right]^q}{t^q} \\
    & \le  \frac{\summach p_j^{q + 1} \E\left[\opnorm{\scov_j(I - \alpha \scov_j)^4}^q\right]}{t^q}\\
    & \le  \frac{\summach p_j^{q + 1} \E\left[\opnorm{\scov_j}\opnorm{(I - \alpha \scov_j)}^{4q}\right]}{t^q}\\
    & \le \frac{\summach p_j^{q + 1} \E\left[\opnorm{\scov_j}\left(\opnorm{I} +\alpha \opnorm{\scov_j}\right)^{4q}\right]}{t^q}\\
    & \le 2^{4q-1} \frac{\summach p_j^{q + 1} \E\left[\opnorm{\scov_j} +\alpha^4 \opnorm{\scov_j}^{5q}\right]}{t^q}
\end{align*}
Using \Cref{lem:scov_moment} and Markov's inequality, we have
\begin{align*}
    \P\left(\summach p^2_j\opnorm{\scov_j} > t\right) 
    & \le 2^{4q-1} \left(K_1\frac{\summach p_j^{q + 1} (e\log d)n_j}{t^q} + K_2\alpha ^4 \frac{\summach p_j^{q + 1} (e\log d)^{5q}n_j}{t^q}\right).
\end{align*}
Finally, since we know that $\summach p_j^{q + 1} (\log d)^{5q}n_j \rightarrow 0$, we have $\P\left(\summach p^2_j\opnorm{ \scov_j(I - \alpha \scov_j)^4} > t\right) \rightarrow 0$.

\paragraph{(ii)}
Using the Cauchy Schwartz inequality on Schatten $p-$norms and using the fact that the nuclear norm of a projection matrix is $d-n$, we get

\begin{align*}
    & 2 \tr\left(\scov_i\pinv \frac{X_i^TW_i^2X_i}{n_i^2} \mamlcov\inv \Pi_i\Sigma_i\right) \sigma_i^2\\
    &=  2 \tr\left(\Pi_i \Sigma_i \scov_i\pinv \frac{\scov_i(I - \scov_i)^2}{n_i} \mamlcov\inv\right) \sigma_i^2\\
    & \le  \frac{2p_i\sigma^2 }{n_i}\matrixnorms{\Pi_i}_1 \opnorm{\Sigma_i} \opnorm{\scov_i\pinv\scov_i} \opnorm{(I - \scov_i)^2} \opnorm{\mamlcov\inv}  \\
    & \leq C_4p_i,
\end{align*}
where the last inequality holds with probability going to $1$ for some constant $C_4$ because $\P(B_t) \rightarrow 0$ and using \Cref{ass:cov}. Since $p_i \to 0$, we have

\begin{align*}
    2 \tr\left(\scov_i\pinv \frac{X_i^TW_i^2X_i}{n_i^2} \mamlcov\inv \Pi_i\Sigma_i\right) \sigma_i^2 \to 0
\end{align*}

\paragraph{(iii)}
\[
\frac{1}{n_i^2} \tr(\scov_i\pinv X_i^TX_i \scov_i\pinv\Sigma_i)\sigma_i^2 = \frac{1}{n_i} \tr(\scov_i\pinv\Sigma_i)\sigma_i^2
\]
Using Theorem 3 of \cite{HastieMoRoTi19}, as  $d \rightarrow \infty$, such that $\frac{d}{n_i} \rightarrow \gamma_i > 1$, we know that the limit of  $\frac{\sigma_i^2}{n_i} \tr(\scov_i\pinv\Sigma_i)$ is given by \eqref{eqn:min-var} with $\gamma = \gamma_i$ and $\esd,\wesd$ be the empirical spectral distribution and weighted empirical spectral distribution of $\Sigma_i$ respectively.

In the case when $\Sigma_i = I$, using Theorem 1 of \cite{HastieMoRoTi19} we have $V_i(\hparamsolmaml{i}|X) = \frac{\sigma_i^2}{n_i} \tr(\scov_i\pinv) \rightarrow \frac{\sigma_i^2}{\gamma_i - 1}$.

\subsection{Proof of \Cref{thm:prox}}\label{app:proof-prox}

The solution to this minimization problem in \eqref{eqn:param-prox} is given by
\begin{align*}
    \hparamsolprox{0} = \param_0\opt + Q\inv\left(\summach p_jT_j\inv\scov_j \Delta_j + \summach p_jT_j\inv\frac{1}{n_j}X_j^T\xi_j\right),
\end{align*}

where $\Delta_j = \param_j\opt - \param_0\opt$, $T_j = \scov_j + \lambda I$ and $Q = I - \lambda\summach p_jT_j^{-1}$.
The personalized solutions are then given by 
\begin{align*}
    \hparamsolprox{i} = T_i\inv\left(\lambda \hparamsolprox{0} + \scov_i \param_i\opt + \frac{1}{n_i}X_i^T\xi_i\right)
\end{align*}

We now calculate the risk by splitting it into two parts as in \eqref{eqn:risk}, and then calculate the asymptotic bias and variance. 

\subsubsection*{Bias:}

Let $\Delta_j := \param_j\opt - \param_0\opt$, then we have

\begin{align*}
    B(\hparamsolprox{i}| X) := \lsigi{T_i\inv \left( \lambda \param_0\opt - \lambda \param_i\opt + \lambda Q\inv \left[ \summach p_jT_j\inv\scov_j\Delta_j\right]\right)}^2
\end{align*}

The idea is to show that the second term goes to $0$ and use results from \cite{HastieMoRoTi19} to find the asymptotic bias. To do this, we first define the events:

\begin{align*}
    C_t &:= \left\{ \opnorm{\summach p_j(T_j\inv - \E[T_j \inv])} > t \right\}\\
    A_t &:= \left\{\lsigi{ Q\inv \left[ \summach p_j T_j \inv\scov_j\Delta_j\right]} > t\right\}
\end{align*}

The proof proceeds in the following steps:

\noindent\fbox{%
    \parbox{\textwidth}{%
    \textbf{Bias Proof Outline}
\begin{enumerate}[label={\bf Step \arabic*.},leftmargin=1.5cm]
    \item We first show for any $t > 0$, the $\P(C_t)\rightarrow 0$ as $d \rightarrow \infty$
    \item Then, we show for any $t > 0$, the $\P(A_t)\rightarrow 0$ as $d \rightarrow \infty$.
    \item We show that for any $t \in (0, 1]$, $B(\hparamsolprox{i}| X) \leq \ltwo{T_i\inv [\lambda\param_0\opt - \lambda\param_i\opt]}^2 + c t$ and $B(\param, X) \geq \ltwo{T_i\inv [\lambda\param_0\opt - \lambda\param_i\opt]}^2 - c t$
    \item Show that $\lim_{d \rightarrow \infty} \P(| B(\hparamsolprox{i}| X) - \ltwo{T_i\inv [\lambda\param_0\opt - \lambda\param_i\opt]}^2|\leq \varepsilon) = 1$ 
    \item Finally, using the asymptotic limit of  $\ltwo{T_i\inv [\lambda\param_0\opt - \lambda\param_i\opt]}^2$ from Corollary 5 of \cite{HastieMoRoTi19}, we get the result.
\end{enumerate}
}}

We now give the detailed proof:
\paragraph{Step 1}
\begin{align*}
    \P(C_t) &= \P\left(\opnorm{\summach p_j(T_j\inv - \E[T_j \inv])} > t \right)
    \leq \frac{2^q\E\left[\opnorm{\summach \xi_j p_jT_j\inv}^q\right]}{t^q},
\end{align*}
We use Theorem A.1 from \cite{ChenGiTr12} to bound this object.
\begin{align*}
    \E\left[\opnorm{ \summach \xi_j T_j\inv}^q\right] &\leq \left[\sqrt{e \log d} \opnorm{\left(\summach p_j^2\E[T_j\inv]\right)^{1/2}} + (e\log d) (\E \max_j \opnorm{ p_jT_j\inv}^q)^{1/q}\right]^q\\
    &\leq 2^{q-1}\left(\sqrt{e \log d}^q \opnorm{\left(\summach p_j^2\E[(T_j\inv)^2]\right)^{1/2}}^q +  (e\log d)^q (\E \max_j \opnorm{ p_jT_j\inv}^q)\right)\\
    &\leq 2^{q-1}\left(\sqrt{e \log d}^q \opnorm{\summach p_j^2\E[(T_j\inv)^2]}^{q/2} + \frac{(e \log d)^q \max_j p_j^q }{\lambda^q}\right)\\
    &\leq 2^{q-1}\left(\sqrt{e \log d}^q \summach p_j^{q/2 + 1}\opnorm{(\E[(T_j\inv)^2])^{q/2}} + \frac{1}{\lambda^q}(e \log d)^q \summach p_j^q\right)\\
    &\leq \frac{2^{q-1}}{\lambda^q}\left((e\log d)^{q/2} \summach p_j^{q/2 + 1} + (e \log d)^q \summach p_j^q\right), 
\end{align*}
where we use the fact that $\opnorm{T_j\inv}  = \opnorm{(\scov_j + \lambda I)\inv} \le \frac{1}{\lambda}$ since $\scov_j$ is always positive semidefinite. Since $(\log d)^{q/2} \summach p_j^{q/2 + 1}$ and $(\log d)^q \summach p_j^q$, we get that $\P(C_t) \rightarrow 0$ for all $t > 0$.

\paragraph{Step 2} To prove this step, we will first use a helpful lemma,
\begin{lemma}\label{lem:prox-eig}
Suppose that $\Sigma = \E[\scov] \in \R^{d, d}$ has a spectrum supported on $[a , b]$ where $0 < a <b < \infty$. Further suppose that $\E\left[\opnorm{\scov^2}\right] \leq \tau$ and there exists an $R \geq b$ such that $\P(\lambda_{\max{}}(\scov) > R) \leq \frac{a^2}{8 \tau}$, then
\begin{align*}
    \opnorm{\E[(\scov + \lambda I) \inv]}  \leq \frac{1}{\lambda} \left(1 - \frac{a^3}{16 \tau(R + \lambda)} \right) \leq \frac{1}{\lambda}
\end{align*}
\end{lemma}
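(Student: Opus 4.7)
}
The plan is to reduce the operator-norm bound on $\E[(\scov+\lambda I)^{-1}]$ to a \emph{lower} bound on the smallest eigenvalue of $\E[\scov(\scov+\lambda I)^{-1}]$, via the resolvent identity
\begin{align*}
(\scov+\lambda I)^{-1} \;=\; \lambda^{-1} I \;-\; \lambda^{-1}\, \scov(\scov+\lambda I)^{-1}.
\end{align*}
Taking expectations and then operator norms, this gives
\begin{align*}
\opnormbig{\E[(\scov+\lambda I)^{-1}]} \;\le\; \lambda^{-1}\bigl(1 - \lambda_{\min}\bigl(\E[\scov(\scov+\lambda I)^{-1}]\bigr)\bigr),
\end{align*}
so it suffices to show $\lambda_{\min}\bigl(\E[\scov(\scov+\lambda I)^{-1}]\bigr) \ge \frac{a^3}{16\tau(R+\lambda)}$.

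The second step is to restrict to the good event $E \defeq \{\lambda_{\max}(\scov) \le R\}$. Because $\scov$ commutes with $(\scov+\lambda I)^{-1}$ and both are PSD, on $E$ we get the PSD inequality $\scov(\scov+\lambda I)^{-1} \succeq (R+\lambda)^{-1}\,\scov$, while on $E^c$ the matrix $\scov(\scov+\lambda I)^{-1}$ is still PSD and can simply be dropped from below. Thus, for any unit vector $v$,
\begin{align*}
v^T \E[\scov(\scov+\lambda I)^{-1}] v \;\ge\; \frac{1}{R+\lambda}\, \E[v^T \scov v \cdot \indic{E}] \;=\; \frac{1}{R+\lambda}\bigl(v^T\Sigma v - \E[v^T \scov v \cdot \indic{E^c}]\bigr).
\end{align*}

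The third step is to control the truncation error $\E[v^T\scov v\, \indic{E^c}]$ by Cauchy--Schwarz and the moment hypothesis $\E[\opnorm{\scov^2}] \le \tau$, using $\opnorm{\scov}^2 = \opnorm{\scov^2}$:
\begin{align*}
\E[v^T \scov v \cdot \indic{E^c}] \;\le\; \sqrt{\E[\opnorm{\scov}^2]}\,\sqrt{\P(E^c)} \;\le\; \sqrt{\tau \cdot \tfrac{a^2}{8\tau}} \;=\; \tfrac{a}{\sqrt{8}}.
\end{align*}
Since $v^T\Sigma v \ge a$, this yields $\E[v^T\scov v\,\indic{E}] \ge a(1-1/\sqrt{8}) \ge a/2$, and therefore $\lambda_{\min}\bigl(\E[\scov(\scov+\lambda I)^{-1}]\bigr) \ge \frac{a}{2(R+\lambda)}$. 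Plugging back in gives an upper bound on $\opnorm{\E[(\scov+\lambda I)^{-1}]}$ of $\lambda^{-1}(1 - \tfrac{a}{2(R+\lambda)})$, which is at least as strong as the stated bound $\lambda^{-1}\bigl(1 - \tfrac{a^3}{16\tau(R+\lambda)}\bigr)$ because $\tau \ge \lambda_{\max}(\Sigma)^2 \ge a^2$ by Jensen.

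The routine part is the resolvent identity and Cauchy--Schwarz; the only place that requires any care is the PSD manipulation in the second step---specifically, the fact that $\scov$ and $(\scov+\lambda I)^{-1}$ share eigenvectors (so the product is PSD and the comparison $(\scov+\lambda I)^{-1} \succeq (R+\lambda)^{-1}I$ on $E$ may be multiplied on both sides by $\scov^{1/2}$) and the fact that $\E[\scov\,\indic{E^c}] \preceq \Sigma$, so subtracting it from $\Sigma$ preserves enough of the lower eigenvalue bound. The tail bound $\P(\lambda_{\max}(\scov)>R)\le a^2/(8\tau)$ is calibrated exactly to make the Cauchy--Schwarz loss a fixed fraction of $a$, which is what buys the multiplicative factor $(1-\Theta(a/(R+\lambda)))$ in the final estimate.
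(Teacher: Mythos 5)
Your proof is correct, and it takes a genuinely different route from the paper's. The paper fixes a unit vector $u$, introduces the events $A = \{u^T\scov u \ge a/2\}$ and $B = \{\lambda_{\max{}}(\scov)\le R\}$, lower-bounds $\P(A)$ via Paley--Zygmund using the second-moment hypothesis, and then bounds $u^T(\scov+\lambda I)\inv u$ on $A\cap B$ by setting up a small linear program over the eigenvalue weights and exhibiting a dual feasible point; combining the two events yields the constant $\frac{a^3}{16\tau(R+\lambda)}$. You instead use the resolvent identity to convert the problem into a lower bound on $\lambda_{\min{}}(\E[\scov(\scov+\lambda I)\inv])$, truncate at the same event $\{\lambda_{\max{}}(\scov)\le R\}$ (where $\scov(\scov+\lambda I)\inv \succeq (R+\lambda)\inv\scov$), and control the truncation error by Cauchy--Schwarz, so the tail assumption enters only through $\sqrt{\tau\,\P(E^c)} \le a/\sqrt{8}$. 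This avoids both Paley--Zygmund and the LP duality step, and it yields the quantitatively stronger constant $\frac{a}{2(R+\lambda)}$; your final reduction to the stated bound is also sound, since $\tau \ge \opnorm{\Sigma}^2 \ge a^2$ by Jensen, so $\frac{a^3}{16\tau} \le \frac{a}{16} \le \frac{a}{2}$. The one step worth writing out fully is the passage from the resolvent identity to the eigenvalue statement: since $0 \preceq \E[\scov(\scov+\lambda I)\inv] \preceq I$, the matrix $\lambda\inv I - \lambda\inv\E[\scov(\scov+\lambda I)\inv]$ is PSD and its operator norm is exactly $\lambda\inv(1-\lambda_{\min{}}(\E[\scov(\scov+\lambda I)\inv]))$, which is what you need; as stated, "taking operator norms" of the identity is slightly loose but the conclusion is right.
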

\begin{proof}
Fix an arbitrary vector $u \in \R^d$ with unit $\ell_2$ norm. We fix $\delta = a / 2 > 0$, we define the event $A \coloneqq \{ u^T \scov u \geq \delta\}$ and $B \coloneqq \{\lambda_{\max{}}(\scov) \leq R \}$
\begin{align*}
    u^T \E[(\scov + \lambda I) \inv] u \leq \E[\bindic{A \cap B}u^T (\scov + \lambda I) \inv u ] + \frac{1}{\lambda}(1 - \P(A \cap B))
\end{align*}

Let $\sigma_i^2$ and $v_i$ denote the $i$th eigenvalue and eigenvector of $\scov$ respectively sorted in descending order with respect to eigenvalue ($\sigma_1^2 \geq \sigma_2^2 \geq \ldots \geq \sigma_d^2$). 
On the event $A$, we have that $u^T (\scov + \lambda I)\inv u$ has value no larger than 
\begin{align*}
   \max_{\alpha\in \R^d}&\ \sum_{i=1}^d \frac{1}{\sigma_i^2 + \lambda} \alpha_i\\
   \text{s.t.   }& \alpha \geq 0\\
   & \ones^T \alpha = 1\\
   & \sum_{i=1}^d \sigma_i^2 \alpha_i \geq \delta
\end{align*}

The dual of this problem is
\begin{align*}
    \min_{\theta }&\ \max_{j \in [d]}\left\{\theta \sigma_j^2 + \frac{1}{\sigma_j^2 +\lambda }\right\} - \theta \delta\\
    \text{s.t.   }& \theta \geq 0
\end{align*}

It suffices to demonstrate that there exists a $\theta$ which satisfies the constraints of the dual and has objective value less than $\frac{1}{\lambda}$. We can verify that selecting $\theta = \frac{1}{\lambda(\sigma_1^2 + \lambda)}$ has an objective value of 
\begin{align*}
    \frac{1}{\lambda} - \frac{\delta}{\lambda(\sigma_1^2 + \lambda)}
\end{align*}
which is less than the desired $\frac{1}{\lambda}$. All that remains is to lower bound $\P(A \cap B) \geq \P(A) - \P(B^c)$. We know by Paley-Zygmund 
\begin{align*}
    \P(A) \geq \P\left(u^T \scov u \geq \frac{\delta}{a} u^T \Sigma u\right) \geq \P\left(u^T \scov u \geq \frac{1}{2} u^T \Sigma u\right) \geq \frac{(u^T\Sigma u)^2}{4 \E[(u^T \scov u)^2]} \geq \frac{a^2}{4\tau}
\end{align*}

Note that $a^2/4\tau <1$ because the second moment of a random variable is no smaller than the first moment squared of the random variable. Moreover, by construction, $R$ is large enough such that $\P(B^c) \leq \P(A) / 2$, thus,

\begin{align*}
    u^T \E[(\scov + \lambda I)\inv]u &\leq \frac{1}{\lambda}\left(1 - \frac{a}{2(R + \lambda)}\right) \frac{a^2}{8\tau} + \frac{1}{\lambda}\left(1 - \frac{a^2}{8\tau}\right)\\
    &= \frac{1}{\lambda} \left(1 - \frac{a^3}{16 \tau(R + \lambda)} \right)
\end{align*}

\end{proof}

Recall that we have the assumptions that for sufficiently large $m$, for all $j \in [m]$ we have $\Sigma_j$ has a spectrum supported on $[a , b]$ where $a = 1/M$ and $b = M$ and $\E\left[\opnorm{\scov_j^2}\right] \leq \tau_3$. Moreover, since we have the assumption that there exists an $R \geq b$ such that $\limsup_{m \rightarrow \infty} \sup_{j \in [m]} \P(\lambda_{\max{}}(\scov_j) > R) \leq \frac{a^2}{16 \tau_3}$, by \Cref{lem:prox-eig} there exists and $1 > \varepsilon>0$ such that for sufficiently large $m$, for all $j\in[m]$, $\opnorm{\E[(\scov_j + \lambda I)\inv]} \leq \frac{1 - \varepsilon}{\lambda}$.
\begin{align*}
    \P(A_t) \leq \P(A_t \cap C_{c_1}^c) + \P(C_{c_1})
\end{align*}
Since we know $\P(C_{c_1}) \rightarrow 0$, it suffices to bound the first term.
\begin{align*}
    \P(A_t \cap C_{c_1}^c) &\leq \P\left( \sqrt{\covop}\opnorm{Q\inv}\ltwo{\left[\summach p_j T_j\inv\scov_j\Delta_j\right]} >t \cap C_{c_1}^c \right)\\
    &= \P\left(\sqrt{\covop}\left(1 - \opnorm{ \lambda \summach p_jT_j\inv}\right)\inv \ltwo{\left[\summach p_j T_j\inv\scov_j\Delta_j\right]} >t \cap C_{c_1}^c \right)\\
    &\leq \P\left( \sqrt{\covop}\left(1 - \opnorm{\lambda\summach p_j\E[T_j\inv] + E_{c_1}}\right)\inv\ltwo{\left[\summach p_j T_j\inv\scov_j\Delta_j\right]} >t \right)\\
    &\leq \P\left( \sqrt{\covop}\left(1 - \lambda\summach p_j\opnorm{\E[T_j\inv]} - c_1\right)\inv\ltwo{\left[\summach p_j T_j\inv\scov_j\Delta_j\right]} > t \right)
\end{align*}
where we used Jensen's inequality in the last step. $E_{c_1}$ is a matrix error term which on the event $C_{c_1}^c$ has operator norm bounded by $c_1$. As discussed, we have that $\summach p_j\opnorm{\E[T_j\inv]}$ is less than $\frac{1 - \varepsilon}{\lambda}$, which shows there exists a constant $c_2$, such that $\sqrt{\covop}(1 - \lambda\summach p_j\opnorm{\E[T_j\inv]} - c_1)\inv < c_2$. Now, we have, using \Cref{lem:vec-khintchine},
\begin{align*}
    \P\left(c_2\ltwo{\summach p_jT_j\inv\scov_j \Delta_j } > t\right) &\leq \frac{c_2^q\E\left[\ltwo{\summach p_j T_j\inv\scov_j \Delta_j }^q\right]}{t^q} \leq \frac{(2c_2\sqrt{q})^q \E\left[\left(\summach \ltwo{p_jT_j\inv\scov_j \Delta_j}^2\right)^{q/2}\right]}{t^q }
\end{align*}
Using Jensen's inequality and the definition of operator norm, we have
\begin{align*}
       \frac{(2c_2\sqrt{q})^q \E\left[\left(\summach p_j^2 \ltwo{T_j\inv\scov_j \Delta_j}^2\right)^{q/2}\right]}{t^q} &= \frac{(2c_2\sqrt{q})^q \E\left[\left(\summach p_j \cdot p_j \ltwo{T_j\inv\scov_j \Delta_j}^2\right)^{q/2}\right]}{t^q}\\
       & \leq  \frac{(2c_2\sqrt{q})^q \summach p_j^{q/2 + 1} \E\left[\ltwo{T_j\inv\scov_j \Delta_j}^{q}\right]}{t^q}  \\
    &\leq \frac{(2c_2\sqrt{q})^q \summach p_j^{q/2 + 1}  \E\left[\opnorm{T_j\inv}^q\right]\E\left[\opnorm{\scov_j}^q\right] \E\left[ \ltwo{\Delta_j}^{q}\right]}{t^q}
\end{align*}

Lastly, we can bound this using \Cref{lem:scov_moment} as follows and using the fact that $\opnorm{T_j\inv} \le \frac{1}{\lambda}$.

\begin{align*}
   \P\left(c_2\ltwo{\summach p_jT_j\inv\scov_j \Delta_j } > t\right) &\le\frac{(2c_2\sqrt{q})^q \summach p_j^{q/2 + 1}  K n_j (e\log d)^q r_j^q}{\lambda^q t^q} \rightarrow 0,
\end{align*}
using  $(\log d)^q\summach n_j p_j^{q/2 + 1}  \rightarrow 0$

\paragraph{Step 3} 
For any $t \in (0, 1]$, on the event $A_t^c$, we have that 
\[B(\hparamsolprox{i}| X) = \lsigi{T_i\inv[\lambda\param_0\opt - \lambda\param_i\opt + E]}^2\]
for some vector $E$ where we know $\lsigi{E} \leq t$ (which means $\ltwo{E} \leq t \sqrt{\covop}$).

We can form the bounds 
\begin{align*}
\lsigi{T_i\inv[\lambda\param_0\opt - \lambda\param_i\opt + E]}^2 &\leq  \lambda^2\lsigi{T_i\inv[\param_0\opt - \param_i\opt]}^2 + \lsigi{T_i\inv E}^2 + 2\lambda\lsigi{T_i\inv E} \lsigi{T_i\inv[\param_0\opt - \param_i\opt]}\\
&\leq\lambda^2 \lsigi{T_i\inv[\param_0\opt - \param_i\opt]}^2 + \lambda^{-2}t^2 \covop^2 + 2t \lambda^{-1}r_i^2 \covop^{3/2}\\
\lsigi{T_i\inv[\lambda\param_0\opt -\lambda \param_i\opt + E]}^2 &\geq  \lambda^2\lsigi{T_i\inv[\param_0\opt - \param_i\opt]}^2 + \lsigi{T_i\inv E}^2 - 2\lambda\lsigi{T_i\inv E} \lsigi{T_i\inv[\param_0\opt - \param_i\opt]}\\
&\geq \lambda^2\lsigi{T_i\inv[\param_0\opt - \param_i\opt]}^2 - 2t\lambda^{-1} r_i^2 \covop^{3/2}
\end{align*}

Since $t\in (0, 1]$, we have that $t^2 \leq t$ and thus we can choose $c = \lambda^{-2}M^2 + 2r_i^2 \lambda\inv M^{3/2}$

\paragraph{Step 4} Reparameterizing $\varepsilon := c t$, we have that for any $\varepsilon > 0$
    \begin{align*}
        \lim_{n \rightarrow \infty} \P(| B(\hparamsolprox{i}, X) - \lsigi{T_i\inv [\param_0\opt - \param_i\opt]}^2|\leq \varepsilon) &\geq \lim_{n \rightarrow \infty} \P(| B(\hparamsolprox{i}| X) - \lsigi{T_i\inv [\param_0\opt - \param_i\opt]}^2|\leq \varepsilon \wedge c) \\
        \geq \lim_{n \rightarrow \infty} \P(A_{\frac{\varepsilon}{c} \wedge 1}^c) = 1
    \end{align*}
    
\paragraph{Step 5}  Using Theorem 6 of \cite{HastieMoRoTi19}, as  $d \rightarrow \infty$, such that $\frac{d}{n_i} \rightarrow \gamma_i > 1$, we know that the limit of  $\lambda^2\lsigi{(\scov_i + \lambda I)\inv[\param_0\opt - \param_i\opt]}^2$ is given by \eqref{eqn:ridge-bias} with $\gamma = \gamma_i$ and $\esd,\wesd$ be the empirical spectral distribution and weighted empirical spectral distribution of $\Sigma_i$ respectively.

In the case when $\Sigma_i = I$, using Corollary 5 of \cite{HastieMoRoTi19} we have $B_i(\hparamsolprox{i}|X ) = \ltwo{\Pi_i [\param_0\opt - \param_i\opt]}^2 \rightarrow r_i^2\lambda^2 m'_i(-\lambda)$.

\subsubsection*{Variance}

\begin{align*}
    \var(\hparamsolprox{i} | X) &= \E\left[\lsigi{T_i^{-1}\left(\lambda Q^{-1}\left[\summach p_jT_j^{-1} \frac{1}{n}X_j^T\xi_j\right] + \frac{1}{n_i}X_i^T\xi_i\right)}^2\right] \\
    & =  \underbrace{\summach \frac{\lambda^2 p_j^2}{n_j} \tr\left(T_i^{-1}\Sigma_i T_i^{-1} Q\inv T_j\scov_j T_jQ\inv\right) \sigma_j^2 }_{(i)}\\  
    & +\underbrace{\frac{2\lambda\sigma_i^2p_i}{n_i}2\tr\left(\Sigma_iT_i^{-1}Q^{-1} T_i\inv \scov_iT_i^{-1}\right)}_{(ii)} \\  
    & + \underbrace{\tr\left(T_i^{-1}\Sigma_iT_i^{-1}  \scov_i\right) \frac{\sigma_i^2}{n_i}}_{(iii)} 
\end{align*}
We now study the asymptotic behavior of each of the terms $(i),(ii)$ and $(iii)$ separately. In these steps, we will have to bound $\opnorm{Q\inv}$. To do this, we observe that there exists a sufficiently large constant $t$ such that the following statement is true.
\begin{align*}
    \P(\opnorm{Q\inv} > t) &= \P(\opnorm{Q\inv} > t \cap C_{c_1}^c) + \P(C_{c_1})\\ 
    &= \P\left(\left(1 - \opnorm{ \lambda \summach p_jT_j\inv}\right)\inv >t \cap C_{c_1}^c \right) + o(1)\\
    &\leq \P\left(\left(1 - \opnorm{\lambda\summach p_j\E[T_j\inv] + E_{c_1}}\right)\inv >t \right)+ o(1)\\
    &\leq \P\left( \left(1 - \lambda\summach p_j\opnorm{\E[T_j\inv]} - c_1\right)\inv > t \right)+ o(1)\\
    &\leq o(1).
\end{align*}
This is true because of \Cref{lem:prox-eig}.

\paragraph{(i)} Using the Cauchy Schwartz inequality on Schatten $p-$norms and using the high probability bounds from the bias proof, we get that for some constant $C_8$, the following holds with probability going to 1.
\begin{align*}
    &\summach \frac{\lambda^2 p_j^2}{n_j} \tr\left(T_i^{-1}\Sigma_iT_i^{-1} Q\inv T_j\scov_j T_jQ\inv\right) \sigma_j^2\\
    & \le \summach \frac{\sigma_j^2\lambda^2 p_j^2}{n_j} \matrixnorms{T_i^{-1}}_1 \opnorm{T_i^{-1}} \opnorm{\Sigma_i} \opnorm{Q\inv} \opnorm{T_j}\opnorm{\scov_j} \opnorm{T_j}\opnorm{Q\inv} \\
    & \le \summach \frac{\covop C_8\sigma_j^2\lambda^2 p_j^2d}{n_j} \opnorm{\scov_j}\\
    & \le \gamma_{\max}\covop C_8\sigma_{\max{}}^2\lambda^2\summach p_j^2\opnorm{\scov_j}
\end{align*}
$\opnorm{Q\inv}$ is upper bounded by some constant as shown above. We use the same technique as in proof of the variance of \Cref{thm:perfedavg} calculation from \Cref{app:proof-perfedavg} to show that $\summach p_j^2\opnorm{\scov_j} \cp 0$.

\paragraph{(ii)}
Using the Cauchy Schwartz inequality on Schatten $p-$norms and using the high probability bounds from the bias proof, we get that for some constant $C_9$, the following holds with probability going to 1.
\begin{align*}
    \frac{2\lambda\sigma_i^2p_i}{n_i}2\tr\left(\Sigma_i T_i^{-1}Q^{-1} T_i\inv \scov_iT_i^{-1}\right) & \le  \frac{2p_i\lambda\sigma_i^2 }{n_i}\matrixnorms{T_i^{-1}}_1 \opnorm{T_i\inv} \opnorm{\Sigma_i}  \opnorm{Q\inv}\opnorm{T_i\inv}\opnorm{\scov_i} \\
    & \le \frac{\covop C_9\sigma_i^2 d p_i}{\lambda n_i}.
\end{align*}
$\opnorm{Q\inv}$ is upper bounded by some constant as shown above. Moreover, since $p_i \rightarrow 0$, we have $ \frac{2\lambda\sigma^2p_i}{n_i}2\tr\left(T_i^{-1}Q^{-1} T_i\inv \scov_iT_i^{-1}\right) \cp 0$

\paragraph{(iii)}

Using Theorem 3 of \cite{HastieMoRoTi19}, as  $d \rightarrow \infty$, such that $\frac{d}{n_i} \rightarrow \gamma_i > 1$, we know that the limit of  $\tr((\scov_i + \lambda I)^{-2}  \scov_i\Sigma_i) \frac{\sigma_i^2}{n_i}$ is given by \eqref{eqn:ridge-var} with $\gamma = \gamma_i$ and $\esd,\wesd$ be the empirical spectral distribution and weighted empirical spectral distribution of $\Sigma_i$ respectively.

In the case when $\Sigma_i = I$, using Theorem 1 of \cite{HastieMoRoTi19} we have $V_i(\hparamsolprox{i} |X) = \frac{\sigma_i^2}{n_i} \tr((\scov_i + \lambda I)^{-2}\scov_i\pinv\Sigma_i) \cp \frac{\sigma_i^2}{\gamma_i - 1}$.
\subsection{Proof of \Cref{cor:naive}}\label{app:proof-naive}

We first prove that $\rho_i \geq r_i$. If we let $\omega \in \Omega$ be the probability space associated with $\param_i\opt$, we claim that $\<\param_0\opt, \param_i\opt(\omega) - \param_0\opt \> = 0$ for all (not just a.s.) $\omega \in \Omega$. For the sake of contradiction, suppose that there exists $\omega'$ such that $\<\param_0\opt, \param_i\opt(\omega') - \param_0\opt \> > 0$. If there exists $\omega'' \neq \omega'$ such that $\<\param_0\opt, \param_i\opt(\omega'') - \param_0\opt \> \neq \<\param_0\opt, \param_i\opt(\omega') - \param_0\opt \>$, then observe that the following equalities are true:
\begin{align*}
    \ltwo{\param_i\opt(\omega') - \param_0\opt}^2 + \ltwo{\param_0\opt}^2 &+ 2\< \param_0\opt, \param_i\opt(\omega') - \param_0\opt \> =  \ltwo{\param_i\opt(\omega')}^2
    = \rho_i^2 = \ltwo{\param_i\opt(\omega'')}^2\\
    &= \ltwo{\param_i\opt(\omega'') - \param_0\opt}^2 + \ltwo{\param_0\opt}^2 + 2\< \param_0\opt, \param_i\opt(\omega'') - \param_0\opt \>
\end{align*}

In looking at the first and last term, we see that this implies $\< \param_0\opt, \param_i\opt(\omega') - \param_0\opt \> = \< \param_0\opt, \param_i\opt(\omega'') - \param_0\opt \>$, which is a contradiction. Consider the other possibility that for all $\omega'' \in \Omega$, $\<\param_0\opt, \param_i\opt(\omega'') - \param_0\opt \> = \<\param_0\opt, \param_i\opt(\omega') - \param_0\opt \> > 0$. This would imply that $\E[\<\param_0\opt, \param_i\opt(\omega) - \param_0\opt \>] > 0$ which is a contradiction, since $\E[\param_i\opt(\omega)] = \param_0\opt$. Now since $\<\param_0\opt, \param_i\opt(\omega) - \param_0\opt \> = 0$, we have that $\rho_i^2 = r_i^2 + \ltwo{\param_0\opt}^2 \geq r_i^2$.


To show the result for $\hparamsolfa{0}$, recall \cref{eqn:paramfa_expanded}. The bias associated with this estimator is 
\begin{align*}
    B_i(\hparamsolfa{0}|X ) = \ltwo{\param_0\opt - \param_i\opt + (\summach p_j \scov_j)\inv \summach p_j \scov_j (\param_j\opt - \param_0\opt)}^2
\end{align*}
Using the bias proof of \Cref{thm:perfedavg}, treating $\Pi_i$ as the identity, we know that this quantity converges in probability to $r_i^2$.
Showing the variance of $\hparamsol{0}{FA}{0}$ of goes to $0$ follows directly from part $(i)$ of the variance proof of \Cref{thm:perfedavg}, again treating $\Pi_i$ as the identity.

The result regarding the estimator $\hparam^N_i$ is a direct consequence of Theorem 1 from \cite{HastieMoRoTi19}. The result regarding the estimator $\hparamsol{i}{N}{\lambda}$ is a direct consequence of Corollary 5 from \cite{HastieMoRoTi19}.

\subsection{Proof that \rtfa\ has lower risk than FedAvg}\label{sec:rtfabetterfedavg}
We show that \rtfa\ with optimal hyperparameter has lower risk than FedAvg by using the fact that $(1 - 1/\gamma)^2 \leq (1 + 1/\gamma)^2$ for $\gamma \geq 1$ and completing the square:
\begin{align*}
    \risk_i(\hparam_i^R(\lambda\opt); \param_i\opt | X) &= \frac{1}{2}\left[r_i^2\left(1 - \frac{1}{\gamma}\right) - \sigma_i^2 + \sqrt{r_i^4 \left(1 - \frac{1}{\gamma}\right)^2 + \sigma_i^4 + 2\sigma_i^2 r_i^2\left(1 + \frac{1}{\gamma}\right) } \right] \\
    &\leq r_i^2 = \risk_i(\hparamsolfa{0}; \param_i\opt|X).
\end{align*}
 \newpage
 \section{Algorithm implementations} \label{app:algs}

In this section, we give all steps of the exact algorithms used to implement all algorithms in the experiments section. 

\begin{algorithm}[tb]
   \caption{Naive local training}
   \label{alg:local}
\begin{algorithmic}[1]
    \REQUIRE m: number of users, K: epochs
    \FOR{$i\gets1$ to $m$}
    \STATE Each client runs K epochs of SGM with personal stepsize $\alpha$
    \ENDFOR
\end{algorithmic}
\end{algorithm}

\begin{algorithm}[tb]
   \caption{Federated Averaging \cite{McMahanMoRaHaAr17}}
   \label{alg:fedavg}
\begin{algorithmic}[1]
    \REQUIRE $\rounds$: Communication Rounds, $\sampuser$: Number of users sampled each round, $\localupdates$: Number of local update steps, $\hparamsolfa{0,0}$: Initial iterate for global model
    \FOR{$r\gets0$ to $\rounds - 1$}
    \STATE Server samples a subset of clients $\mc{S}_r$ uniformly at random such that $|\mc{S}_r| = \sampuser$
    \STATE Server sends $\hparamsolfa{0,r}$ to all clients in $\mc{S}_r$
    \FOR{$i \in \mc{S}_r$}
    \STATE Set $\hparamsolfa{i,r+1,0} \gets \hparamsolfa{0,r}$
    \FOR{$k \gets 1$ to $\localupdates$}
    \STATE Sample a batch $\mc{D}_k^i$ of size $\batchsize$ from user $i$'s data $\mc{D}_i$
    \STATE Compute Stochastic Gradient $g(\hparamsolfa{i,r+1,k-1};\mc{D}_k^i) = \frac{1}{B}\sum_{S \in \mc{D}_k^i} \grad F(\hparamsolfa{i,r+1,k-1};S)$\\
    Set $\hparamsolfa{i,r+1,k} \gets \hparamsolfa{i,r+1,k-1} - \eta g(\hparamsolfa{i,r+1,k-1};\mc{D}_k^i)$
    \ENDFOR
    \STATE Client $i$ sends $\hparamsolfa{i,r+1,K}$ back to the server.
    \ENDFOR
    \STATE Server updates the central model using $\hparamsolfa{0,r+1} = \sum_{j = 1}^D \frac{n_j}{\sum_{j = 1}^D n_j}\hparamsolfa{i,r+1,K}$.
    \ENDFOR
    \STATE {\bf return} {$\hparamsolfa{0,R}$}
\end{algorithmic}
\end{algorithm}

\begin{algorithm}[tb]
   \caption{\ftfa}
   \label{alg:ftfa-detail}
\begin{algorithmic}[1]
    \REQUIRE $\persiter$: Personalization iterations
    \STATE Server sends $\hparamsolfa{0} = \hparamsolfa{0,R}$ (using \Cref{alg:fedavg} with stepsize $\eta$) to all clients
    \FOR{$i\gets1$ to $m$}
    \STATE Run $P$ steps of SGM on $\riskhat_i(\cdot)$ using $\hparamsolfa{0}$ as initial point with learning rate $\alpha$ and output $\hparamsolfa{i,P}$ 
    \ENDFOR
    \STATE {\bf return} $\hparamsolfa{i,P}$ 
\end{algorithmic}
\end{algorithm}

\begin{algorithm}[tb]
   \caption{\rtfa}
   \label{alg:rtfa}
\begin{algorithmic}[1]
    \REQUIRE $\persiter$: Personalization iterations
    \STATE Server sends $\hparamsolfa{0} = \hparamsolfa{0,R}$ (using \Cref{alg:fedavg} with stepsize $\eta$) to all clients
    \FOR{$i\gets1$ to $m$}
    \STATE Run $P$ steps of SGM on $\riskhat_i(\param) + \frac{\lambda}{2}\ltwo{\param - \hparamsolfa{0}}^2$ with learning rate $\alpha$ and output $\hparamsolfa{i,P}$
    \ENDFOR
    \STATE {\bf return} $\hparamsolfa{i,P}$ 
\end{algorithmic}
\end{algorithm}

\begin{algorithm}[tb]
   \caption{\mamlfl-HF \cite{FallahMoOz20}}
   \label{alg:maml-hf-app}
\begin{algorithmic}[1]
    \REQUIRE $\rounds$: Communication Rounds, $\sampuser$: Number of users sampled each round, $\localupdates$: Number of local update steps, $\hparamsolmaml{0,0}$: Initial iterate for global model
    \FOR{$r\gets0$ to $\rounds - 1$}
    \STATE Server samples a subset of clients $\mc{S}_r$ uniformly at random such that $|\mc{S}_r| = \sampuser$
    \STATE Server sends $\hparamsolmaml{0,r}$ to all clients in $\mc{S}_r$
    \FOR{$i \in \mc{S}_r$}
    \STATE Set $\hparamsolmaml{i,r+1,0} \gets \hparamsolmaml{0,r}$
    \FOR{$k \gets 1$ to $\localupdates$}
    \STATE Sample a batch $\mc{D}_k^i$ of size $\batchsize$ from user $i$'s data $\mc{D}_i$
    \STATE Compute Stochastic Gradient $g(\hparamsolmaml{i,r+1,k-1};\mc{D}_k^i) = \frac{1}{B}\sum_{S \in \mc{D}_k^i} \grad F(\hparamsolmaml{i,r+1,k-1};S)$\\
    Set $\underline{\hparamsolmaml{i,r+1,k}} \gets \hparamsolmaml{i,r+1,k-1} - \alpha g(\hparamsolmaml{i,r+1,k-1};\mc{D}_k^i)$
    \ENDFOR
    \STATE Client $i$ sends $\hparamsolmaml{i,r+1,K}$ back to the server.
    \ENDFOR
    \STATE Server updates the central model using $\hparamsolmaml{0,r+1} = \sum_{j = 1}^D \frac{n_j}{\sum_{j = 1}^D n_j}\hparamsolmaml{i,r+1,K}$.
    \ENDFOR
    \STATE Server sends $\hparamsolmaml{0,R}$ to all clients
    \FOR{$i \gets 1$ to $m$}
    \STATE Run $P$ steps of SGM on $\riskhat_i(\cdot)$ using $\hparamsolmaml{0}$ as initial point with learning rate $\alpha$ and output $\hparamsolmaml{i,P}$ 
    \ENDFOR
    \STATE {\bf return} {$\hparamsolmaml{0,R}$}
\end{algorithmic}
\end{algorithm}

\begin{algorithm}[tb]
   \caption{pFedMe \cite{DinhTrNg20}}
   \label{alg:pfedme}
\begin{algorithmic}[1]
    \REQUIRE $\rounds$: Communication Rounds, $\sampuser$: Number of users sampled each round, $\localupdates$: Number of local update steps, $\hparamsolprox{0,0}$: Initial iterate for global model
    \FOR{$r\gets0$ to $\rounds - 1$}
    \STATE Server samples a subset of clients $\mc{S}_r$ uniformly at random such that $|\mc{S}_r| = \sampuser$
    \STATE Server sends $\hparamsolprox{0,r}$ to all clients in $\mc{S}_r$
    \FOR{$i \in \mc{S}_r$}
    \STATE Set $\hparamsolprox{i,r+1,0} \gets \hparamsolprox{0,r}$
    \FOR{$k \gets 1$ to $\localupdates$}
    \STATE Sample a batch $\mc{D}_k^i$ of size $\batchsize$ from user $i$'s data $\mc{D}_i$
    \STATE Compute $\theta_i(\hparamsolprox{i,r+1,k-1}) = \argmin_\param \frac{1}{B}\sum_{S \in \mc{D}_k^i} \grad F(\param;S) + \frac{\lambda}{2}\ltwo{\param - \hparamsolprox{i,r+1,k-1}}^2 $
    \STATE Set $\hparamsolprox{i,r+1,k} \gets \hparamsolprox{i,r+1,k-1} - \eta \lambda(\hparamsolprox{i,r+1,k-1} - \theta_i(\hparamsolprox{i,r+1,k-1}))$
    \ENDFOR
    \STATE Client $i$ sends $\hparamsolprox{i,r+1,K}$ back to the server.
    \ENDFOR
    \STATE Server updates the central model using $\hparamsolprox{0,r+1} = (1 - \beta)\hparamsolprox{0,r} + \beta\sum_{j = 1}^D \frac{n_j}{\sum_{j = 1}^D n_j}\hparamsolprox{i,r+1,K}$.
    \ENDFOR
    \STATE {\bf return} {$\hparamsolprox{0,R}$}
\end{algorithmic}
\end{algorithm}

\newpage
 \newpage
 \section{Experimental Details} \label{app:expts}
\subsection{Dataset Details}

In this section, we provide detailed descriptions on datasets and how they were divided into users. We perform experiments on federated versions of the Shakespeare \cite{McMahanMoRaHaAr17}, CIFAR-100 \cite{KrizhevskyHi09}, EMNIST \cite{CohenAsTaSc17}, and Stack Overflow \cite{TensorflowfederatedSO19} datasets. We download all datasets using FedML APIs \cite{HeLiSoWaWaVeSi20} which in turn get their datasets from \cite{TensorflowfederatedSO19}. For each dataset, for each client, we divide their data into train, validation and test sets with roughly a $80\%, 10\%, 10\%$ split.  The information regarding the number of users in each dataset, dimension of the model used, and the division of all samples into train, validation and test sets is given in \Cref{tbl:data}.

\begin{table}[H]
\begin{center}
\begin{tabular}{ |c|c|c|c|c|c|c| } 
\hline
 Dataset & Users & Dimension & Train & Validation  & Test & Total Samples  \\ 
 \hline
 CIFAR 100 & 600 & 51200 & 48000 & 6000 & 6000 & 60000\\ 
 Shakespeare & 669 & 23040 & 33244 & 4494 & 5288 & 43026 \\
 EMNIST & 3400 & 31744 & 595523 & 76062 & 77483 & 749068 \\
 Stackoverflow-nwp & 300 & 960384 & 155702 & 19341 & 19736 & 194779\\ 
 \hline
\end{tabular}
\end{center}
\caption{Dataset Information}
\label{tbl:data}
\end{table}

\paragraph{Shakespeare}
Shakespeare is a language modeling dataset built using the works of William Shakespeare and the clients correspond to a speaking role with at least two lines. The task here is next character prediction. The way lines are split into sequences of length 80, and the description of the vocabulary size is same as \cite{ReddiChZaGaRuKoKuMc21} (Appendix C.3). Additionally, we filtered out clients with less than 3 sequences of data, so as to have a train-validation-test split for all the clients. This brought the number of clients down to 669. More information on sample sizes can be found in \Cref{tbl:data}. The models trained on this dataset are trained on two Tesla P100-PCIE-12GB GPUs.

\paragraph{CIFAR-100}
CIFAR-100 is an image classification dataset with 100 classes and each image consisting of 3 channels of 32x32 pixels. We use the clients created in the Tensorflow Federated framework \cite{TensorflowfederatedSO19} ---  client division is described in Appendix F of \cite{ReddiChZaGaRuKoKuMc21}. Instead of using 500 clients for training and 100 clients for testing as in \cite{ReddiChZaGaRuKoKuMc21}, we divided each clients' dataset into train, validation and test sets and use all the clients' corresponding data for training, validation and testing respectively. The models trained on this dataset are trained on two Titan Xp GPUs. 

\paragraph{EMNIST}
EMNIST contains images of upper and lower characters of the English language along with images of digits, with total 62 classes. The federated version of EMNIST partitions images by their author providing the dataset natural heterogenity according to the writing style of each person. The task is to classify images into the 62 classes. As in other datasets, we divide each clients' data into train, validation and test sets randomly. The models trained on this dataset are trained on two Tesla P100-PCIE-12GB GPUs. 

\paragraph{Stack Overflow}

Stack Overflow is a language model consisting of questions and answers from the StackOverflow website. The task we focus on is next word prediction. As described in Appendix C.4 of \cite{ReddiChZaGaRuKoKuMc21}, we also restrict to the 10000 most frequently used words, and perform padding/truncation to ensure each sentence to have 20 words. Additionally, due to scalability issues, we use only a sample of 300 clients from the original dataset from \cite{TensorflowfederatedSO19} and for each client, we divide their data into train, validation and test sets randomly. The models trained on this dataset are trained on two Titan Xp GPUs.

\subsection{Hyperparameter Tuning Details}
\subsubsection{Pretrained Model}
We now describe how we obtain our pretrained models. First, we train and hyperparameter tune a neural net classifier on the train and validation sets in a non-federated manner. The details of the hyperparameter sweep are as follows:

\paragraph{Shakespeare} For this dataset we use the same neural network architecture as used for Shakespeare in \cite{McMahanMoRaHaAr17}. It has an embedding layer, an LSTM layer and a fully connected layer. We use the \href{https://pytorch.org/docs/stable/_modules/torch/optim/lr_scheduler.html#StepLR}{StepLR} learning rate scheduler of PyTorch , and we hyperparameter tune over the step size [0.0001, 0.001, 0.01, 0.1, 1] and the learning rate decay gamma [0.1, 0.3, 0.5] for 25 epochs with a batch size of 128.

\paragraph{CIFAR-100} For this dataset we use the Res-Net18 architecture \cite{HeZhReSu16}. We perform the standard preprocessing for CIFAR datasets for train, validation and test data. For training images, we perform a random crop to shape $(32,32,3)$ with padding size $4$, followed by a horizontal random flip. For all training, validationn and testing images, we normalize each image according to their mean and standard deviation. We use the hyperparameters specified by \cite{weiaicunzai20} to train our nets for 200 epochs.

\paragraph{EMNIST} For this dataset, the architecture we use is similar to that found in \cite{ReddiChZaGaRuKoKuMc21}; the exact architecture can be found in our code. We use the \href{https://pytorch.org/docs/stable/_modules/torch/optim/lr_scheduler.html#StepLR}{StepLR} learning rate scheduler of PyTorch, and we hyperparameter tune over the step size [0.0001, 0.001, 0.01, 0.1, 1] and the learning rate decay gamma [0.1, 0.3, 0.5] for 25 epochs with a batch size of 128.

\paragraph{Stackoverflow} For this dataset we use the same neural network architecture as used for Stack Overflow next word prediction task in \cite{ReddiChZaGaRuKoKuMc21}. We use the \href{https://pytorch.org/docs/stable/_modules/torch/optim/lr_scheduler.html#StepLR}{StepLR} learning rate scheduler of PyTorch, and we hyperparameter tune over the step size [0.0001, 0.001, 0.01, 0.1, 1] and the learning rate decay gamma [0.1, 0.3, 0.5] for 25 epochs with a batch size of 128.

\subsubsection{Federated Last Layer Training}
After selecting the best hyperparameters for each net, we pass our data through said net and store their representations (i.e., output from penultimate layer). These representations are the data we operate on in our federated experiments. 

Using these representations, we do multi-class logistic regression with each of the federated learning algorithms we test; we adapt and extend this \href{https://github.com/CharlieDinh/pFedMe}{code base} \cite{DinhTrNg20} to do our experiments. 
For all of our algorithms, the number of global iterations $R$ is set to 400, and the number of local iterations $K$ is set to 20. 
The number of users sampled at global iteration $r$, $D$, is set to 20. The batch size per local iteration, $B$, is 32. The random seed is set to 1. For algorithms \ftfa, \rtfa, \mamlfl-FO, and \mamlfl-HF, we set the number of personalization epochs $P$ to be 10. We fix some hyperparameters due to computational resource restrictions and to avoid conflating variables; we choose to fix these ones out of precedence, see experimental details of \cite{ReddiChZaGaRuKoKuMc21}. We now describe what parameters we hyperparameter tune over for each algorithm.

\paragraph{Naive Local Training}
This algorithm is described in \Cref{alg:local}. We hyperparameter tune over the step size $\alpha$ [0.0001, 0.001, 0.01, 0.1, 1, 10]. 

\paragraph{FedAvg}
This algorithm is described in \Cref{alg:fedavg}. We hyperparameter tune over the step size $\eta$ [0.0001, 0.001, 0.01, 0.1, 1, 10].

\paragraph{\ftfa} This algorithm is described in \Cref{alg:ftfa}. We hyperparameter tune over the step size of FedAvg $\eta$ [0.0001, 0.001, 0.01, 0.1, 1], and the step size of the personalization SGM steps $\alpha$ [0.0001, 0.001, 0.01, 0.1, 1].

\paragraph{\rtfa} This algorithm is described in \Cref{alg:rtfa}. We hyperparameter tune over the step size of FedAvg $\eta$ [0.0001, 0.001, 0.01, 0.1, 1], the step size of the personalization SGM steps $\alpha$ [0.0001, 0.001, 0.01, 0.1, 1], and the ridge parameter $\lambda$ [0.001, 0.01, 0.1, 1, 10].

\paragraph{\mamlfl-HF} This is the hessian free version of the algorithm, i.e., the hessian term is approximated via finite differences (details can be found in \cite{FallahMoOz20}). This algorithm is described in \Cref{alg:maml-hf-app}. We hyperparameter tune over the step size $\eta$ [0.0001, 0.001, 0.01, 0.1, 1], the step size of the personalization SGM steps $\alpha$ [0.0001, 0.001, 0.01, 0.1, 1], and the hessian finite-difference-approximation parameter $\delta$ [0.001, 0.00001]. We used only two different values of  $\delta$ because the results of preliminary experiments suggested little change in accuracy with changing $\delta$.

\paragraph{\mamlfl-FO} This is the first order version of the algorithm, i.e., the hessian term is set to 0 (details can be found in \cite{FallahMoOz20}). This algorithm is described in \Cref{alg:maml-hf-app}. We hyperparameter tune over the step size $\eta$ [0.0001, 0.001, 0.01, 0.1, 1], the step size of the personalization SGM steps $\alpha$ [0.0001, 0.001, 0.01, 0.1, 1].

\paragraph{\pfedme} This algorithm is described in \Cref{alg:pfedme}. We hyperparameter tune over the step size $\eta$ [0.0005, 0.005, 0.05], and the weight $\beta$ [1, 2]. The proximal optimization step size, hyperparameter $K$, and prox-regularizer $\lambda$ associated with approximately solving the prox problem is set to 0.05, 5, and 15 respectively.
We chose these hyperparameters based on the suggestions from \cite{DinhTrNg20}.
We were unable to hyperparameter tune \pfedme\ as much as, for example, \rtfa\ because each run of \pfedme\ takes significantly longer to run. Additionally, for this same reason, we were unable to run \pfedme on the Stack Overflow dataset. While we do not have wall clock comparisons (due to multiple jobs running on the same gpu), we have observed that pFedMe, with the hyperparameters we specified, takes approximately 20x the compute time to complete relative to \ftfa, \rtfa, and \mamlfl-FO.

The ideal hyperparameters for each dataset can be found in the tables below:
\begin{table}[H]
\begin{center}
\begin{tabular}{ |c|c|c|c|c|c|c| } 
\hline
 Algorithm & $\eta$ & $\alpha$ & $\lambda$ & $\delta$ & $\beta$ \\
 \hline
 Naive Local & - & 0.1 & - & -  & - \\ 
 FedAvg & 0.1 & - & - & -  & -\\
 \ftfa & 1 & 0.1 & - & -   & -\\
 \rtfa & 1 & 0.1 & 0.1 & -  & - \\ 
 \mamlfl-HF & 1 & 0.1 & - & 0.00001 & -  \\ 
 \mamlfl-FO & 1 & 0.1 & - & -  & - \\ 
 \pfedme & 0.05 & - & - & -  & 2  \\ 
 \hline
\end{tabular}
\end{center}
\caption{Shakespeare Best Hyperparameters}
\label{tbl:shakespeare-hyp}
\end{table}

\begin{table}[H]
\begin{center}
\begin{tabular}{ |c|c|c|c|c|c|c| } 
\hline
 Algorithm & $\eta$ & $\alpha$ & $\lambda$ & $\delta$ & $\beta$ \\
 \hline
 Naive Local & - & 0.1 & - & -  & - \\ 
 FedAvg & 0.01 & - & - & -  & -\\
 \ftfa & 0.001 & 0.1 & - & -   & -\\
 \rtfa & 0.001 & 0.1 & 0.1 & -  & - \\ 
 \mamlfl-HF & 0.001 & 0.01 & - & 0.001 & -  \\ 
 \mamlfl-FO & 0.001 & 0.01 & - & -  & - \\ 
 \pfedme & 0.05 & - & - & -  & 1  \\ 
 \hline
\end{tabular}
\end{center}
\caption{CIFAR-100 Best Hyperparameters}
\label{tbl:cifar-hyp}
\end{table}

\begin{table}[H]
\begin{center}
\begin{tabular}{ |c|c|c|c|c|c|c| } 
\hline
 Algorithm & $\eta$ & $\alpha$ & $\lambda$ & $\delta$ & $\beta$ \\
 \hline
 Naive Local & - & 0.001 & - & -  & - \\ 
 FedAvg & 0.01 & - & - & -  & -\\
 \ftfa & 0.1 & 0.01 & - & -   & -\\
 \rtfa & 0.1 & 0.01 & 0.1 & -  & - \\ 
 \mamlfl-HF & 0.1 & 0.01 & - & 0.00001 & -  \\ 
 \mamlfl-FO & 0.1 & 0.01 & - & -  & - \\ 
 \pfedme & 0.05 & - & - & -  & 2 \\ 
 \hline
\end{tabular}
\end{center}
\caption{EMNIST Best Hyperparameters}
\label{tbl:emnist-hyp}
\end{table}

\begin{table}[H]
\begin{center}
\begin{tabular}{ |c|c|c|c|c|c|c| } 
\hline
 Algorithm & $\eta$ & $\alpha$ & $\lambda$ & $\delta$ & $\beta$ \\
 \hline
 Naive Local & - & 0.1 & - & -  & - \\ 
 FedAvg & 1 & - & - & -  & -\\
 \ftfa & 1 & 0.1 & - & -   & -\\
 \rtfa & 1 & 0.1 & 0.001 & -  & - \\ 
 \mamlfl-HF & 1 & 0.1 & - & 0.00001 & -  \\ 
 \mamlfl-FO & 1 & 0.1 & - & -  & - \\ 
 \hline
\end{tabular}
\end{center}
\caption{Stack Overflow Best Hyperparameters}
\label{tbl:stackoverflow-hyp}
\end{table}

\subsection{Additional Results}
In this section, we add additional plots from the experiments we conducted, which were omitted from the main paper due to length constraints. In essence, these plots only strengthen the claims made in the experiments section in the main body of the paper.

\begin{figure}[H]
\begin{minipage}[c]{0.5\linewidth}
\begin{overpic}[width=\linewidth]{
      		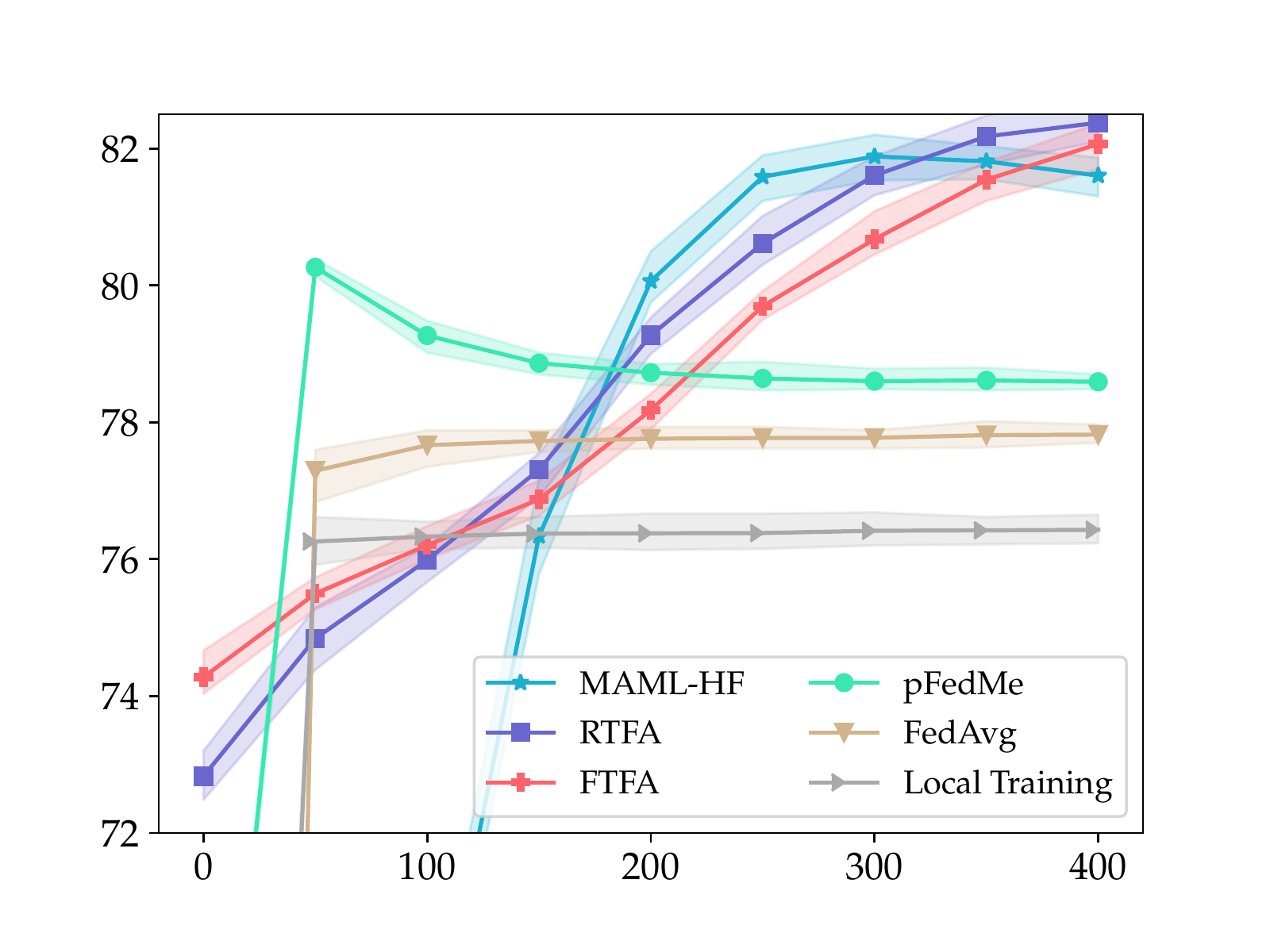}
      		\put(-4,28){
          \rotatebox{90}{{\small Test Accuracy}}}
        \put(30,0){{\small Communication Rounds}}
\end{overpic}\caption{CIFAR-100. Best-average-worst intervals created from different random seeds.}
\label{fig:cifar2}
\end{minipage}
\hfill
\begin{minipage}[c]{0.5\linewidth}
\begin{overpic}[width=\linewidth]{
      		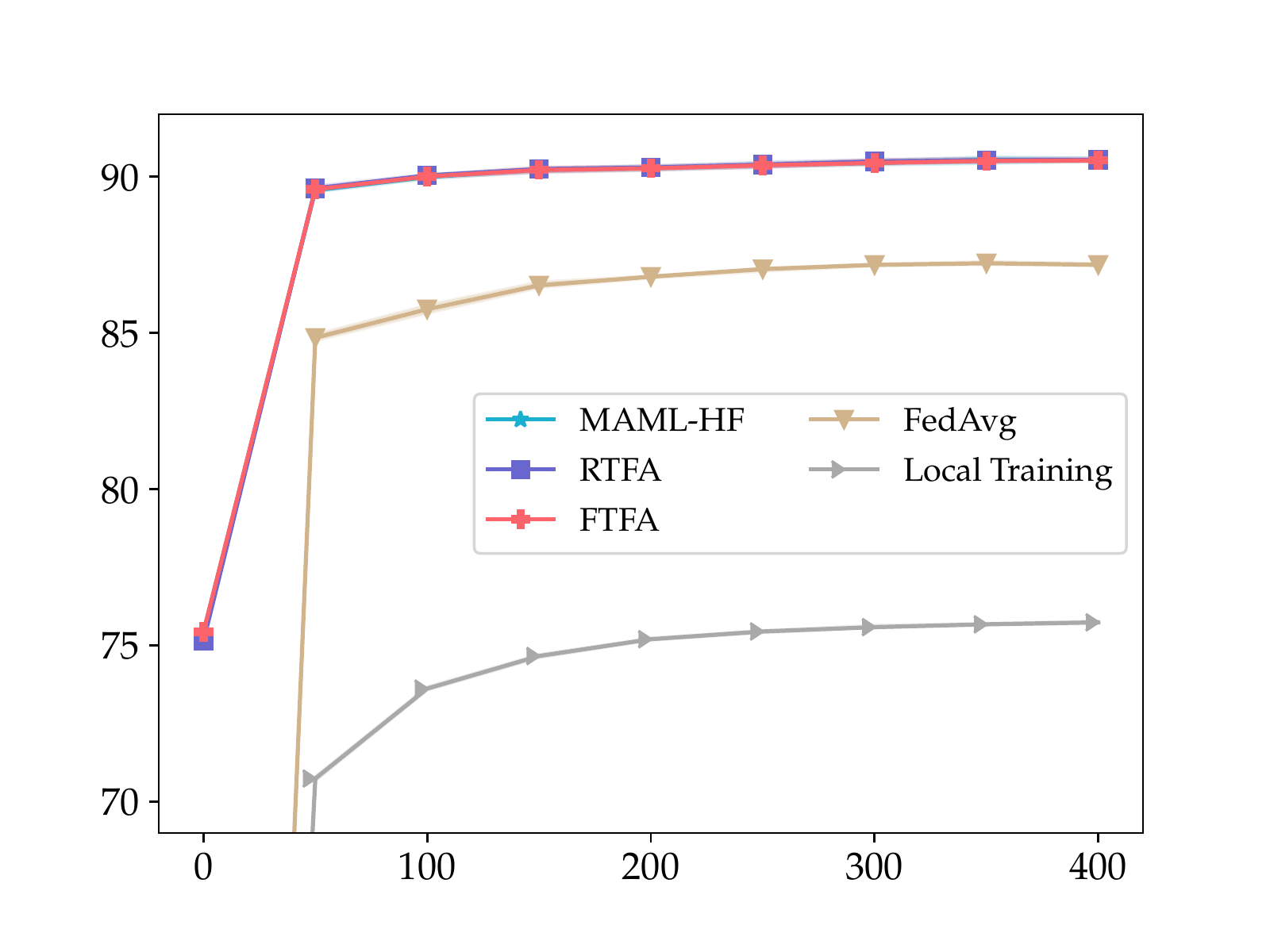}
      		\put(-4,28){
          \rotatebox{90}{{\small Test Accuracy}}}
        \put(30,0){{\small Communication Rounds}}
\end{overpic}\caption{EMNIST. Best-average-worst intervals created from different train-val splits.}
\label{fig:fedemnist2}
\end{minipage}%
\end{figure}

\begin{figure}[H]
\begin{center}
\begin{overpic}[width=0.5\linewidth]{
      		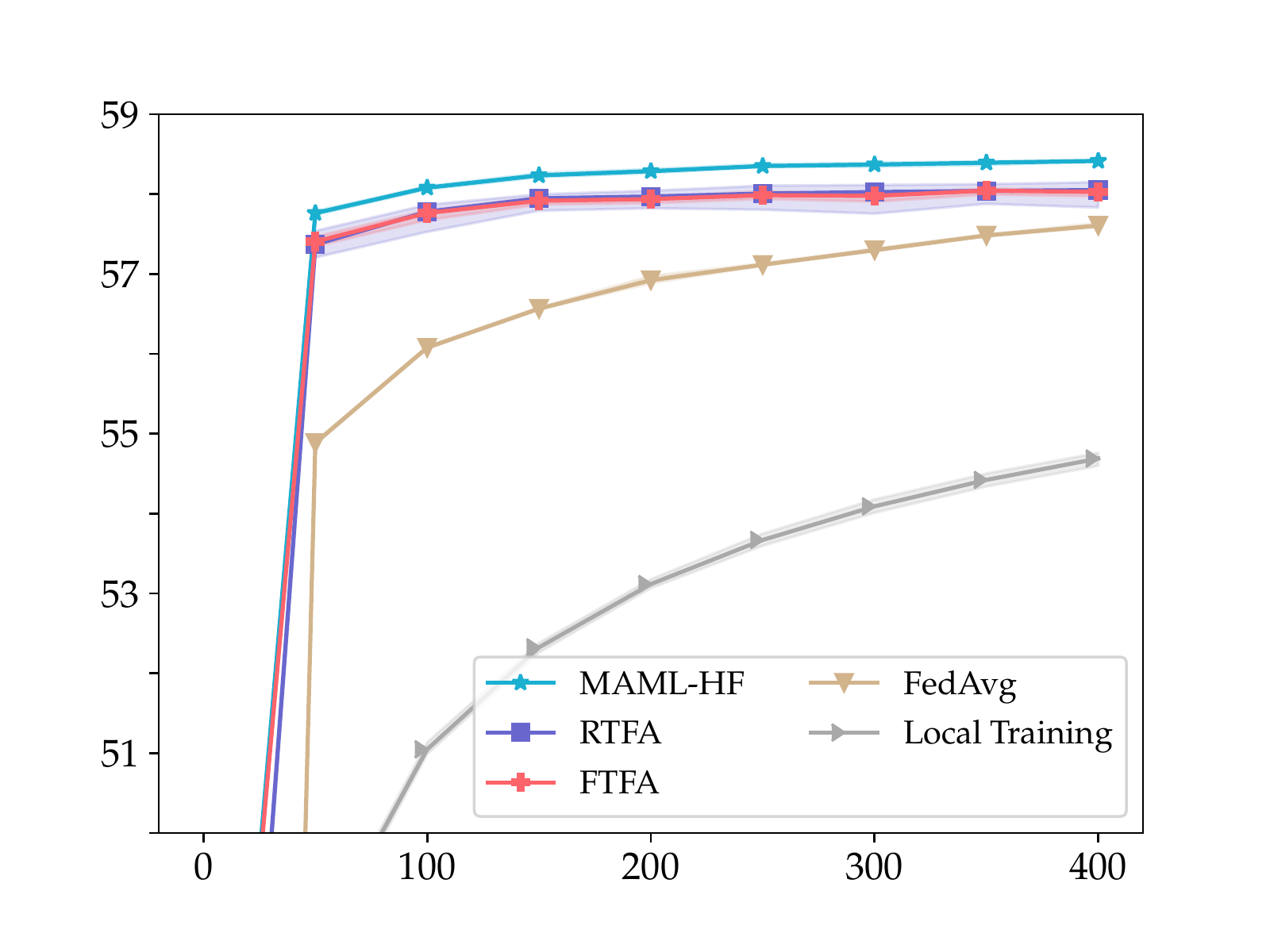}
      		\put(-4,28){
          \rotatebox{90}{{\small Test Accuracy}}}
        \put(30,0){{\small Communication Rounds}}
\end{overpic}\caption{Shakespeare. Best-average-worst intervals created from different random train-val splits.}
\label{fig:shakespeare2}
\end{center}
\end{figure}

\begin{figure}[H]
\begin{minipage}[c]{0.5\linewidth}
\begin{overpic}[width=\linewidth]{
      		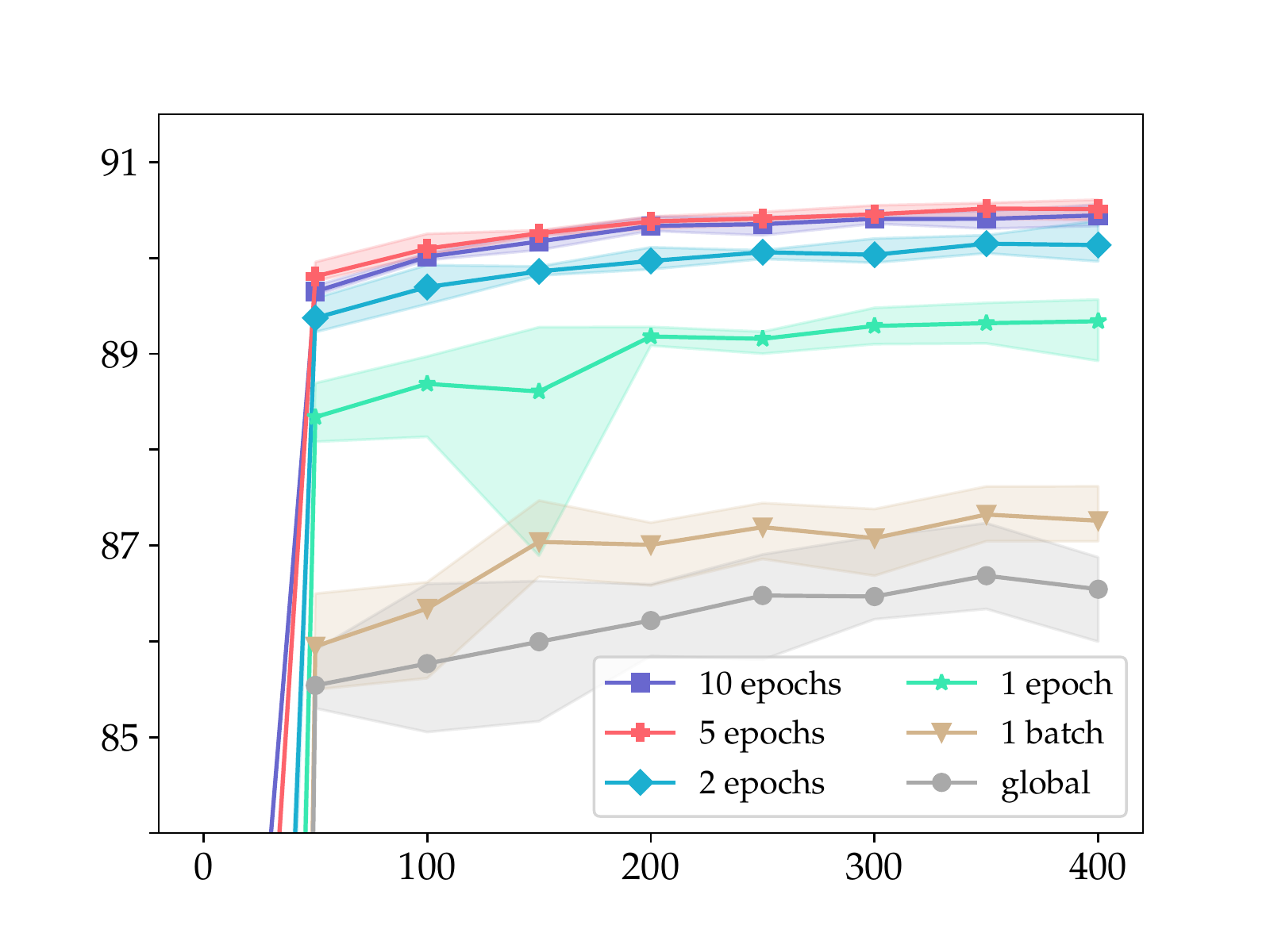}
      		\put(-4,28){
          \rotatebox{90}{{\small Accuracy}}}
        \put(30,0){{\small Communication Rounds}}
\end{overpic}\caption{EMNIST. Gains of personalization for FO-\mamlfl}
\label{fig:pers-femnist2}
\end{minipage}%
\hfill
\begin{minipage}[c]{0.5\linewidth}
\begin{overpic}[width=\linewidth]{
      		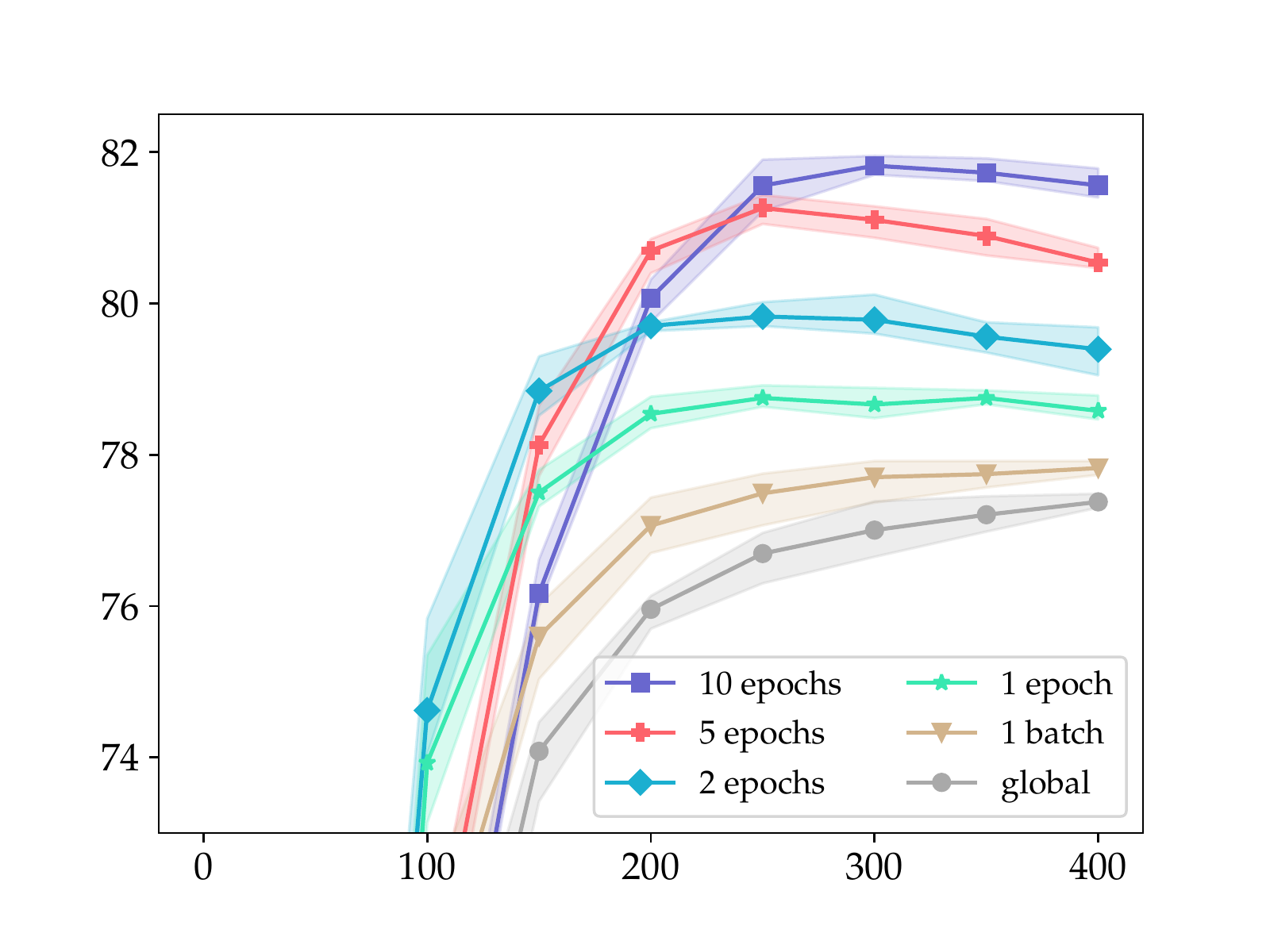}
      		\put(-4,28){
          \rotatebox{90}{{\small Accuracy}}}
        \put(30,0){{\small Communication Rounds}}
\end{overpic}\caption{CIFAR. Gains of personalization for FO-\mamlfl}
\label{fig:pers-cifar2}
\end{minipage}
\end{figure}

\begin{figure}[H]
\begin{minipage}[c]{0.5\linewidth}
\begin{overpic}[width=\linewidth]{
      		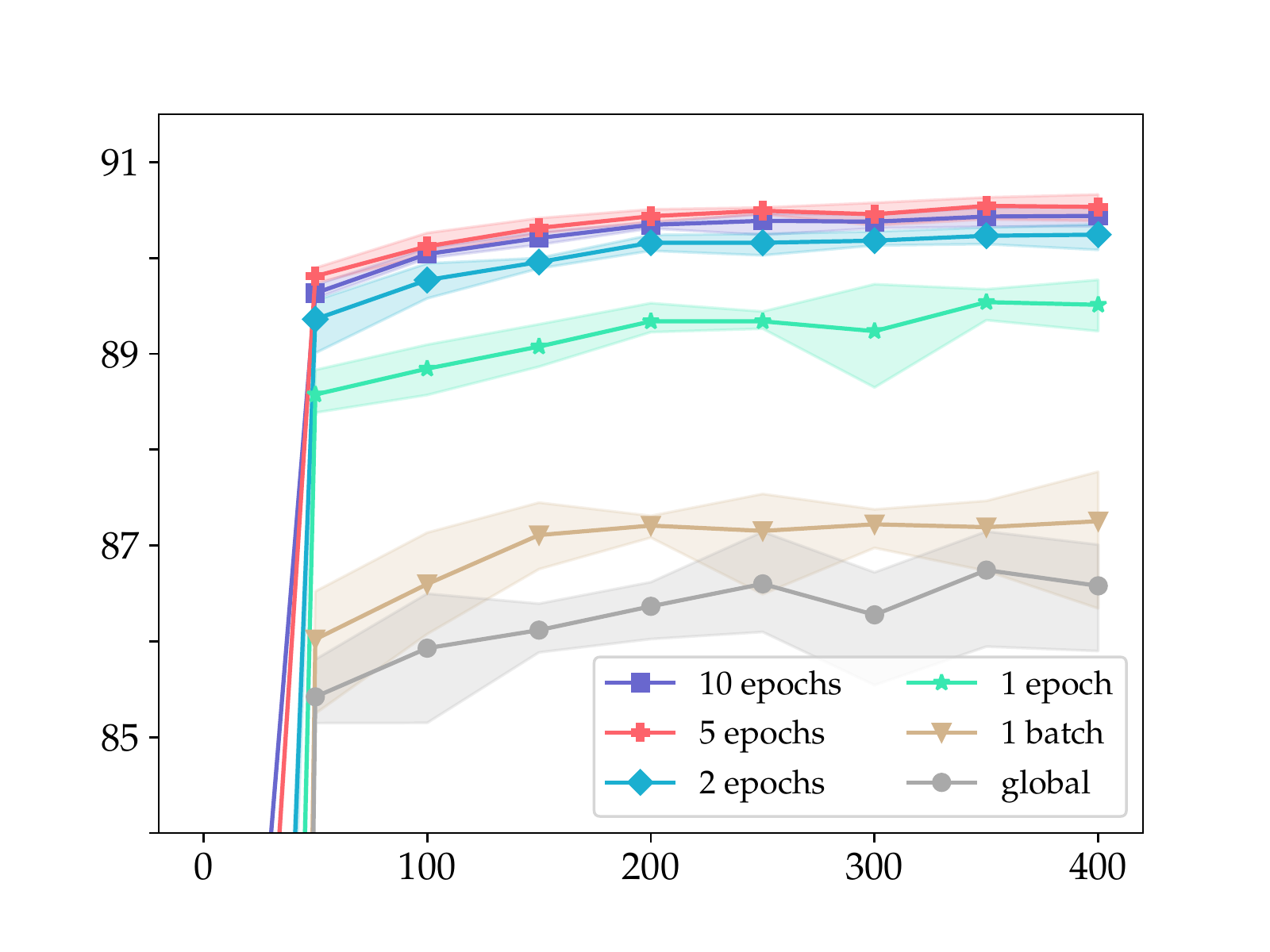}
      		\put(-4,28){
          \rotatebox{90}{{\small Accuracy}}}
        \put(30,0){{\small Communication Rounds}}
\end{overpic}\caption{EMNIST. Gains of personalization for HF-\mamlfl}
\label{fig:pers-femnist3}
\end{minipage}%
\hfill
\begin{minipage}[c]{0.5\linewidth}
\begin{overpic}[width=\linewidth]{
      		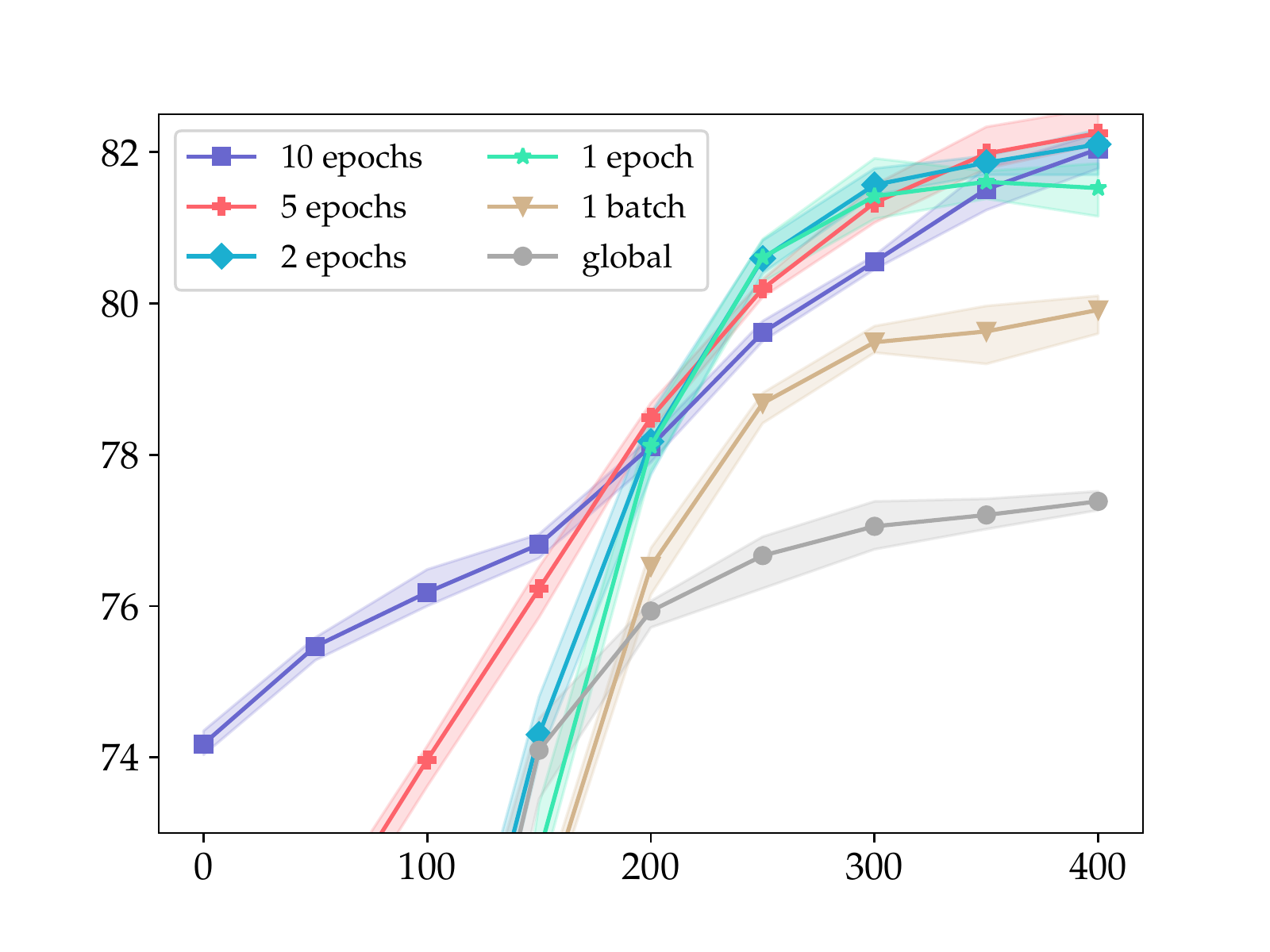}
      		\put(-4,28){
          \rotatebox{90}{{\small Accuracy}}}
        \put(30,0){{\small Communication Rounds}}
\end{overpic}\caption{CIFAR. Gains of personalization for \ftfa}
\label{fig:pers-cifar3}
\end{minipage}
\end{figure}

\end{document}